\def\eqref#1{equation~\ref{#1}}
\def\1{\bm{1}}
\DeclareMathAlphabet{\mathsfit}{\encodingdefault}{\sfdefault}{m}{sl}
\SetMathAlphabet{\mathsfit}{bold}{\encodingdefault}{\sfdefault}{bx}{n}
\newtheorem{theorem}{Theorem}
\newtheorem{lemma}{Lemma}
\def\ddefloop#1{\ifx\ddefloop#1\else\ddef{#1}\expandafter\ddefloop\fi}
\def\ddef#1{\expandafter\def\csname bb#1\endcsname{\ensuremath{\mathbb{#1}}}}
\def\ddef#1{\expandafter\def\csname b#1\endcsname{\ensuremath{\boldsymbol{#1}}}}
\def\ddef#1{\expandafter\def\csname cal#1\endcsname{\ensuremath{\mathcal{#1}}}}
\def\ddef#1{\expandafter\def\csname v#1\endcsname{\ensuremath{\mathbf{#1}}}}
\newcommand{\diff}[1]{\mathrm{d} #1}
\newcommand{\xbf}[1]{\mathbf{x} #1}
\newcommand{\ybf}[1]{\mathbf{y} #1}
\DeclareMathOperator{\diag}{diag}
\DeclareMathOperator{\blkdiag}{blkdiag}
\DeclareMathOperator{\MaxMin}{MaxMin}
\DeclareMathOperator{\ReLU}{ReLU}
\title{
Novel Quadratic Constraints for Extending LipSDP beyond Slope-Restricted Activations
}
\author{%
  Patricia Pauli $^{*1} $ $\quad$
  Aaron Havens $^2$ $\quad$
  Alexandre Araujo  $^3$ $\quad$ 
  Siddharth Garg $^3$ \\
  \textbf{Farshad Khorrami} $^3$ $\quad$ 
  \textbf{Frank Allg\"ower}  $^1$ $\quad$ 
  \textbf{Bin Hu} $^2$ $\quad$ 
  \vspace{0.3cm} \\
  $^1$ Institute for Systems Theory and Automatic Control, University of Stuttgart \\
  $^2$ ECE \& CSL, University of Illinois Urbana-Champaign \\
  $^3$ ECE, New
York University \\
  $^*$ Corresponding author. E-Mail: patricia.pauli@ist.uni-stuttgart.de
}
\begin{document}

\maketitle

\begin{abstract}
Recently, semidefinite programming (SDP) techniques have shown great promise in providing accurate Lipschitz bounds for neural networks. Specifically, the LipSDP approach (Fazlyab et al., 2019) has received much attention and provides the least conservative Lipschitz upper bounds that can be computed with polynomial time guarantees. However, one main restriction of LipSDP is that its formulation requires the activation functions to be slope-restricted on $[0,1]$, preventing its further use for more general activation functions such as GroupSort, MaxMin, and Householder. One can rewrite MaxMin activations for example as residual ReLU networks. However, a direct application of LipSDP to the resultant residual ReLU networks is conservative and even fails in recovering the well-known fact that the MaxMin activation is 1-Lipschitz. Our paper bridges this gap and extends LipSDP  beyond slope-restricted activation functions. To this end, we provide novel quadratic constraints for GroupSort, MaxMin, and Householder activations via leveraging their underlying properties such as sum preservation. Our proposed analysis is general and provides a unified approach for estimating $\ell_2$ and $\ell_\infty$ Lipschitz bounds for a rich class of neural network architectures, including non-residual and residual neural networks and implicit models,  with GroupSort, MaxMin, and Householder activations. Finally, we illustrate the utility of our approach with a variety of experiments and show that our proposed SDPs generate less conservative Lipschitz bounds in comparison to existing approaches.
\end{abstract}

\section{Introduction}
For neural network models, the Lipschitz constant is a key sensitivity metric that gives important implications to properties such as robustness, fairness, and generalization \citep{hein2017formal,tsuzuku2018lipschitz,salman2019provably,leino2021globally,huang2021training,lee2020lipschitz,bartlett2017spectrally,miyato2018spectral,farnia2018generalizable,li2019preventing,trockman2021orthogonalizing,skew2021sahil,xulot2022,yu2022constructing,meunier2022dynamical,prach2022almost}. Since the exact calculation of Lipschitz constants for neural networks is NP-hard in general \citep{virmaux2018lipschitz,jordan2020exactly}, relaxation techniques are typically used to compute related upper bounds \citep{combettes2019lipschitz,chen2020semialgebraic,latorre2020lipschitz,fazlyab2019efficient,zhang2019recurjac}. A naive Lipschitz bound for a neural network is given by the product of the spectral norm of every layer's weight \citep{szegedy2013intriguing}, yielding quite conservative bounds for multi-layer networks. To provide tighter bounds, \cite{fazlyab2019efficient} introduced LipSDP, a semidfeinite programming (SDP) based Lipschitz constant estimation technique, that provides state-of-the-art $\ell_2$ Lipschitz bounds in polynomial time. Since its introduction, LipSDP has had a broad impact in many fields, e.g., Lipschitz constant estimation \citep{hashemi2021certifying,shi2022efficiently} and safe neural network controller design \citep{brunke2022safe,yin2021stability}. It inspired many follow-up works, extending the framework to more general network architectures \citep{pauli2022lipschitz} and $\ell_\infty$ bounds \citep{wang2022quantitative}. Beyond Lipschitz constant estimation, many works developed training methods for Lipschitz neural networks based on LipSDP \citep{pauli2021training,pauli2023lipschitz,araujoICLR2023,revay2020lipschitz,havens2023exploiting,wang2023direct}.

One important assumption of the LipSDP formulation is that it requires the activation functions to be slope-restricted. 
Recently, new activation functions have been explored that do not have this property. 
Especially, the MaxMin activation function~\citep{anil2019sorting} and its generalizations GroupSort and the Householder activation function~\citep{singla2022improved} have been popularized for the design of Lipschitz neural networks and have shown great promise as a gradient norm preserving alternative to ReLU, tanh, or sigmoid \citep{leino2021globally,huang2021training,hu2023scaling,cohen2019universal}.  MaxMin can equivalently be rewritten as a residual ReLU network, enabling a LipSDP-based analysis based on the slope-restriction property of ReLU. However, as we will discuss in our paper, this description is too conservative to even recover the well-known fact that MaxMin is 1-Lipschitz. This creates an incentive for us to develop tighter quadratic constraints to obtain better Lipschitz bounds beyond slope-restricted activations. Consequently, in this paper, we explore the characteristic properties of MaxMin, GroupSort, and Householder activations to develop \emph{novel quadratic constraints} that are satisfied by these activations.

Based on our new quadratic constraints, our work is the first one to extend LipSDP beyond slope-restricted activations, providing accurate Lipschitz bounds for neural networks with MaxMin, GroupSort, or Householder activations. More specifically, we use that GroupSort is sum-preserving and our quadratic constraints are also the first ones for sum-preserving elements. Along with our extension of LipSDP to new classes of activation functions, we present a \emph{unified approach} for SDP-based Lipschitz constant estimation that covers a wide range of network architectures and both $\ell_2$ and $\ell_\infty$ bounds. Our work complements the existing SDP-based methods in \citep{fazlyab2019efficient,wang2022quantitative,araujoICLR2023,pauli2022lipschitz,gramlich2023convolutional,revay2020lipschitz,havens2023exploiting,wang2023direct} and further shows the flexibility and versatility of SDP-based approaches, involving the Lipschitz analysis of residual and non-residual networks and implicit learning models with slope-restricted or gradient norm preserving~activations.


\textbf{Notation.} We denote the set of real $n$-dimensional vectors (with non-negative entries) by $\bbR^n$ ($\bbR_+^n$), the set of $m\times n$-dimensional matrices by $\bbR^{m\times n}$, and the set of $n$-dimensional diagonal matrices with non-negative entries by $\bbD_+^n$. 
We denote the $n\times n$ identity matrix by $I_n$ and the $n$-dimensional all ones vector by $\mathbf{1}_n$. Given any vector $x\in\bbR^n$, we write $\diag(x)$  for a diagonal matrix whose $(i,i)$-th entry is equal to the $i$-th entry of $x$. Clearly, $\mathbf{1}_n^\top x$ is just the sum of all the entries of $x$. 
We also use $\blkdiag(X_1,X_2,\dots,X_n)$ to denote a blockdiagonal matrix with blocks $X_1,X_2,\dots,X_n$. Given two matrices $A$ and $B$, their Kronecker product is denoted by $A\otimes B$. A function $\varphi:\bbR\rightarrow \bbR$ is slope-restricted on $[\alpha, \beta]$ where $0\le \alpha <\beta<\infty$ if $\alpha (y-x)\le \varphi(y)-\varphi(x)\le \beta (y-x)$  $\forall x,y\in\bbR$.
A mapping $\phi:\bbR^n\rightarrow \bbR^n$ is  slope-restricted on $[\alpha, \beta]$ if each entry of $\phi$ is slope-restricted on $[\alpha, \beta]$. 



\section{Preliminaries}\label{sec:prelim}
\subsection{Lipschitz bounds for neural networks: A brief review}
In this section, we briefly review the original LipSDP framework developed in \cite{fazlyab2019efficient}. Suppose we 
are interested in analyzing the $\ell_2\rightarrow \ell_2$ Lipschitz upper bounds for the following standard $l$-layer feed-forward neural network which maps an input $x$ to the output $f_\theta(x)$:
\begin{equation}\label{eq:FC}
    x^0=x,\quad x^i=\phi(W_{i}x^{i-1}+b_{i})\quad i = 1,\dots,l-1,\quad f_\theta(x)= W_{l}x^{l-1}+b_{l}.
\end{equation}
Here $W_i\in\bbR^{n_{i}\times n_{i-1}}$ and $b_i\in\bbR^{n_{i}}$  denote the weight matrix and the bias vector at the $i$-th layer, respectively. Obviously, we have $x\in \bbR^{n_0}$ and $f_\theta(x)\in \bbR^{n_l}$ and we collect all characterizing weight and bias terms in $\theta=\{W_i,b_i\}_{i=1}^l$.
The goal is to obtain an accurate Lipschitz bound $L>0$ such that $ \| f_\theta (x) - f_\theta (y) \|_2 \leq L \| x-y \|_2$ $\forall x,y\in\bbR^{n_0}$ holds.

\paragraph{Spectral norm product bound for $\phi$ being $1$-Lipschitz.}  
If the activation function $\phi$ is $1$-Lipschitz, then one can use the chain rule to obtain the following simple Lipschitz bound \citep{szegedy2013intriguing}:
\begin{align}\label{eq:product}
    L=\prod_{i=1}^l \|W_i\|_2
\end{align}
In other words, the product of the spectral norm of every layer's weight provides a simple Lipschitz bound. The bound (\ref{eq:product}) can be efficiently computed using the power iteration method,  and the assumption regarding $\phi$ being $1$-Lipschitz is quite mild, i.\,e., almost all activation functions used in practice, e.g., ReLU and MaxMin, satisfy this weak assumption. The downside is that (\ref{eq:product}) yields conservative results, e.g., ReLU is not only 1-Lipschitz but also slope-restricted on $[0,1]$, which is exploited in more advanced SDP-based methods.

\textbf{LipSDP for $\phi$ being slope-restricted on $[0,1]$.} 
If the activation $\phi$ is slope-restricted on $[0,1]$ in an entry-wise sense, as satisfied, e.g., by ReLU, sigmoid, tanh, one can adopt the quadratic constraint framework from control theory \citep{megretski1997system} to formulate LipSDP for an improved Lipschitz analysis \citep{fazlyab2019efficient}.
Given $\{W_i,b_i\}_{i=1}^l$, define $(A,B)$ as 
\begin{align*}
    A &= \begin{bmatrix}
        W_1 & 0 & \dots & 0 & 0 \\
        0 & W_2 & \dots & 0 & 0\\
        \vdots & \vdots & \ddots & \vdots & \vdots\\
        0 & 0 &\dots & W_{l-1} & 0
    \end{bmatrix},~
    B = \begin{bmatrix}
        0 & I_{n_1} & 0 & \dots & 0 \\
        0 & 0 & I_{n_2} & \dots & 0 \\
        \vdots & \vdots & \vdots & \ddots & \vdots \\
        0 & 0 & 0 &\dots & I_{n_{l-1}}
    \end{bmatrix}.
\end{align*}
to write the neural network (\ref{eq:FC}) compactly as $Bx=\phi(Ax+b)$ with $x=\begin{bmatrix}{x^0}^\top \dots {x^{l-1}}^\top \end{bmatrix}^\top$, $b=\begin{bmatrix}b_1^\top \dots b_l^\top \end{bmatrix}^\top$. Based on \cite[Theorem 2]{fazlyab2019efficient}, if there exists a diagonal positive semidefinite matrix $T\in\bbD^{n}_+$, where $n=\sum_{i=1}^l n_i$, such that the following matrix inequality holds\footnote{The original work in \cite{fazlyab2019efficient} uses a more general full-block parameterization of $T$. However, as pointed out in \cite{pauli2021training}, \cite[Lemma 1]{fazlyab2019efficient} is actually incorrect, and one has to use diagonal matrices $T$ instead.} 
 \begin{align}\label{eq:LipSDP}
         \begin{bmatrix}
            A\\ B 
        \end{bmatrix}^\top
        \begin{bmatrix}
            0 & T\\
            T & -2T
        \end{bmatrix}
        \begin{bmatrix}
            A\\ B 
        \end{bmatrix}+
        \begin{bmatrix}
            -\rho I_{n_0} & 0 & \dots & 0\\
            0 & 0 & \dots & 0\\
            \vdots & \vdots & \ddots & \vdots\\
            0 & 0 & \dots & W_l^\top W_l
        \end{bmatrix}\preceq 0,
    \end{align}
then the neural network (\ref{eq:FC}) with $\phi$ being slope-restricted on $[0,1]$ 
is $\sqrt{\rho}$-Lipschitz in the $\ell_2\rightarrow \ell_2$ sense, i.e., $ \| f_\theta (x) - f_\theta (y) \|_2 \leq \sqrt{\rho} \| x-y \|_2$ $\forall x,y\in\bbR^{n_0}$. Minimizing $\rho$ over the decision variables $(\rho, T)$ subject to the matrix inequality constraint (\ref{eq:LipSDP}), i.e., 
\begin{equation*}
    \min_{\rho,T\in\bbD_+^n} \rho\quad\text{s.\,t.}\quad (\ref{eq:LipSDP}),
\end{equation*}
is a convex program and exactly the so-called LipSDP, which, in comparison to the product bound (\ref{eq:product}), leads to improved Lipschitz bounds for networks with slope-restricted activations (e.g. ReLU). 

\textbf{SDPs for $\ell_\infty$ perturbations.} 
\cite{wang2022quantitative} developed an extension of LipSDP to find Lipschitz bounds for neural networks with slope-restricted activations in the $\ell_\infty\rightarrow \ell_1$ sense. 
Specifically, suppose $f_\theta$ is a scalar output, and the SDPs in \cite{wang2022quantitative} can be used to find $L$ such that $| f_\theta (x) - f_\theta (y) | \leq L \| x-y \|_\infty\quad \forall x,y\in\bbR^{n_0}$. See \cite{wang2022quantitative} for more details. 

\subsection{GroupSort and Householder activations}
Recently, there has been a growing interest in using gradient norm preserving activations, particularly MaxMin, to design expressive Lipschitz neural networks \citep{tanielian2021approximating}. We discuss important applications of MaxMin neural networks in Appendix~\ref{app:applications_MaxMin}. MaxMin is a special case of the GroupSort activation that we define as follows \citep{anil2019sorting}. Suppose $\phi:\bbR^n\rightarrow \bbR^n$ is a GroupSort activation. What $\phi$ does is  separating the $n$ preactivations into $N$ groups each of size $n_g$, i.e., $n=Nn_g$, and sorting these groups in ascending order before combining the outputs to a vector of size $n$. GroupSort encompasses two important special cases:  MaxMin ($n_g=2$) and FullSort ($n_g=n$). Another gradient norm preserving activation that also generalizes MaxMin is the Householder activation $\phi:\bbR^n\rightarrow \bbR^n$ that is applied to subgroups $x\in\bbR^{n_g}$ \citep{singla2022improved}
\begin{equation*}
    \phi(x)=
    \begin{cases}
        x & v^\top x > 0\\
        (I_{n_g}-2vv^\top)x & v^\top x\leq0
    \end{cases}
\end{equation*}
for all $v: \Vert v\Vert_2=1$. The parameter $v$ can hence be learned during training and the choice $v = \begin{bsmallmatrix} \frac{\sqrt{2}}{2} & -\frac{\sqrt{2}}{2}
    \end{bsmallmatrix}^\top$, $n_g=2$ recovers MaxMin. 

Using that GroupSort and Householder are $1$-Lipschitz \citep{anil2019sorting}, the product bound (\ref{eq:product}) can still be applied. However, GroupSort and Householder are not slope-restricted on $[0,1]$. Therefore, LipSDP cannot directly be applied to this case. Our work aims at extending LipSDP and its variants beyond slope-restricted activation functions. 

\section{Motivation and Problem Statement}\label{sec:motivation}

\textbf{Motivating Example.} One may argue that MaxMin activations can be rewritten as residual ReLU networks
\begin{equation}\label{eq:ResReLU}
    \MaxMin\left(\begin{bmatrix} x_1\\ x_2 \end{bmatrix}\right) =
    \begin{bmatrix}
        0 & 1\\
        0 & 1
    \end{bmatrix}
    \begin{bmatrix}
        x_1\\
        x_2
    \end{bmatrix}+
    \begin{bmatrix}
        1 & 0 \\
        0 & -1
    \end{bmatrix}
    \ReLU\left(
    \begin{bmatrix}
        1 & -1 \\
        -1 & 1
    \end{bmatrix}    
    \begin{bmatrix}
        x_1\\
        x_2
    \end{bmatrix}    \right)
    =Hx+G\ReLU\left(W x\right) .
\end{equation}
Using that ReLU is slope-restricted, one can then formulate a SDP condition
\begin{align}\label{eq:LMI_MaxMin_ReLU}
 \begin{bmatrix}I & 0 \\ H & G \end{bmatrix}^\top\begin{bmatrix}\rho I & 0 \\ 0 & -I\end{bmatrix} \begin{bmatrix}I & 0 \\ H & G\end{bmatrix}\succeq
     \begin{bmatrix}
        W & 0\\
        0 & I
    \end{bmatrix}^\top
    \begin{bmatrix}
        0 & T\\
        T & -2T
    \end{bmatrix}
    \begin{bmatrix}
        W & 0\\
        0 & I
    \end{bmatrix},
\end{align} 
that implies $\sqrt{\rho}$-Lipschitzness of the MaxMin activation function. The proof of this claim is deferred to Appendix \ref{app:motivating_example}. Minimizing $\rho$ over $(\rho,T)$ subject to matrix inequality (\ref{eq:LMI_MaxMin_ReLU}) yields $\rho=2$. This shows that applying LipSDP to the residual ReLU network equivalent to MaxMin is too conservative to recover the well-known fact that MaxMin is $1$-Lipschitz. To get improved Lipschitz bounds, we necessitate more sophisticated qudratic constraints for MaxMin and its generalizations GroupSort and Householder, which we derive in the next section.

\textbf{Problem statement.} In this paper, we are concerned with \emph{Lipschitz constant estimation} for neural networks with \emph{MaxMin}, \emph{GroupSort}, and \emph{Householder} activations. We give a brief preview of our approach here. Similar to LipSDP, we also follow the quadratic constraint approach from control theory \citep{megretski1997system} to obtain a matrix inequality that implies Lipschitz continuity for a neural network. We provide more details on the quadratic constraint framework in Appendix \ref{app:quadratic_constraints}. 
Notice that the matrix inequality (\ref{eq:LipSDP}) used in LipSDP is a special case of
\begin{align}\label{eq:SDP_con}
   \begin{bmatrix}
            A\\ B 
        \end{bmatrix}^\top
       X
        \begin{bmatrix}
            A\\ B 
        \end{bmatrix}+  \begin{bmatrix}
            -\rho I_{n_0} & 0 & \dots & 0\\
            0 & 0 & \dots & 0\\
            \vdots & \vdots & \ddots & \vdots\\
            0 & 0 & \dots & W_l^\top W_l
        \end{bmatrix}\preceq 0,
\end{align}
 with $X=\begin{bsmallmatrix}
            0 & T\\
            T & -2T
\end{bsmallmatrix}$ for $[0,1]$-slope-restricted activations.

Developing a proper choice of $X$ beyond slope-restricted activations is the goal of this paper. In doing so, we explore the input-output properties of GroupSort and Householder to set up $X$ such that the matrix inequality (\ref{eq:SDP_con}) can be modified to give state-of-the-art Lipschitz bounds for networks with interesting activations that are not slope-restricted.

\section{Main Results: SDPs for GroupSort and Householder activations}\label{sec:main_results}

In this section, we develop novel SDPs for analyzing Lipschitz bounds of neural networks with MaxMin, GroupSort, and Householder activations. Our developments are based on non-trivial constructions of quadratic constraints capturing the input-output properties of the respective activation function and control-theoretic arguments \citep{megretski1997system}.

\subsection{Quadratic constraints for GroupSort and Householder activations}
First, we present  quadratic constraints characterizing the key input-output properties, i.e., $1$-Lipschitz-ness and sum preservation, of the GroupSort function. Our results are summarized below.

\begin{lemma}\label{lem:GroupSort}
Consider a GroupSort activation $\phi:\bbR^{n}\to\bbR^{n}$ with group size $n_g$. Let $N=\frac{n}{n_g}$.
Given any $\lambda \in \bbR_+^N$ and $\gamma, \nu, \tau \in \bbR^N$, the following inequality holds for all $x,y\in \bbR^n$:
 \begin{equation}\label{eq:iQC_GroupSort}
        \begin{bmatrix}
            x-y\\
            \phi(x)-\phi(y)
        \end{bmatrix}^\top
        \begin{bmatrix}
            T-2S & P+S\\
            P+S & -T-2P
        \end{bmatrix}
        \begin{bmatrix}
            x-y\\
            \phi(x)-\phi(y)
        \end{bmatrix}\geq 0,
    \end{equation}
where $(T,S,P)$ are given as
\begin{align}\label{eq:TSP}
\begin{split}
T&:=\diag(\lambda)\otimes I_{n_g}+\diag(\gamma)\otimes (\mathbf{1}_{n_g}\mathbf{1}_{n_g}^\top),\\
P&:=\diag(\nu)\otimes (\mathbf{1}_{n_g}\mathbf{1}_{n_g}^\top),\\
S&:=\diag(\tau)\otimes (\mathbf{1}_{n_g}\mathbf{1}_{n_g}^\top).
\end{split}
\end{align}
\end{lemma}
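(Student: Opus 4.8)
The plan is to reduce everything to a single group and then exploit two elementary properties of sorting. Fix a group of size $n_g$ and let $\phi_k:\bbR^{n_g}\to\bbR^{n_g}$ denote ascending sort on the $k$-th group (with $x_k,y_k\in\bbR^{n_g}$ the corresponding subvectors of $x,y$). The two facts I would use are: (a) \emph{sum preservation}, $\mathbf{1}_{n_g}^\top\phi_k(a)=\mathbf{1}_{n_g}^\top a$ for all $a$, which is immediate since $\phi_k$ merely permutes entries; and (b) \emph{$\ell_2$ non-expansiveness}, $\|\phi_k(a)-\phi_k(b)\|_2\le\|a-b\|_2$. For (b) I would first verify that a single compare-and-swap on a coordinate pair, $(a,b)\mapsto(\min\{a,b\},\max\{a,b\})$, is $\ell_2$ non-expansive: the only nontrivial case is when exactly one of the two input pairs is out of order, in which case the squared distance changes by $2(a-b)(a'-b')\le 0$. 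Then any sort is a finite composition of adjacent compare-and-swaps (bubble sort), each acting as the identity on the remaining coordinates, so $\phi_k$ is a composition of non-expansive maps.

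Next I would use the Kronecker structure to block-diagonalize. Reindexing so that the $(x-y)$- and $(\phi(x)-\phi(y))$-coordinates belonging to group $k$ are adjacent, the matrix in (\ref{eq:iQC_GroupSort}) becomes block diagonal with $N$ blocks of size $2n_g$; using $T=\diag(\lambda)\otimes I_{n_g}+\diag(\gamma)\otimes\mathbf{1}_{n_g}\mathbf{1}_{n_g}^\top$, $P=\diag(\nu)\otimes\mathbf{1}_{n_g}\mathbf{1}_{n_g}^\top$, $S=\diag(\tau)\otimes\mathbf{1}_{n_g}\mathbf{1}_{n_g}^\top$, the $k$-th block is
\[
\begin{bmatrix} \lambda_k I_{n_g}+(\gamma_k-2\tau_k)\mathbf{1}_{n_g}\mathbf{1}_{n_g}^\top & (\nu_k+\tau_k)\mathbf{1}_{n_g}\mathbf{1}_{n_g}^\top\\ (\nu_k+\tau_k)\mathbf{1}_{n_g}\mathbf{1}_{n_g}^\top & -\lambda_k I_{n_g}-(\gamma_k+2\nu_k)\mathbf{1}_{n_g}\mathbf{1}_{n_g}^\top\end{bmatrix}.
\]
Hence it suffices to show, for each $k$ and with $u:=x_k-y_k$, $v:=\phi_k(x_k)-\phi_k(y_k)$,
\[
\lambda_k\big(\|u\|_2^2-\|v\|_2^2\big)+(\gamma_k-2\tau_k)(\mathbf{1}_{n_g}^\top u)^2+2(\nu_k+\tau_k)(\mathbf{1}_{n_g}^\top u)(\mathbf{1}_{n_g}^\top v)-(\gamma_k+2\nu_k)(\mathbf{1}_{n_g}^\top v)^2\ \ge\ 0.
\]

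Finally I would close with properties (a) and (b). By (a), $\mathbf{1}_{n_g}^\top u=\mathbf{1}_{n_g}^\top v=:s$, so the three rank-one terms collapse to $\big[(\gamma_k-2\tau_k)+2(\nu_k+\tau_k)-(\gamma_k+2\nu_k)\big]s^2=0$; they cancel identically for \emph{every} real $\gamma_k,\nu_k,\tau_k$, which is precisely why those three multiplier vectors can be left unconstrained. The surviving term $\lambda_k(\|u\|_2^2-\|v\|_2^2)$ is nonnegative by (b) together with $\lambda_k\ge 0$. Summing over $k=1,\dots,N$ recovers (\ref{eq:iQC_GroupSort}). I do not expect a genuine obstacle; the points needing care are the compare-and-swap argument for $\ell_2$ non-expansiveness of sorting, and the bookkeeping of the permutation/Kronecker reduction so that the off-diagonal block really contributes $(\mathbf{1}_{n_g}^\top u)(\mathbf{1}_{n_g}^\top v)$ with coefficient $\nu_k+\tau_k$.
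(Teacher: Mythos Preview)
Your proposal is correct and follows essentially the same approach as the paper: both proofs use sum preservation within each group to eliminate all the $\gamma,\nu,\tau$ contributions and then invoke the $1$-Lipschitzness of sorting on each group to handle the remaining $\lambda$ term. The only differences are presentational---you expand the quadratic form per block and verify the coefficient cancellation arithmetically, whereas the paper writes three separate matrix equalities (one for each of $\gamma,\nu,\tau$) and subtracts them off; and you supply a compare-and-swap/bubble-sort argument for the non-expansiveness of sorting, which the paper simply asserts.
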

All the detailed technical proofs are deferred to Appendix~\ref{app:proofs}. To illustrate the technical essence, we briefly sketch a proof outline here. 

\vspace{-0.1cm}
\paragraph{Proof sketch.} Let $e_i$ denote a $N$-dimensional unit vector whose $i$-th entry is $1$ and all other entries are $0$. Since GroupSort preserves the sum within each subgroup, $(e_i^\top\otimes \mathbf{1}_{n_g}^\top) x=(e_i^\top\otimes \mathbf{1}_{n_g}^\top) \phi(x)$ holds for all $i=1,\ldots, N$. This leads to the following key equality:
\begin{align*}
(x-y)^\top ((e_i e_i^\top)\otimes (\mathbf{1}_{n_g}\mathbf{1}_{n_g}^\top))(x-y)
=(\phi(x)-\phi(y))^\top((e_i e_i^\top)\otimes (\mathbf{1}_{n_g}\mathbf{1}_{n_g}^\top))(\phi(x)-\phi(y))&\\
=(\phi(x)-\phi(y))^\top ((e_i e_i^\top)\otimes (\mathbf{1}_{n_g}\mathbf{1}_{n_g}^\top))(x-y)
=(x-y)^\top ((e_i e_i^\top)\otimes (\mathbf{1}_{n_g}\mathbf{1}_{n_g}^\top))(\phi(x)-\phi(y))&,
\end{align*}
which can be used  to verify that (\ref{eq:iQC_GroupSort}) is equivalent to
 \begin{align*}
        \begin{bmatrix}
            x-y\\
            \phi(x)-\phi(y)
        \end{bmatrix}^\top
        \begin{bmatrix}
            \diag(\lambda)\otimes I_{n_g} & 0\\
            0 & -\diag(\lambda)\otimes I_{n_g}
        \end{bmatrix}
        \begin{bmatrix}
            x-y\\
            \phi(x)-\phi(y)
        \end{bmatrix}\geq 0.
    \end{align*}
   Finally, the above inequality holds because the sorting behavior in each subgroup is $1$-Lipschitz, i.e., $\Vert e_i^\top \otimes I_{n_g} (x-y) \Vert_2^2\geq \Vert e_i^\top \otimes I_{n_g} (\phi(x)-\phi(y))\Vert_2^2$
    for all $i=1,\ldots, N$. 
\qed

Based on the above proof outline, we can see that the construction of the quadratic constraint (\ref{eq:iQC_GroupSort}) relies on the key properties that the sorting in every subgroup is $1$-Lipschitz and sum-preserving. This is significantly different from the original sector-bounded quadratic constraint for slope-restricted nonlinearities in \cite{fazlyab2019efficient}.  

In the following, we present quadratic constraints for the group-wise applied Householder activation.
\begin{lemma}\label{lem:Householder}
    Consider a Householder activation $\phi:\bbR^{n}\to\bbR^{n}$ with group size $n_g$. Let $N=\frac{n}{n_g}$.
Given any $\lambda \in \bbR_+^N$ and $\gamma,\nu, \tau \in \bbR^N$, the inequality (\ref{eq:iQC_GroupSort}) holds for all $x,y\in \bbR^n$, where $(T,S,P)$ are given as
\begin{align}\label{eq:TSP_Householder}
\begin{split}
T&:=\diag(\lambda)\otimes I_{n_g}+\diag(\gamma)\otimes (I_{n_g}-v v^\top),\\
P&:=\diag(\nu)\otimes (I_{n_g}-v v^\top),\\
S&:=\diag(\tau)\otimes (I_{n_g}-v v^\top).
\end{split}
\end{align}
\end{lemma}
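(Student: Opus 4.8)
The plan is to follow the proof of Lemma~\ref{lem:GroupSort} essentially line by line, substituting the two structural facts about GroupSort (sum preservation within each subgroup, $1$-Lipschitzness of the within-subgroup sorting) by the corresponding facts about the group-wise Householder map, and substituting the matrix $\mathbf{1}_{n_g}\mathbf{1}_{n_g}^\top$ throughout by the projection $M:=I_{n_g}-vv^\top$. Since $\phi$ acts block-diagonally over the $N$ groups with the same unit vector $v$, it suffices to establish the needed facts one group at a time and sum over $i=1,\dots,N$. Two elementary observations drive the argument. First, because $\|v\|_2=1$, the matrix $M=I_{n_g}-vv^\top$ is a symmetric idempotent, $M=M^\top=M^2$ (the orthogonal projection onto $v^\perp$). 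Second, in either branch of the Householder definition the component of the input orthogonal to $v$ is unchanged: if $v^\top x>0$ then $\phi(x)=x$, and if $v^\top x\le 0$ then $\phi(x)=x-2(v^\top x)v$ differs from $x$ only along $v$; in both cases $M$ applied within that group is preserved. Writing $e_i$ for the $i$-th standard basis vector of $\bbR^N$, this gives $(e_i^\top\otimes M)\phi(x)=(e_i^\top\otimes M)x$ for every $x\in\bbR^n$ and every $i$.

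First I would record the $1$-Lipschitz property of the group-wise Householder map. When $x,y$ lie in the same branch this is immediate, since both $I_{n_g}$ and $I_{n_g}-2vv^\top$ are orthogonal. When they lie in different branches, say $v^\top x>0\ge v^\top y$ so $\phi(x)=x$ and $\phi(y)=y-2(v^\top y)v$, a one-line expansion gives $\|\phi(x)-\phi(y)\|_2^2=\|x-y\|_2^2+4(v^\top x)(v^\top y)\le\|x-y\|_2^2$, using $(v^\top x)(v^\top y)\le 0$. With $\Delta x:=x-y$ and $\Delta\phi:=\phi(x)-\phi(y)$, this yields $\|(e_i^\top\otimes I_{n_g})\Delta x\|_2^2\ge\|(e_i^\top\otimes I_{n_g})\Delta\phi\|_2^2$ for all $i$.

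Next, set $M_i:=(e_ie_i^\top)\otimes M$. Since $M=M^\top M$, we have $M_i=(e_i^\top\otimes M)^\top(e_i^\top\otimes M)$, hence $\Delta x^\top M_i\Delta x=\|(e_i^\top\otimes M)\Delta x\|_2^2$; and because $(e_i^\top\otimes M)\Delta x=(e_i^\top\otimes M)\Delta\phi$ by the $v^\perp$-invariance above, the four quadratic forms $\Delta x^\top M_i\Delta x$, $\Delta\phi^\top M_i\Delta\phi$, $\Delta\phi^\top M_i\Delta x$, $\Delta x^\top M_i\Delta\phi$ are all equal to a common nonnegative scalar $q_i$ — the Householder analogue of the equality chain in the proof of Lemma~\ref{lem:GroupSort}. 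Now I would substitute the block matrix $\begin{bsmallmatrix}T-2S & P+S\\ P+S & -T-2P\end{bsmallmatrix}$ built from (\ref{eq:TSP_Householder}) into the quadratic form in (\ref{eq:iQC_GroupSort}), split $T$ into its $\diag(\lambda)\otimes I_{n_g}$ part and its $\diag(\gamma)\otimes M$ part, and collect terms. The $\diag(\lambda)\otimes I_{n_g}$ part contributes $\sum_{i=1}^N\lambda_i\big(\|(e_i^\top\otimes I_{n_g})\Delta x\|_2^2-\|(e_i^\top\otimes I_{n_g})\Delta\phi\|_2^2\big)$, while every remaining term is a multiple of some $M_i$ and, by the equality chain, the total coefficient of $q_i$ is $(\gamma_i-2\tau_i)+2(\nu_i+\tau_i)+(-\gamma_i-2\nu_i)=0$. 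Thus the whole quadratic form equals $\sum_{i=1}^N\lambda_i\big(\|(e_i^\top\otimes I_{n_g})\Delta x\|_2^2-\|(e_i^\top\otimes I_{n_g})\Delta\phi\|_2^2\big)\ge 0$ since $\lambda\in\bbR_+^N$ and by the $1$-Lipschitz bound, which is precisely (\ref{eq:iQC_GroupSort}).

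I expect the one genuinely new step, and hence the main obstacle, to be the $1$-Lipschitz estimate in the mixed-branch case: it is the only place where the argument departs from the GroupSort template, and one must keep track of which branch each of $x$ and $y$ falls into and exploit the sign condition $(v^\top x)(v^\top y)\le 0$. The remainder is bookkeeping — checking that the shared $v$ lets every relevant quantity factor through $M_i=(e_ie_i^\top)\otimes M$, and verifying that the $\gamma,\nu,\tau$ coefficients cancel exactly once the equality chain is available.
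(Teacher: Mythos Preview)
Your proof is correct and follows the same overall strategy as the paper's: show that the $\gamma,\nu,\tau$ contributions vanish via equality constraints, then reduce (\ref{eq:iQC_GroupSort}) to the $1$-Lipschitz inequality carried by the $\lambda$ term. The difference is in execution. The paper verifies three separate equalities --- $\Delta\phi^\top M\Delta\phi=\Delta x^\top M\Delta x$, $\Delta\phi^\top M\Delta x=\Delta x^\top M\Delta x$, and $\Delta\phi^\top M\Delta\phi=\Delta x^\top M\Delta\phi$ --- each by a three-case analysis (nine subcases in total), whereas you obtain all of them at once from the single pointwise invariance $M\phi(x)=Mx$ together with $M=M^\top M$. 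This is more economical and makes explicit that $M=I_{n_g}-vv^\top$ plays exactly the role that $\mathbf{1}_{n_g}\mathbf{1}_{n_g}^\top$ plays for GroupSort: it is the projection onto the subspace on which $\phi$ acts as the identity. For the $1$-Lipschitz step, the paper simply invokes gradient-norm preservation (both branch Jacobians $I$ and $I-2vv^\top$ are orthogonal), while you compute the mixed-branch case directly via $\|\phi(x)-\phi(y)\|_2^2=\|x-y\|_2^2+4(v^\top x)(v^\top y)\le\|x-y\|_2^2$; this is a welcome addition, since the gradient argument does not by itself cover the cross-branch case without a further path-integration step.
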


Using the novel quadratic constraint (\ref{eq:iQC_GroupSort}) with $(T,S,P)$ either defined as in (\ref{eq:TSP}) or in (\ref{eq:TSP_Householder}), we can formulate a SDP subject to (\ref{eq:SDP_con}) with $X$ being set up as
\begin{align*}
    X= \begin{bmatrix}
            T-2S & P+S\\
            P+S & -T-2P
        \end{bmatrix}
\end{align*}
to estimate the Lipschitz constant of a GroupSort/Householder neural network. We will discuss such results next. 

\subsection{$\ell_2\rightarrow \ell_2$ Lipschitz bounds for GroupSort/Householder neural networks}\label{sec:l2_multi}
Based on Lemmas \ref{lem:GroupSort} and \ref{lem:Householder}, we develop a new SDP condition to analyze neural networks with GroupSort and Householder activations. To this end, we establish the following result. 

\begin{theorem}\label{thm:FC}
    Consider the $l$-layer fully-connected neural network (\ref{eq:FC}), where $\phi$ is GroupSort (Householder) with the same group size $n_g$ in all layers. 
    Let $N=\frac{\sum_{i=1}^{l-1} n_i}{n_g} $ be the total number of groups. 
     If there exist $\rho>0$, $\lambda \in \bbR_+^N$, and $\gamma,\nu,\tau \in \bbR^N$
   such that the following matrix inequality holds
    \begin{align}\label{eq:LMIs_l2}
    M(\rho,P,S,T):=
     \begin{bmatrix}
            A\\ B 
        \end{bmatrix}^\top
        \begin{bmatrix}
            T-2S & P+S\\
            P+S & -T-2P
        \end{bmatrix}
        \begin{bmatrix}
            A\\ B 
        \end{bmatrix}+
        \begin{bmatrix}
            -\rho I_{n_0} & 0 & \dots & 0\\
            0 & 0 & \dots & 0\\
            \vdots & \vdots & \ddots & \vdots\\
            0 & 0 & \dots & W_l^\top W_l
        \end{bmatrix}\preceq 0,
    \end{align}
    where $(T,S,P)$ are defined by equation (\ref{eq:TSP}) (equation (\ref{eq:TSP_Householder})), then the neural network (\ref{eq:FC}) is $\sqrt{\rho}$-Lipschitz in the $\ell_2\rightarrow \ell_2$ sense.
\end{theorem}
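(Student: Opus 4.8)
The plan is to reuse the quadratic-constraint / S-procedure argument behind LipSDP, substituting the new quadratic constraint of Lemma~\ref{lem:GroupSort} (resp. Lemma~\ref{lem:Householder}) for the sector bound used in \cite{fazlyab2019efficient}. First I would fix two inputs $x,y\in\bbR^{n_0}$, run them through the network~(\ref{eq:FC}) to obtain the hidden activations $x^1,\dots,x^{l-1}$ and $y^1,\dots,y^{l-1}$, and stack them into $\bar x=[(x^0)^\top\cdots(x^{l-1})^\top]^\top$ and $\bar y=[(y^0)^\top\cdots(y^{l-1})^\top]^\top$, with $x^0=x$, $y^0=y$. By the definition of $(A,B)$, these stacked vectors satisfy $B\bar x=\phi(A\bar x+b)$ and $B\bar y=\phi(A\bar y+b)$, where $\phi$ denotes the block-diagonal concatenation of the layer-wise activations.

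The key point is that this concatenated $\phi$ is again a GroupSort (Householder) activation, now acting on $\bbR^{\sum_{i=1}^{l-1}n_i}$ with group size $n_g$ and $N=\frac{\sum_{i=1}^{l-1}n_i}{n_g}$ groups in total --- this is exactly where the hypothesis of a uniform group size across all layers enters. Consequently Lemma~\ref{lem:GroupSort} (resp. Lemma~\ref{lem:Householder}) applies directly with the given $\lambda\in\bbR_+^N$ and $\gamma,\nu,\tau\in\bbR^N$. Applying~(\ref{eq:iQC_GroupSort}) to the pair $A\bar x+b$ and $A\bar y+b$, and using that their difference is $A(\bar x-\bar y)$ while $\phi(A\bar x+b)-\phi(A\bar y+b)=B(\bar x-\bar y)$, yields
\begin{equation*}
(\bar x-\bar y)^\top\begin{bmatrix}A\\B\end{bmatrix}^\top\begin{bmatrix}T-2S & P+S\\ P+S & -T-2P\end{bmatrix}\begin{bmatrix}A\\B\end{bmatrix}(\bar x-\bar y)\ \ge\ 0 .
\end{equation*}

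Next I would left- and right-multiply the matrix inequality~(\ref{eq:LMIs_l2}) by $(\bar x-\bar y)^\top$ and $(\bar x-\bar y)$, obtaining $(\bar x-\bar y)^\top M(\rho,P,S,T)(\bar x-\bar y)\le 0$. Since $M(\rho,P,S,T)$ is the sum of the $\begin{bsmallmatrix}A\\B\end{bsmallmatrix}^\top X\begin{bsmallmatrix}A\\B\end{bsmallmatrix}$ term appearing above and the block-diagonal residual $\blkdiag(-\rho I_{n_0},0,\dots,0,W_l^\top W_l)$, and the former is nonnegative on $(\bar x-\bar y)$ by the previous display, it follows that
\begin{equation*}
-\rho\,\|x^0-y^0\|_2^2+\|W_l(x^{l-1}-y^{l-1})\|_2^2\ \le\ 0 .
\end{equation*}
Because $f_\theta(x)-f_\theta(y)=W_l x^{l-1}+b_l-(W_l y^{l-1}+b_l)=W_l(x^{l-1}-y^{l-1})$ and $x^0=x$, $y^0=y$, this is precisely $\|f_\theta(x)-f_\theta(y)\|_2^2\le\rho\|x-y\|_2^2$; taking square roots and using that $x,y$ were arbitrary gives $\sqrt{\rho}$-Lipschitzness in the $\ell_2\rightarrow\ell_2$ sense. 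The Householder case is verbatim the same, invoking Lemma~\ref{lem:Householder} and the definition~(\ref{eq:TSP_Householder}) of $(T,S,P)$.

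I do not expect a deep difficulty; the proof is essentially the LipSDP S-procedure step combined with the new lemmas. The two things that need care are (i) checking that stacking the per-layer GroupSort/Householder maps produces a single admissible activation of the same group size, so that Lemma~\ref{lem:GroupSort}/\ref{lem:Householder} can be invoked once, globally over all hidden units, with $N$ equal to the total number of groups; and (ii) the block bookkeeping in the final step, so that after the S-procedure only the term $-\rho\|x-y\|_2^2+\|f_\theta(x)-f_\theta(y)\|_2^2$ survives. Note that, in contrast to the original LipSDP, we do not need $T$ to be positive semidefinite or diagonal (only $\lambda\ge 0$, with $\gamma,\nu,\tau$ unconstrained), since the required inequality is handed to us directly by the lemma rather than re-derived from a multiplier argument.
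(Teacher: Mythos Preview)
Your proposal is correct and follows essentially the same argument as the paper's proof: stack the hidden activations, left/right-multiply the LMI~(\ref{eq:LMIs_l2}) by the difference of stacked vectors, and invoke Lemma~\ref{lem:GroupSort} (resp.~\ref{lem:Householder}) on the pair $(A\bar x+b,\,A\bar y+b)$ to show the quadratic-constraint term is nonnegative, leaving only $-\rho\|x-y\|_2^2+\|W_l(x^{l-1}-y^{l-1})\|_2^2\le 0$. Your explicit remark that the uniform group size is what allows the concatenated activation to be treated as a single GroupSort/Householder map on $\bbR^{\sum n_i}$ is exactly the right justification for applying the lemma globally.
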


The proof of the above result is based on standard quadratic constraint arguments and is included in Appendix~\ref{app:proof_Thm_l2}.  Based on this result, the solution $\sqrt{\rho}$ of the SDP
\begin{equation}\label{eq:SDP}
    \min_{\rho,T,S,P}~\rho\quad \text{s.\,t.}\quad M(\rho,T,S,P)\preceq 0,
\end{equation}
gives an accurate Lipschitz bound for neural networks with GroupSort or Householder activations. In our practical implementation, for improved scalability, we set $S=P=0$ to reduce the number of decision variables and we exploit the structure of the constraint (\ref{eq:LMIs_l2}). See Appendix~\ref{app:practical_implementation_l2} for details. Empirical tests indicate that the choice $S=P=0$ yields the same results as without this consraint. In what follows, we discuss several important aspects of SDP-based Lipschitz constant estimation for GroupSort and Householder activations.

\vspace{-0.1cm}
\paragraph{Simplifications for the single-layer neural network.}
If the neural network is single-layer, i.e., $f_\theta(x)=W_2\phi(W_1x+b_1)+b_2$ with $\phi$ being GroupSort/Householder with group size $n_g$, we can simplify our SDP condition (\ref{eq:LMIs_l2}) as
     \begin{equation*}
        \begin{bmatrix}
            W_{1}^\top T W_{1} -\rho I  & 0\\
            0 & -T+W_{2}^\top W_{2}
        \end{bmatrix}\preceq 0.
    \end{equation*}
Again, one can compare the above result to its LipSDP counterpart \cite[Theorem~1]{fazlyab2019efficient} and realize that the only main difference is that Lemma~\ref{lem:GroupSort} is used to replace \cite[Lemma 1]{fazlyab2019efficient} for setting up $X$. 

\textbf{MaxMin activations.} In practice, MaxMin is typically used where $n_g=2$. In this case, the number of the decision variables used to parameterize $T$ is exactly $\sum_{i=1}^{l-1} n_i$. This is consistent with LipSDP, which uses the same number of decision variables to address slope-restricted activations. 

\textbf{GroupSort and norm-constrained weights.}
GroupSort activations are oftentimes used in combination with gradient norm preserving layers. For this special case the Lipschitz constant 1 is tight and hence there is no need for Lipschitz constant estimation. However, many MaxMin neural networks are trained without weight constraints as they have shown to be prohibitive in training \citep{leino2021globally,hu2023scaling,huang2021training,cohen2019universal}. For such neural networks our approach provides more accurate Lipschitz bounds than existing approaches.

\textbf{Adaptions to convolutional neural networks.}
Theorem \ref{thm:FC} can be extended to convolutional neural networks using ideas from \cite{pauli2022lipschitz,gramlich2023convolutional}. The key idea in \citep{pauli2022lipschitz, gramlich2023convolutional} is the formulation of convolutional layers via a state space representation, which enables the derivation of non-sparse and scalable SDPs for Lipschitz constant estimation for convolutional neural networks. We briefly streamline the method in Appendix~\ref{app:CNNs}. 

\subsection{$\ell_\infty\rightarrow \ell_1$ Lipschitz bounds for GroupSort/Householder neural networks}\label{sec:linfty_2layer}

In this section, we will combine Lemmas \ref{lem:GroupSort} and \ref{lem:Householder} with the results in \cite{wang2022quantitative} to develop SDPs for the $\ell_\infty\rightarrow \ell_1$ analysis of neural networks with GroupSort/Householder activations. Similar to \cite{wang2022quantitative}, we assume that $f_\theta(x)$ has a scalar output. Suppose $\phi$ is a GroupSort/Householder activation with group size $n_g$. We are interested in calculating $L$ such that  $| f_\theta (x) - f_\theta (y) | \leq L \| x-y \|_\infty~\forall x,y\in\bbR^{n_0}$. It is well known that the $\ell_2\rightarrow\ell_2$ Lipschitz bounds can be transferred into $\ell_\infty\rightarrow\ell_1$ Lipschitz bounds via some standard scaling argument based on the equivalence of norms. However, such a naive approach is too conservative in practice \citep{latorre2020lipschitz}. In this section, we develop novel SDPs for more accurate calculations of $\ell_\infty\rightarrow \ell_1$ Lipschitz bounds.  

\vspace{-0.1cm}
\paragraph{Single-layer case.} For simplicity, we first discuss the single-layer network case, i.\,e., $f_\theta(x)=W_2\phi(W_1x+b_1)+b_2$ with $\phi$ being GroupSort/Householder with group size $n_g$. 
Since  the output $f_\theta(x)$ is assumed to be a scalar, we have $W_2 \in\bbR^{1\times n_1}$. Obviously, the tightest Lipschitz bound is given by 
\begin{align}
    L_{\min}= \max_{x,y}~\frac{| f_\theta(x)-f_\theta(y)|}{\Vert x-y\Vert_\infty}.
\end{align}
Then for any $L\ge L_{\min}$, we will naturally have $| f_\theta (x) - f_\theta (y) | \leq L \| x-y \|_\infty$ $\forall x,y$.
Based on Lemma \ref{lem:GroupSort}, one upper bound for $L_{\min}$ is provided by the solution of the following problem:
\begin{equation}\label{eq:linftynorm}
    \max_{\Delta x, \Delta v}~\frac{| W_2 \Delta v|}{\Vert \Delta x\Vert_\infty}\quad \text{s.\,t.}~
    \begin{bmatrix}
        W_1 \Delta x\\
        \Delta v
    \end{bmatrix}^\top
    \begin{bmatrix}
        T-2S & P+S\\
        P+S & -T-2P
    \end{bmatrix}
    \begin{bmatrix}
        W_1 \Delta x\\
        \Delta v
    \end{bmatrix}\geq 0,
\end{equation}
where $\Delta v = \phi(W_1 x+b_1)-\phi(W_1 y+b_1)$, $\Delta x = x-y$, and $(T,S,P)$ are given by (\ref{eq:TSP}) with some $\gamma, \nu,\tau\in\bbR
^N$, $\lambda\in\bbR^N_+$. A similar argument has been used in \cite[Section 5]{wang2022quantitative}. The difference here is that the constraint in the optimization problem (\ref{eq:linftynorm}) is given by our new quadratic constraint specialized for GroupSort/Householder activations (Lemma \ref{lem:GroupSort}/\ref{lem:Householder}). 
 We note that if we scale $\Delta x$ and $\Delta v$ with some common factor, the constraint in problem (\ref{eq:linftynorm}) is maintained due to its quadratic form, and the objective ratio remains unchanged. Hence the problem (\ref{eq:linftynorm}) is scaling-invariant, making it equivalent to the following problem (the absolute value in the objective is removed since scaling $\Delta v$ with $-1$ does not affect feasibility)
\begin{align*}
    \begin{split}
      \max_{\Delta x, \Delta v}~ W_2 \Delta v\quad \text{s.\,t.}~
    \begin{bmatrix}
        W_1 \Delta x\\
        \Delta v
    \end{bmatrix}^\top
    \begin{bmatrix}
        T-2S & P+S\\
        P+S & -T-2P
    \end{bmatrix}
    \begin{bmatrix}
        W_1 \Delta x\\
        \Delta v
    \end{bmatrix}\geq 0,~
    \Vert \Delta x \Vert_\infty=1.
    \end{split}
\end{align*}
A simple upper bound for the above problem is provided by replacing the equality constraint with an inequality $\Vert \Delta x \Vert_\infty \leq 1$. 
Therefore, we know $L_{\min}$ can be upper bounded by the solution of the following problem: 
\begin{equation}\label{eq:linftynorm2} 
    \begin{split}
      \max_{\Delta x, \Delta v}~ W_2 \Delta v\quad \text{s.\,t.}~
    \begin{bmatrix}
         \Delta x\\
        \Delta v
    \end{bmatrix}^\top
    \begin{bmatrix}
        W_1^\top (T-2S) W_1 & W_1^\top (P+S)\\
        (P+S)W_1 & -T-2P
    \end{bmatrix}
    \begin{bmatrix}
         \Delta x\\
        \Delta v
    \end{bmatrix}\geq 0,~
    \Vert \Delta x \Vert_\infty \le 1.
    \end{split}
\end{equation}
Finally, we note that $\Vert \Delta x \Vert_\infty \leq 1$ can be equivalently rewritten as quadratic constraints in the form of $(e_j^\top \Delta x)^2\leq 1$ for $j=1,\ldots, n_0$ \citep{wang2022quantitative}.
Now we introduce the dual variable  $\mu\in \bbR_+^{n_0}$ and obtain the following SDP condition (which can be viewed as the dual program of (\ref{eq:linftynorm2})).  
\begin{theorem}\label{thm:linfty}
    Consider a single-layer  network model described by $f_\theta(x)=W_2\phi(W_1x+b_1)+b_2$, where $\phi:\bbR^{n_1}\to\bbR^{n_1}$ is GroupSort (Householder) with group size $n_g$. Denote $N=\frac{n_1}{n_g}$. Suppose there exist  $\rho>0$, $\gamma, \nu, \tau \in \bbR^N$, $\lambda\in \bbR_+^N$, and $\mu\in\bbR_+^{n_0}$
    such that
    \begin{align}\label{eq:LMIs_linfty_2layer}
    \begin{bmatrix}
        \mathbf{1}_{n_0}^\top \mu -2\rho & 0 & W_2\\
        0 & W_1^\top (T-2S) W_1-Q & W_1^\top (P+S)\\
        {W_2}^\top & (P+S)W_1 & -T-2P\\
    \end{bmatrix}\preceq 0,
    \end{align}
    where $(T,S,P)$ are given by equation (\ref{eq:TSP}) (equation (\ref{eq:TSP_Householder})), and 
   $Q:=\diag(\mu)$. Then  $| f_\theta (x) - f_\theta (y) | \leq \rho \| x-y \|_\infty \forall x,y\in\bbR^{n_0}$.
\end{theorem}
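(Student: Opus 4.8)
The plan is to derive the SDP condition (\ref{eq:LMIs_linfty_2layer}) as a Lagrangian-dual certificate for the optimization problem (\ref{eq:linftynorm2}), exactly paralleling the argument in \cite{wang2022quantitative} but with our GroupSort/Householder quadratic constraint from Lemma~\ref{lem:GroupSort}/\ref{lem:Householder} playing the role of the slope-restriction constraint. First I would recall that, by the chain of reductions already carried out in the excerpt, it suffices to show that the optimal value of (\ref{eq:linftynorm2}) is at most $\rho$: indeed $L_{\min}$ is upper bounded by the value of (\ref{eq:linftynorm2}), and $L_{\min}\le\rho$ gives $|f_\theta(x)-f_\theta(y)|\le\rho\|x-y\|_\infty$ for all $x,y$. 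So the whole task is to show feasibility of (\ref{eq:LMIs_linfty_2layer}) implies $\text{val}(\ref{eq:linftynorm2})\le\rho$.

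Next I would set up the S-procedure / weak-duality computation. Write $z=\begin{bsmallmatrix}\Delta x\\ \Delta v\end{bsmallmatrix}\in\bbR^{n_0+n_1}$. The objective $W_2\Delta v$ is linear in $z$; to homogenize it I would introduce the standard trick of bounding $W_2\Delta v$ against a quadratic form by completing the square, or equivalently work with the constraint $2\rho - 2 W_2\Delta v \ge \sum_j \mu_j((e_j^\top\Delta x)^2 - 1) - (\text{the GroupSort quadratic form})$ and verify it is implied by (\ref{eq:LMIs_linfty_2layer}). Concretely: the $(1,1)$ scalar block $\mathbf{1}_{n_0}^\top\mu - 2\rho$, together with the cross terms $W_2$ / $W_2^\top$ against a scalar "1" coordinate, encodes the affine term $2\rho - 2W_2\Delta v$ after a Schur-type expansion; the block $W_1^\top(T-2S)W_1 - Q$ with $Q=\diag(\mu)$ encodes the GroupSort quadratic form (pulled back through $W_1$ as in the passage from (\ref{eq:linftynorm}) to (\ref{eq:linftynorm2})) minus $\sum_j\mu_j(e_j^\top\Delta x)^2$; and the block $-T-2P$ plus the off-diagonal $W_1^\top(P+S)$ complete the GroupSort form. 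So multiplying the matrix in (\ref{eq:LMIs_linfty_2layer}) on both sides by $\begin{bsmallmatrix}1\\ \Delta x\\ \Delta v\end{bsmallmatrix}$ and using $\preceq 0$ yields, for every feasible $(\Delta x,\Delta v)$ of (\ref{eq:linftynorm2}),
\[
2\rho - 2W_2\Delta v + (\text{GroupSort form})\big|_{(W_1\Delta x,\Delta v)} + \sum_{j=1}^{n_0}\mu_j\big(1-(e_j^\top\Delta x)^2\big)\ \ge\ 0 .
\]
Since the GroupSort form is $\ge 0$ by Lemma~\ref{lem:GroupSort}/\ref{lem:Householder}, $\mu_j\ge 0$, and $(e_j^\top\Delta x)^2\le\|\Delta x\|_\infty^2\le 1$, every added term is nonnegative, so $W_2\Delta v\le\rho$; taking the max over feasible points gives $\text{val}(\ref{eq:linftynorm2})\le\rho$, which is what we needed. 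For the Householder case the argument is verbatim the same, only invoking Lemma~\ref{lem:Householder} in place of Lemma~\ref{lem:GroupSort}.

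The main obstacle I expect is purely bookkeeping: getting the block structure of (\ref{eq:LMIs_linfty_2layer}) to line up \emph{exactly} with the three quadratic pieces — the affine objective term, the $n_0$ box constraints $(e_j^\top\Delta x)^2\le 1$, and the activation constraint pulled back through $W_1$ — including the correct sign conventions and the factor-of-$2$ normalizations (note the "$2\rho$" and the removal of the absolute value via the scaling-invariance argument already recorded before the theorem). One must also be slightly careful that replacing $\|\Delta x\|_\infty = 1$ by $\|\Delta x\|_\infty\le 1$ only weakens the bound (it does, since the feasible set grows), and that homogenizing the linear objective via the extra coordinate "$1$" is legitimate — this is where the scalar block $\mathbf{1}_{n_0}^\top\mu - 2\rho$ absorbs the slack $\sum_j\mu_j$ from the box constraints. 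Once the identification of blocks is pinned down, the proof is a one-line S-procedure argument; the rest is the routine matrix algebra deferred to Appendix~\ref{app:proofs}.
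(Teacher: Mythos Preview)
Your approach is correct and essentially identical to the paper's: both left- and right-multiply (\ref{eq:LMIs_linfty_2layer}) by $\begin{bsmallmatrix}1\\ \Delta x\\ \Delta v\end{bsmallmatrix}$ and its transpose, then invoke Lemma~\ref{lem:GroupSort}/\ref{lem:Householder} together with $\mu\ge 0$ and $\|\Delta x\|_\infty\le 1$ to conclude $W_2\Delta v\le\rho$. One bookkeeping correction to your displayed inequality: the GroupSort quadratic form and the term $\sum_j\mu_j(1-(e_j^\top\Delta x)^2)$ must appear with the opposite sign, i.e.\ the multiplication yields $2\rho-2W_2\Delta v\ \ge\ (\text{GroupSort form})+\sum_j\mu_j(1-\delta_j^2)\ \ge\ 0$, which is precisely the inequality the paper records and from which your conclusion actually follows (as written, nonnegative terms \emph{added} to $2\rho-2W_2\Delta v$ would not force it to be nonnegative).
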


\paragraph{Multi-layer case.} The above analysis can easily be extended to address the multi-layer network case with GroupSort/Householder activations. Again, we adopt the setting in \cite{wang2022quantitative} and choose $n_l=1$. By slightly modifying the above analysis, we then obtain the following result.
\begin{theorem}\label{thm:linfty_multi}
    Consider the $\ell$-layer feedforward neural network (\ref{eq:FC}), where $\phi$ is GroupSort (Householder) with group size $n_g$ in all layers. Let $N=\frac{\sum_{i=1}^{l-1}n_i}{n_g}$ be the total number of groups. 
    Suppose there exist  $\rho>0$, $\gamma, \nu, \tau \in \bbR^N$, $\lambda\in \bbR_+^N$, and $\mu\in\bbR_+^{n_0}$
    such that  
    \begin{align}\label{eq:LMIs_linfty}
        M(\rho,T) = \begin{bmatrix}
            0 & A\\ 0 & B 
        \end{bmatrix}^\top
        \begin{bmatrix}
            T-2S & P+S\\
            P+S & -T-2P
        \end{bmatrix}
        \begin{bmatrix}
            0 & A\\ 0 & B 
        \end{bmatrix}+
        \begin{bmatrix}
             \mathbf{1}_{n_0}^\top \mu-2\rho & 0 & \dots & W_l\\
            0 & -Q & \dots & 0\\
            \vdots & \vdots & \ddots & \vdots\\
            {W_l}^\top & 0 & \dots & 0
        \end{bmatrix}\preceq 0,
    \end{align}
       where $(T,S,P)$ is given by equation (\ref{eq:TSP}) (equation (\ref{eq:TSP_Householder})), and 
   $Q:=\diag(\mu)$.
  Then  $| f_\theta (x) - f_\theta (y) | \leq \rho \| x-y \|_\infty\forall x,y\in\bbR^{n_0}$.
\end{theorem}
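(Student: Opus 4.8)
The plan is to replay the single-layer argument behind Theorem~\ref{thm:linfty} inside the compact ``lifted'' description $B\mathbf{x}=\phi(A\mathbf{x}+b)$ of the network used in the LipSDP review, so that one application of Lemma~\ref{lem:GroupSort} (resp.\ Lemma~\ref{lem:Householder}) simultaneously handles all hidden layers. First I would fix inputs $x,y\in\bbR^{n_0}$, let $x^i,y^i$ be the resulting activations, stack $\mathbf{x}=[{x^0}^\top\cdots{x^{l-1}}^\top]^\top$ and $\mathbf{y}$ analogously, and put $\Delta\mathbf{x}=\mathbf{x}-\mathbf{y}$. The block-diagonal concatenation of the per-layer GroupSort maps is again a GroupSort map on $\bbR^{\sum_{i=1}^{l-1}n_i}$ with $N$ groups of size $n_g$; moreover $A\Delta\mathbf{x}$ is exactly the stacked vector of preactivation differences and $B\Delta\mathbf{x}=\phi(A\mathbf{x}+b)-\phi(A\mathbf{y}+b)$ the stacked vector of postactivation differences (the bias $b$ cancels). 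Hence Lemma~\ref{lem:GroupSort} with the block-structured $(T,S,P)$ of (\ref{eq:TSP}) gives
\begin{equation*}
\begin{bmatrix}A\Delta\mathbf{x}\\ B\Delta\mathbf{x}\end{bmatrix}^\top\begin{bmatrix}T-2S & P+S\\ P+S & -T-2P\end{bmatrix}\begin{bmatrix}A\Delta\mathbf{x}\\ B\Delta\mathbf{x}\end{bmatrix}\ge 0 .
\end{equation*}

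Next, I would introduce a scalar homogenization variable $t\in\bbR$ and set $z=[\,t\ \ \Delta\mathbf{x}^\top\,]^\top$. Expanding $z^\top M(\rho,T)z$ from (\ref{eq:LMIs_linfty}) produces the quadratic form above --- hence a nonnegative, $t$-independent term --- plus the ``outer'' contribution $(\mathbf{1}_{n_0}^\top\mu-2\rho)\,t^2+2t\,W_l\Delta x^{l-1}-(\Delta x^0)^\top Q\,\Delta x^0$, where I use that the $W_l$/$W_l^\top$ corner blocks couple $t$ only with the last sub-block $\Delta x^{l-1}$, that $Q=\diag(\mu)$ multiplies only the input sub-block $\Delta x^0=x-y$, and that $W_l\Delta x^{l-1}=f_\theta(x)-f_\theta(y)=:\delta$ (again the bias $b_l$ cancels). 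Since $M\preceq0$ forces $z^\top Mz\le0$ for every $t$, subtracting off the nonnegative Lemma term yields $(\mathbf{1}_{n_0}^\top\mu-2\rho)t^2+2t\delta-(x-y)^\top Q(x-y)\le 0$ for all $t\in\bbR$. Taking $t\to\pm\infty$ gives $\mathbf{1}_{n_0}^\top\mu-2\rho\le0$; if $2\rho=\mathbf{1}_{n_0}^\top\mu$ the surviving linear-in-$t$ term forces $\delta=0$, and otherwise maximizing the concave parabola at $t^\star=\delta/(2\rho-\mathbf{1}_{n_0}^\top\mu)$ gives $\delta^2\le(2\rho-\mathbf{1}_{n_0}^\top\mu)\,(x-y)^\top Q(x-y)$. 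Finally, $(x-y)^\top Q(x-y)=\sum_j\mu_j(x_j-y_j)^2\le(\mathbf{1}_{n_0}^\top\mu)\|x-y\|_\infty^2$ together with the AM--GM estimate $(\mathbf{1}_{n_0}^\top\mu)(2\rho-\mathbf{1}_{n_0}^\top\mu)\le\rho^2$ yields $|f_\theta(x)-f_\theta(y)|=|\delta|\le\rho\|x-y\|_\infty$; as $x,y$ were arbitrary the bound follows. The Householder case is verbatim the same, invoking Lemma~\ref{lem:Householder} and (\ref{eq:TSP_Householder}) in place of Lemma~\ref{lem:GroupSort} and (\ref{eq:TSP}).

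I expect the algebraic bookkeeping of the lifted representation to be the main obstacle: one must carefully check that the zero column in $\bigl[\begin{smallmatrix}0&A\\0&B\end{smallmatrix}\bigr]$ isolates the homogenization variable $t$, that $\bigl[\begin{smallmatrix}0&A\\0&B\end{smallmatrix}\bigr]z=[A\Delta\mathbf{x};\,B\Delta\mathbf{x}]$ aligns the preactivation and postactivation differences of \emph{every} layer with the single block-diagonal quadratic constraint of Lemma~\ref{lem:GroupSort}, that the trajectory identity $B\mathbf{x}=\phi(A\mathbf{x}+b)$ is precisely what licenses substituting $B\Delta\mathbf{x}$ for $\phi(A\mathbf{x}+b)-\phi(A\mathbf{y}+b)$ inside that constraint, and that $Q$ touches only the input block. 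The remaining steps --- the S-procedure-style combination of $M\preceq0$ with the nonnegative quadratic constraint, the optimization over the homogenizing scalar, and the norm and AM--GM estimates --- are routine, and are exactly the multi-layer analogues of the manipulations already carried out between (\ref{eq:linftynorm}) and Theorem~\ref{thm:linfty}.
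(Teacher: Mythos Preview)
Your argument is correct, but it takes a genuinely different route from the paper. The paper multiplies $M$ by the \emph{fixed} test vector $\begin{bmatrix}1 & (\mathbf{x}-\mathbf{y})^\top\end{bmatrix}$, then restricts attention to inputs with $\|x-y\|_\infty\le 1$ so that $\sum_j\mu_j(1-\delta_j^2)\ge 0$; together with Lemma~\ref{lem:GroupSort} this forces the right-hand side nonnegative and yields the one-sided bound $\rho\ge W_l(x^{l-1}-y^{l-1})$. The conclusion $\rho\ge L_{\min}$ then comes from the scaling/duality discussion preceding Theorem~\ref{thm:linfty} (the passage from (\ref{eq:linftynorm}) to (\ref{eq:linftynorm2})), which the paper simply invokes by reference. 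You instead keep the homogenizing scalar $t$ free, observe that the Lemma term is $t$-independent, and optimize the residual concave quadratic in $t$; the resulting discriminant inequality $\delta^2\le(2\rho-\mathbf{1}^\top\mu)\,(x-y)^\top Q(x-y)$, combined with the elementary bounds $(x-y)^\top Q(x-y)\le(\mathbf{1}^\top\mu)\|x-y\|_\infty^2$ and $(\mathbf{1}^\top\mu)(2\rho-\mathbf{1}^\top\mu)\le\rho^2$, delivers $|\delta|\le\rho\|x-y\|_\infty$ directly for \emph{arbitrary} $x,y$. Your approach is thus entirely self-contained---it never appeals to the scaling-invariance reformulation or the intermediate optimization problem (\ref{eq:linftynorm2})---at the modest cost of the extra AM--GM step. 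The paper's approach is shorter once one grants that preliminary machinery, and makes the ``dual program'' interpretation of (\ref{eq:LMIs_linfty}) more transparent.
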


Minimizing $\rho$ subject to the constraint (\ref{eq:LMIs_linfty}) yields a SDP problem. For large-scale implementations, we set $S=P=0$ and implement the SDP as specified in Appendix~\ref{app:practical_implementation_linfty} to solve it more efficiently. Based on preliminary testing, this implementation does not introduce conservatism over the original SDP formulation with non-zero $(S,P)$.

\subsection{Further generalizations: Residual networks and implicit models}\label{sec:ResNets}
In this section, we briefly discuss how our new quadratic constraints in Lemmas \ref{lem:GroupSort} and \ref{lem:Householder} can be used to generalize various existing SDPs to handle residual networks and implicit learning models with MaxMin, GroupSort, and Householder activations. 
Due to space limitations, we keep our discussion brief. More details can be found in Appendix~\ref{app:generalizations}. 

\vspace{-0.1cm}
\paragraph{Residual neural networks.}
We consider a residual~neural network
\begin{equation}\label{eq:alt_res_layer}
    x^0 = x,\quad x^i= x^{i-1}+G_{i}\phi(W_{i} x^{i-1}+b_{i})\quad i=1,\ldots,l,\quad f_\theta(x) = x^l.
\end{equation}
The single-layer case is included in Appendix \ref{app:Res_single_layer}. Suppose $\phi$ is GroupSort/Householder with the same group size $n_g$ for all layers. For simplicity, we define the matrices $\tilde{A}$ and $\tilde{B}$ as
\begin{align*}
        \widetilde{A}=
    \begin{bmatrix}
        W_1 & 0 & \dots & 0 & 0\\
        W_2 & W_2 G_1 & \dots & 0 & 0\\
        \vdots & \vdots & \ddots & \vdots & \vdots\\
        W_{l} & W_{l} G_1 & \dots & W_{l} G_{l-1} & 0
    \end{bmatrix},\quad  \widetilde{B}=
    \begin{bmatrix}
        0 & I_{n_1} & 0 & \dots & 0\\
        0 & 0 & I_{n_2} & \dots & 0\\
        \vdots & \vdots & \vdots & \ddots & \vdots\\
        0 & 0 & 0 & \dots & I_{n_l}
    \end{bmatrix}
\end{align*}
to compactly describe the model (\ref{eq:alt_res_layer}) as  $\widetilde{B}x=\phi(\widetilde{A}x+b)$. Then we obtain the following SDP result. 
\begin{theorem} \label{th:res2}
        Consider the multi-layer residual network~(\ref{eq:alt_res_layer}), where $\phi$ is GroupSort (Householder) with the same group size $n_g$ for all layers. Define $N=\frac{\sum_{i=1}^{l}n_i}{n_g}$.   Suppose there exist $\rho > 0$,  $\gamma, \nu, \tau \in \bbR^N$ and $\lambda \in \bbR_+^N$ such that
    \begin{equation}\label{eq:LMIs_res}
    \begin{bmatrix}
    \widetilde{A}\\
    \widetilde{B}
    \end{bmatrix}^\top
        \begin{bmatrix}
            T-2S & P+S\\
            P+S & -T-2P
        \end{bmatrix}
         \begin{bmatrix}
        \widetilde{A} \\
        \widetilde{B}
    \end{bmatrix}
    +
    \begin{bmatrix}
        (1-\rho) I_{n_0} & G_1 &\dots & G_{l}\\
        G_1^\top & G_1^\top G_1 & \dots & G_1^\top G_{l}\\
        \vdots & \vdots & \ddots & \vdots\\
        G_{l}^\top & G_{l}^\top G_1 & \dots & G_{l}^\top G_{l}
    \end{bmatrix}
        \preceq 0,
    \end{equation}
   where $(T,S,P)$ are given by equation (\ref{eq:TSP}) (equation (\ref{eq:TSP_Householder})). Then $f_\theta$ is $\sqrt{\rho}$-Lipschitz in the $\ell_2\rightarrow \ell_2$ sense.
\end{theorem}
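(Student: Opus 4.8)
The plan is to run the standard quadratic-constraint argument, exactly as for Theorem~\ref{thm:FC}, after first unrolling the residual recursion. Writing $z^i:=\phi(W_i x^{i-1}+b_i)$ for the $i$-th activation output along the forward pass on an input $x$, the update $x^i=x^{i-1}+G_i z^i$ telescopes to $x^i=x^0+\sum_{j=1}^{i}G_j z^j$, hence $W_i x^{i-1}+b_i=W_i x^0+\sum_{j=1}^{i-1}W_i G_j z^j+b_i$. Stacking $\xi:=[\,{x^0}^\top\ {z^1}^\top\ \cdots\ {z^l}^\top\,]^\top$ and $b:=[\,b_1^\top\ \cdots\ b_l^\top\,]^\top$, these identities are precisely the compact form $\widetilde{B}\xi=\phi(\widetilde{A}\xi+b)$ asserted in the excerpt, which I would confirm by reading off the block rows of $\widetilde{A}$ and $\widetilde{B}$.

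The one substantive point is that the map taking the stacked preactivation $\widetilde{A}\xi+b\in\bbR^{\sum_{i=1}^{l}n_i}$ to $\phi(\widetilde{A}\xi+b)$ is itself a GroupSort (resp.\ Householder) activation with the \emph{same} group size $n_g$, just with $N=\frac{\sum_{i=1}^{l}n_i}{n_g}$ groups rather than a single layer's worth; consequently Lemma~\ref{lem:GroupSort} (resp.\ Lemma~\ref{lem:Householder}) applies to it verbatim, with $(T,S,P)$ taken to be the Kronecker objects in (\ref{eq:TSP}) (resp.\ (\ref{eq:TSP_Householder})) over all $N$ groups. Fixing arbitrary $x,y\in\bbR^{n_0}$ and letting $\xi,\eta$ be the stacked vectors produced by the two forward passes, I would invoke the lemma with $u=\widetilde{A}\xi+b$ and $v=\widetilde{A}\eta+b$, so that $u-v=\widetilde{A}(\xi-\eta)$ and $\phi(u)-\phi(v)=\widetilde{B}\xi-\widetilde{B}\eta=\widetilde{B}(\xi-\eta)$, obtaining
\[
(\xi-\eta)^\top\begin{bmatrix}\widetilde{A}\\ \widetilde{B}\end{bmatrix}^\top\begin{bmatrix}T-2S & P+S\\ P+S & -T-2P\end{bmatrix}\begin{bmatrix}\widetilde{A}\\ \widetilde{B}\end{bmatrix}(\xi-\eta)\ \ge\ 0 .
\]

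Next I would multiply the hypothesised matrix inequality (\ref{eq:LMIs_res}) on the left by $(\xi-\eta)^\top$ and on the right by $(\xi-\eta)$. Since the first summand of the left-hand side of (\ref{eq:LMIs_res}) is exactly the nonnegative scalar above, it can be discarded, leaving
\[
(\xi-\eta)^\top\begin{bmatrix}(1-\rho)I_{n_0} & G_1 & \cdots & G_l\\ G_1^\top & G_1^\top G_1 & \cdots & G_1^\top G_l\\ \vdots & \vdots & \ddots & \vdots\\ G_l^\top & G_l^\top G_1 & \cdots & G_l^\top G_l\end{bmatrix}(\xi-\eta)\ \le\ 0 .
\]
I would then recognise this block matrix as $C^\top C-\rho\,\blkdiag(I_{n_0},0,\dots,0)$ with $C:=[\,I_{n_0}\ G_1\ \cdots\ G_l\,]$. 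By the telescoping identity, $C\xi$ recovers the network output on $x$, i.e.\ $C\xi=x^0+\sum_{j=1}^{l}G_j z^j=x^l=f_\theta(x)$, and likewise $C\eta=f_\theta(y)$, while $(\xi-\eta)^\top\blkdiag(I_{n_0},0,\dots,0)(\xi-\eta)=\|x^0-y^0\|_2^2=\|x-y\|_2^2$. The last display therefore reads $\|f_\theta(x)-f_\theta(y)\|_2^2\le\rho\|x-y\|_2^2$, giving $\sqrt{\rho}$-Lipschitzness, as claimed.

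The telescoping of the residual updates and the final congruence-and-discard step are routine bookkeeping; the step most worth spelling out — and essentially the only place an error could creep in — is the claim that the concatenation of the layerwise GroupSort/Householder activations is again one activation of the same type over all $N=\frac{\sum_{i=1}^{l}n_i}{n_g}$ groups, so that Lemmas~\ref{lem:GroupSort}/\ref{lem:Householder} transfer without modification, together with checking that the chosen $u,v$ genuinely satisfy $u-v=\widetilde{A}(\xi-\eta)$ and $\phi(u)-\phi(v)=\widetilde{B}(\xi-\eta)$ via the compact form $\widetilde{B}\xi=\phi(\widetilde{A}\xi+b)$. Everything after that is identical in spirit to the proofs of Theorem~\ref{thm:FC} and its LipSDP predecessor.
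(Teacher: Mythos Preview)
Your proposal is correct and follows essentially the same approach as the paper's own proof: unroll the residual recursion to obtain the compact form $\widetilde{B}\xi=\phi(\widetilde{A}\xi+b)$, congruence-multiply the LMI by the stacked difference vector, invoke Lemma~\ref{lem:GroupSort}/\ref{lem:Householder} to discard the quadratic-constraint term, and read off the Lipschitz inequality from the remaining block. Your explicit identification of the second matrix as $C^\top C-\rho\,\blkdiag(I_{n_0},0,\dots,0)$ with $C=[\,I_{n_0}\ G_1\ \cdots\ G_l\,]$ is exactly the bookkeeping the paper performs componentwise.
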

From empirical studies we know that, when implementing the SDP, we can either set $S=0$ or $P=0$ (but not both), yielding the same SDP solution as with non-zero $(S,P)$.

\textbf{Implicit learning models.} With our proposed quadratic constraints, we can also formulate SDPs for the Lipschitz analysis of implicit models with GroupSort/Householder activations. When the activation function is slope-restricted on $[0,1]$, SDPs can be formulated for the Lipschitz analysis of both deep equilibrium models (DEQs) \citep{bai2019deep} and neural ordinary differential equations (Neural ODEs) \citep{chen2018neural}. 
When GroupSort/Householder is used, we can apply our proposed quadratic constraints in Lemma \ref{lem:GroupSort}/\ref{lem:Householder} to formulate similar conditions. For example, a variant of LipSDP has been formulated in \cite[Theorem 2]{revay2020lipschitz} to address the Lipschitz bounds of DEQs with $[0,1]$-slope-restricted activations based on the quadratic constraint in \cite[Lemma 1]{fazlyab2019efficient}. 
We can easily modify the result in \cite[Theorem 2]{revay2020lipschitz} to obtain a new SPDs for DEQs with GroupSort/Householder activations via replacing the slope-restricted quadratic constraints with (\ref{eq:iQC_GroupSort}) in Lemma \ref{lem:GroupSort}/\ref{lem:Householder}.  Similar results can also be obtained for neural ODEs. Due to the space limit, we defer the detailed discussion of such extensions to Appendices~\ref{app:DEQ} and \ref{app:neuralODE}. 

\section{Numerical Experiments}
\label{label:experiments}
\begin{table}[t]
  \centering
  \caption{This table presents the results of our LipSDP-NSR method on several feed-forward neural networks with MaxMin activations trained on MNIST with more than 96\% accuracy, using $\ell_2$ adversarial training. We compare against several metrics which are described in detail in Section~\ref{label:experiments}. 
  The computation time in seconds is shown in parentheses and included in Appendix \ref{app:tradeoff} for the naive bounds. Similar to \cite{wang2022quantitative}, we report the $\ell_\infty$ bounds with respect to label 8 to consider scalar outputs.}
    \vspace{-0.1cm}
    \resizebox{\textwidth}{!}{%
    \begin{tabular}{lrrrrrrrrrrrrrrrrrr}
    \toprule
    \multicolumn{3}{c}{\multirow{2}[2]{*}{\makecell{\textbf{Model} \\ (units per layer)}}}  & \multicolumn{7}{c}{\boldmath{}\textbf{$\ell_2$}} & & \multicolumn{8}{c}{\boldmath{}\textbf{$\ell_\infty$}} \\
    \cmidrule{4-10} \cmidrule{12-19}
    & & & \textbf{Sample} & \textbf{FGL} & \multicolumn{2}{c}{\textbf{LipSDP-NSR}} & \multicolumn{2}{c}{\textbf{LipSDP-RR}} &  \textbf{MP}  & & \textbf{Sample} & \textbf{FGL} & \multicolumn{2}{c}{\textbf{LipSDP-NSR}} & \multicolumn{2}{c}{\textbf{LipSDP-RR}} & \textbf{Norm Eq.} & \textbf{MP} \\
    \midrule
    \multirow{4}{*}{2-layer} 
     & 16 units  & & 20.98 & 22.02 & 22.59 & (54)  & 23.92 & (18) & 23.46 & & 142.42 & 153.22 & 202.82 & (94) & 207.3 & (34) & 632.52 &  543 \\
     & 32 units  & & 34.64 & 37.75 & 38.55 & (86)  & 40.28 & (24) & 42.46 & & 307.95 & 318.91 & 368.68 & (128) & 404.8 & (44) & 1079.4 & 1140 \\
     & 64 units  & & 63.35 & -     & 73.77 & (160) & 74.31 & (23) & 83.98 & & 653.07 & -      & 885.89 & (147) & 870.8 & (38) & 2065.6 & 3572 \\
     & 128 units & & 55.44 & -     & 70.87 & (182) & 72.18 & (55) & 80.02 & & 667.55 & -      & 1058.8 & (236) & 1033.1 & (77) & 1984.4 & 4889 \\
     \midrule
     \multirow{2}{*}{5-layer} 
     & 32 units  & & 106.35 & - & 224.53 & (53) & 388.0 & (69) & 298.41 & & 932.89  & - & 2969.5 & (96) & 4283.1 & (137) & 6286.8  &  522712 \\
     & 64 units  & & 232.21 & - & 386.81 & (360) & 623.4 & (228) & 520.73 & & 2187.32 & - & 6379.8 & (190) & 8321.3 & (416) & 10831 & 1337103 \\
    \midrule
    \multirow{4}{*}{8-layer} 
     & 32 units  & & 113.34 & - & 401.03 & (58) & 1078.5 & (178) & 780  & &  873.65 & - & 4303.9 & (120) & 8131.1 & (374) & 11229 & 2.01E+07 \\
     & 64 units  & & 257.63 & - & 722.34 & (116) & 1683.7 & (1426) & 1241 & & 2147.64 & - & 9632.2 & (188) & 1.886E+04 & (2495) & 20226 & 1.22E+08 \\
     & 128 units & & 258.85 & - & 714.14 & (219) & -& & 1167 & & 2496.75 & - & 10164 & (318) & -& & 19996 & 3.35E+08 \\
     & 256 units & & 207.08 & - & 642.07 & (612) & -& & 934 & & 1845.13 & - & 9256.3 & (769) & -& & 17978 & 1.13E+09 \\
    \midrule
    \multirow{3}{*}{18-layer}
     & 32 units  & &  267.32 & - & 17432 & (64) & -& & 73405 & & 1693.49 & - & 1.7599E+05 & (78) & -& & 4.881E+05 & 2.23E+13 \\
     & 64 units  & &  523.85 & - & 22842 & (129) & -& & 84384 & & 2037.21 & - & 1.9367E+05 & (150) & -& & 6.3958E+05 & 1.00E+15 \\
     & 128 units & &  536.04 & - & 24271 & (300) & -& & 74167 & &  685.88 & - & 1.9689E+05 & (331) & -& &6.7959E+05 & 2.79E+16  \\
    \bottomrule 
    \end{tabular}%
    }
  \label{table:main_results}%
    \vspace{-0.1cm}
\end{table}%

In this section, we present numerical results demonstrating the effectiveness of our proposed SDP, whose solution we denote by LipSDP-NSR. Following the same experimental setting as \cite{wang2022quantitative}, we evaluate our approach on several MaxMin neural networks trained on the MNIST dataset~\citep{mnist} and compare LipSDP-NSR to four other measurements: 
\begin{itemize}[parsep=0pt,topsep=2pt,leftmargin=10pt]
  \item \textbf{Sample}: The sampling approach computes the norm of the gradient (or the Jacobian) on 200k random samples from the input space. This measurement is easy to compute and provides a lower bound on the true Lipschitz constant.
  \item \textbf{Formal global Lipschitz (FGL)}: 
This metric is computed by iterating over all (including infeasible) activation patterns. Despite being an upper bound on the true Lipschitz constant, this value is accurate. However, it can only be computed for small networks as it is an exponential-time search. We provide computation details for FGL on GroupSort networks below.
  \item \textbf{LipSDP-RR} We formulate MaxMin activations as residual ReLU networks (\ref{eq:ResReLU}) and solve an SDP using slope-restriction quadratic constraints.
  \item \textbf{Equivalence of Norms (Norm Eq.)}: This bound on the $\ell_\infty$-Lipschitz bound is obtained using the equivalence of norms, i.\,e., $\sqrt{n_0} \cdot \ell_2\text{-LipSDP-NSR}$.
  \item \textbf{Matrix-Product (MP)}: Naive upper bound of the product of the norm of the weight, cmp. (\ref{eq:product}).
\end{itemize}

\paragraph{Implementation details of FGL.} As shown in~\cite{virmaux2018lipschitz}, based on Rademacher's theorem, if $f$ is Lipschitz continuous, then $f$ is is almost everywhere differentiable, and $L_f = \sup_{x} \norm{\nabla f(x)}_q$.
From this result and based on the chain rule, we can write the Lipschitz constant of the network $f$ as 
\begin{equation} 
  L_f = \sup_{x} \norm{W_l \nabla \phi(f_{l-1}(x)) W_{l-1} \cdots \nabla \phi(f_{1}(x)) W_1}_q,
\end{equation}
where $\phi$ is the GroupSort activation with group size $n_g$ and $f_i(x)=W_ix^{i-1}+b_i$, $i=1,\dots,l-1$.
Given that GroupSort is simply a rearrangement of the values of the activation, the Jacobian $\nabla \phi(f(\cdot))$ is a permutation matrix.
Let $\mathcal{P}_{n_g}$ be the set of permutation matrices of size $n_g \times n_g$, $N = \frac{n}{n_g}$ and let us define the set of GroupSort permutations:  
\begin{equation}
  \widetilde{\mathcal{P}}_{n_g} = \left\{ \blkdiag(P_1, \dots, P_N) \mid P_1 \dots, P_N \in \mathcal{P}_{n_g} \right\} .
\end{equation}
Following the same reasoning as~\cite{wang2022quantitative} and \cite{virmaux2018lipschitz}, we can upper bound $L_f$ as follows:
\begin{equation}
  L_f \leq \max_{\widetilde{P}_1, \dots , \widetilde{P}_{l-1} \in \widetilde{\mathcal{P}}_{n_g}} \norm{W_l \widetilde{P}_{l-1} W_{l-1} \cdots \widetilde{P}_1 W_1}_q \quad := \text{FGL}.
\end{equation}

\paragraph{Implementation details of LipSDP-NSR \& discussion of the results.} We solve problem (\ref{eq:SDP}) and an SDP that minimizes $\rho$ subject to (\ref{eq:LMIs_linfty}), using YALMIP \citep{Lofberg2004} with the solver Mosek \citep{mosek} in Matlab on a standard i7 note book. Our results and the other bounds are summarized in Table \ref{table:main_results}. For all networks, LipSDP-NSR clearly outperforms MP and the equivalence of norms bound. More detailed discussions are given in Appendices~\ref{app:practical_implementation} and \ref{app:additional_numerics}.

\paragraph{Limitations.} SDP-based Lipschitz bounds are the best bounds that can be computed in polynomial time. One limitation is their scalability to large NNs that has been tackled by structure exploiting approaches \citep{newton2022sparse,xue2022chordal,pauli2022lipschitz}, which we incorporated in our implementation. Please refer to Appendix~\ref{app:practical_implementation} for details. In addition, 
we see potential that new SDP solvers or tailored SDP solvers \citep{parrilo00,gramlich2023structure} will mitigate this limitation in the future. 
Finally, the concurrent work in \cite{WangICLR2024} has developed the exact-penalty form of LipSDP for the slope-restricted activation case, enabling the use of first-order optimization methods for scaling up the computation. We believe that the argument in \cite{WangICLR2024} can be modified to improve the scalability of our proposed SDP conditions.

\vspace{-0.1cm}
\section{Conclusion}
\vspace{-0.1cm}
We proposed a SDP-based method for computing Lipschitz bounds of neural networks with MaxMin, GroupSort and Householder activations with respect to $\ell_2$- and $\ell_\infty$-perturbations. To this end, we derived novel quadratic constraints for GroupSort and Householder activations and set up SDP conditions that generalize LipSDP and its variants beyond the original slope-restricted activation setting. Our analysis method covers a large family of network architectures, including residual, non-residual, and implicit models, bridging the gap between SDP-based Lipschitz analysis and this important new class of activation functions. Future research includes finding specific quadratic constraints for other general activation functions and nonlinear parts in the neural network.


\section*{Acknowledgements}
P. Pauli and F. Allg\"ower  are funded by Deutsche Forschungsgemeinschaft (DFG, German Research Foundation) under Germany's Excellence Strategy - EXC 2075 - 390740016 and under grant 468094890.  A. Havens
and B. Hu are generously supported by the NSF award CAREER-2048168, the AFOSR award FA9550-23-1-0732, and the IBM/IIDAI award 110662-01. A. Araujo, S. Garg, and F. Khorrami are supported in part by the Army Research Office under grant number W911NF-21-1-0155 and by the New York University Abu Dhabi (NYUAD) Center for Artificial Intelligence and Robotics, funded by Tamkeen under the NYUAD Research Institute Award CG010.
P.~Pauli also acknowledges the support by the Stuttgart Center for Simulation Science (SimTech) and the support by the International Max Planck Research School for Intelligent Systems (IMPRS-IS).

\bibliographystyle{iclr2024_conference}
\bibliography{references}

\begin{thebibliography}{66}
\providecommand{\natexlab}[1]{#1}
\providecommand{\url}[1]{\texttt{#1}}
\expandafter\ifx\csname urlstyle\endcsname\relax
  \providecommand{\doi}[1]{doi: #1}\else
  \providecommand{\doi}{doi: \begingroup \urlstyle{rm}\Url}\fi

\bibitem[Anil et~al.(2019)Anil, Lucas, and Grosse]{anil2019sorting}
Cem Anil, James Lucas, and Roger Grosse.
\newblock Sorting out {L}ipschitz function approximation.
\newblock In \emph{International Conference on Machine Learning}, pp.\
  291--301. PMLR, 2019.

\bibitem[Araujo et~al.(2023)Araujo, Havens, Delattre, Allauzen, and
  Hu]{araujoICLR2023}
Alexandre Araujo, Aaron~J Havens, Blaise Delattre, Alexandre Allauzen, and Bin
  Hu.
\newblock A unified algebraic perspective on {L}ipschitz neural networks.
\newblock In \emph{International Conference on Learning Representations}, 2023.

\bibitem[Aswani et~al.(2013)Aswani, Gonzalez, Sastry, and
  Tomlin]{aswani2013provably}
Anil Aswani, Humberto Gonzalez, S~Shankar Sastry, and Claire Tomlin.
\newblock Provably safe and robust learning-based model predictive control.
\newblock \emph{Automatica}, 49\penalty0 (5):\penalty0 1216--1226, 2013.

\bibitem[Bai et~al.(2019)Bai, Kolter, and Koltun]{bai2019deep}
Shaojie Bai, J~Zico Kolter, and Vladlen Koltun.
\newblock Deep equilibrium models.
\newblock \emph{Advances in Neural Information Processing Systems}, 32, 2019.

\bibitem[Bartlett et~al.(2017)Bartlett, Foster, and
  Telgarsky]{bartlett2017spectrally}
Peter~L Bartlett, Dylan~J Foster, and Matus~J Telgarsky.
\newblock Spectrally-normalized margin bounds for neural networks.
\newblock In \emph{Advances in Neural Information Processing Systems}, 2017.

\bibitem[Berkenkamp et~al.(2017)Berkenkamp, Turchetta, Schoellig, and
  Krause]{berkenkamp2017safe}
Felix Berkenkamp, Matteo Turchetta, Angela Schoellig, and Andreas Krause.
\newblock Safe model-based reinforcement learning with stability guarantees.
\newblock In \emph{Advances in Neural Information Processing Systems}, pp.\
  908--918, 2017.

\bibitem[Brunke et~al.(2022)Brunke, Greeff, Hall, Yuan, Zhou, Panerati, and
  Schoellig]{brunke2022safe}
Lukas Brunke, Melissa Greeff, Adam~W Hall, Zhaocong Yuan, Siqi Zhou, Jacopo
  Panerati, and Angela~P Schoellig.
\newblock Safe learning in robotics: From learning-based control to safe
  reinforcement learning.
\newblock \emph{Annual Review of Control, Robotics, and Autonomous Systems},
  5:\penalty0 411--444, 2022.

\bibitem[Carrasco et~al.(2016)Carrasco, Turner, and Heath]{carrasco2016zames}
Joaquin Carrasco, Matthew~C Turner, and William~P Heath.
\newblock Zames--falb multipliers for absolute stability: From o' shea' s
  contribution to convex searches.
\newblock \emph{European Journal of Control}, 28:\penalty0 1--19, 2016.

\bibitem[Chen et~al.(2018)Chen, Rubanova, Bettencourt, and
  Duvenaud]{chen2018neural}
Ricky~TQ Chen, Yulia Rubanova, Jesse Bettencourt, and David~K Duvenaud.
\newblock Neural ordinary differential equations.
\newblock \emph{Advances in neural information processing systems}, 31, 2018.

\bibitem[Chen et~al.(2020)Chen, Lasserre, Magron, and
  Pauwels]{chen2020semialgebraic}
Tong Chen, Jean~B Lasserre, Victor Magron, and Edouard Pauwels.
\newblock Semialgebraic optimization for {L}ipschitz constants of relu
  networks.
\newblock \emph{Advances in Neural Information Processing Systems},
  33:\penalty0 19189--19200, 2020.

\bibitem[Cohen et~al.(2019)Cohen, Huster, and Cohen]{cohen2019universal}
Jeremy~EJ Cohen, Todd Huster, and Ra~Cohen.
\newblock Universal {L}ipschitz approximation in bounded depth neural networks.
\newblock \emph{arXiv preprint arXiv:1904.04861}, 2019.

\bibitem[Combettes \& Pesquet(2019)Combettes and
  Pesquet]{combettes2019lipschitz}
Patrick~L Combettes and Jean-Christophe Pesquet.
\newblock Lipschitz certificates for neural network structures driven by
  averaged activation operators.
\newblock \emph{arXiv}, 2019, 2019.

\bibitem[Fabiani \& Goulart(2022)Fabiani and Goulart]{fabiani2022neural}
Filippo Fabiani and Paul~J Goulart.
\newblock Neural network controllers for uncertain linear systems.
\newblock \emph{arXiv preprint arXiv:2204.13209}, 2022.

\bibitem[Farnia et~al.(2019)Farnia, Zhang, and Tse]{farnia2018generalizable}
Farzan Farnia, Jesse Zhang, and David Tse.
\newblock Generalizable adversarial training via spectral normalization.
\newblock In \emph{International Conference on Learning Representations}, 2019.

\bibitem[Fazlyab et~al.(2019)Fazlyab, Robey, Hassani, Morari, and
  Pappas]{fazlyab2019efficient}
Mahyar Fazlyab, Alexander Robey, Hamed Hassani, Manfred Morari, and George
  Pappas.
\newblock Efficient and accurate estimation of {L}ipschitz constants for deep
  neural networks.
\newblock 2019.

\bibitem[Fazlyab et~al.(2020)Fazlyab, Morari, and Pappas]{fazlyab2020safety}
Mahyar Fazlyab, Manfred Morari, and George~J Pappas.
\newblock Safety verification and robustness analysis of neural networks via
  quadratic constraints and semidefinite programming.
\newblock \emph{IEEE Transactions on Automatic Control}, 2020.

\bibitem[Gramlich et~al.(2023{\natexlab{a}})Gramlich, Holicki, Scherer, and
  Ebenbauer]{gramlich2023structure}
Dennis Gramlich, Tobias Holicki, Carsten~W. Scherer, and Christian Ebenbauer.
\newblock A structure exploiting sdp solver for robust controller synthesis.
\newblock \emph{IEEE Control Systems Letters}, 7:\penalty0 1831--1836,
  2023{\natexlab{a}}.
\newblock \doi{10.1109/LCSYS.2023.3277314}.

\bibitem[Gramlich et~al.(2023{\natexlab{b}})Gramlich, Pauli, Scherer,
  Allg{\"o}wer, and Ebenbauer]{gramlich2023convolutional}
Dennis Gramlich, Patricia Pauli, Carsten~W Scherer, Frank Allg{\"o}wer, and
  Christian Ebenbauer.
\newblock Convolutional neural networks as {2-D} systems.
\newblock \emph{arXiv preprint arXiv:2303.03042}, 2023{\natexlab{b}}.

\bibitem[Hashemi et~al.(2021)Hashemi, Ruths, and
  Fazlyab]{hashemi2021certifying}
Navid Hashemi, Justin Ruths, and Mahyar Fazlyab.
\newblock Certifying incremental quadratic constraints for neural networks via
  convex optimization.
\newblock In \emph{Learning for Dynamics and Control}, pp.\  842--853. PMLR,
  2021.

\bibitem[Havens et~al.(2023)Havens, Araujo, Garg, Khorrami, and
  Hu]{havens2023exploiting}
Aaron~J Havens, Alexandre Araujo, Siddharth Garg, Farshad Khorrami, and Bin Hu.
\newblock Exploiting connections between {L}ipschitz structures for certifiably
  robust deep equilibrium models.
\newblock In \emph{Advances in Neural Information Processing Systems}, 2023.

\bibitem[Hein \& Andriushchenko(2017)Hein and Andriushchenko]{hein2017formal}
Matthias Hein and Maksym Andriushchenko.
\newblock Formal guarantees on the robustness of a classifier against
  adversarial manipulation.
\newblock \emph{Advances in Neural Information Processing Systems}, 30, 2017.

\bibitem[Hu et~al.(2023{\natexlab{a}})Hu, Leino, Wang, and
  Fredrikson]{hu2023recipe}
Kai Hu, Klas Leino, Zifan Wang, and Matt Fredrikson.
\newblock A recipe for improved certifiable robustness: Capacity and data.
\newblock \emph{arXiv preprint arXiv:2310.02513}, 2023{\natexlab{a}}.

\bibitem[Hu et~al.(2023{\natexlab{b}})Hu, Zou, Wang, Leino, and
  Fredrikson]{hu2023scaling}
Kai Hu, Andy Zou, Zifan Wang, Klas Leino, and Matt Fredrikson.
\newblock Unlocking deterministic robustness certification on imagenet.
\newblock In \emph{Thirty-seventh Conference on Neural Information Processing
  Systems}, 2023{\natexlab{b}}.
\newblock URL \url{https://openreview.net/forum?id=SHyVaWGTO4}.

\bibitem[Huang et~al.(2021)Huang, Zhang, Shi, Kolter, and
  Anandkumar]{huang2021training}
Yujia Huang, Huan Zhang, Yuanyuan Shi, J~Zico Kolter, and Anima Anandkumar.
\newblock Training certifiably robust neural networks with efficient local
  {L}ipschitz bounds.
\newblock \emph{Advances in Neural Information Processing Systems},
  34:\penalty0 22745--22757, 2021.

\bibitem[Jin \& Lavaei(2020)Jin and Lavaei]{jin2020stability}
Ming Jin and Javad Lavaei.
\newblock Stability-certified reinforcement learning: A control-theoretic
  perspective.
\newblock \emph{IEEE Access}, 8:\penalty0 229086--229100, 2020.

\bibitem[Jordan \& Dimakis(2020)Jordan and Dimakis]{jordan2020exactly}
Matt Jordan and Alexandros~G Dimakis.
\newblock Exactly computing the local {L}ipschitz constant of {ReLU} networks.
\newblock In \emph{Advances in Neural Information Processing Systems}, pp.\
  7344--7353, 2020.

\bibitem[Kao \& Rantzer(2007)Kao and Rantzer]{kao2007stability}
Chung-Yao Kao and Anders Rantzer.
\newblock Stability analysis of systems with uncertain time-varying delays.
\newblock \emph{Automatica}, 43\penalty0 (6):\penalty0 959--970, 2007.

\bibitem[Latorre et~al.(2020)Latorre, Rolland, and
  Cevher]{latorre2020lipschitz}
Fabian Latorre, Paul Rolland, and Volkan Cevher.
\newblock Lipschitz constant estimation of neural networks via sparse
  polynomial optimization.
\newblock In \emph{International Conference on Learning Representations}, 2020.

\bibitem[LeCun \& Cortes(2010)LeCun and Cortes]{mnist}
Yann LeCun and Corinna Cortes.
\newblock {MNIST} handwritten digit database.
\newblock 2010.

\bibitem[Lee et~al.(2020)Lee, Lee, and Park]{lee2020lipschitz}
Sungyoon Lee, Jaewook Lee, and Saerom Park.
\newblock Lipschitz-certifiable training with a tight outer bound.
\newblock In \emph{Advances in Neural Information Processing Systems}, pp.\
  16891--16902, 2020.

\bibitem[Leino et~al.(2021)Leino, Wang, and Fredrikson]{leino2021globally}
Klas Leino, Zifan Wang, and Matt Fredrikson.
\newblock Globally-robust neural networks.
\newblock In \emph{International Conference on Machine Learning}, 2021.

\bibitem[Li et~al.(2019)Li, Haque, Anil, Lucas, Grosse, and
  Jacobsen]{li2019preventing}
Qiyang Li, Saminul Haque, Cem Anil, James Lucas, Roger~B Grosse, and
  Joern-Henrik Jacobsen.
\newblock Preventing gradient attenuation in {L}ipschitz constrained
  convolutional networks.
\newblock In \emph{Advances in Neural Information Processing Systems}, 2019.

\bibitem[L{\"{o}}fberg(2004)]{Lofberg2004}
J.~L{\"{o}}fberg.
\newblock Yalmip : A toolbox for modeling and optimization in {MATLAB}.
\newblock In \emph{Proceedings of the CACSD Conference}, Taipei, Taiwan, 2004.

\bibitem[Losch et~al.(2023)Losch, Stutz, Schiele, and
  Fritz]{losch2023certified}
Max Losch, David Stutz, Bernt Schiele, and Mario Fritz.
\newblock Certified robust models with slack control and large {L}ipschitz
  constants.
\newblock \emph{arXiv preprint arXiv:2309.06166}, 2023.

\bibitem[Megretski \& Rantzer(1997)Megretski and Rantzer]{megretski1997system}
Alexandre Megretski and Anders Rantzer.
\newblock System analysis via integral quadratic constraints.
\newblock \emph{IEEE Transactions on Automatic Control}, 42\penalty0
  (6):\penalty0 819--830, 1997.

\bibitem[Meunier et~al.(2022)Meunier, Delattre, Araujo, and
  Allauzen]{meunier2022dynamical}
Laurent Meunier, Blaise~J Delattre, Alexandre Araujo, and Alexandre Allauzen.
\newblock A dynamical system perspective for {L}ipschitz neural networks.
\newblock In \emph{International Conference on Machine Learning}, pp.\
  15484--15500. PMLR, 2022.

\bibitem[Miyato et~al.(2018)Miyato, Kataoka, Koyama, and
  Yoshida]{miyato2018spectral}
Takeru Miyato, Toshiki Kataoka, Masanori Koyama, and Yuichi Yoshida.
\newblock Spectral normalization for generative adversarial networks.
\newblock In \emph{International Conference on Learning Representations}, 2018.

\bibitem[{MOSEK ApS}(2020)]{mosek}
{MOSEK ApS}.
\newblock \emph{The MOSEK optimization toolbox for {MATLAB} manual. Version
  9.2.5}, 2020.

\bibitem[Newton \& Papachristodoulou(2021)Newton and
  Papachristodoulou]{newton2021exploiting}
Matthew Newton and Antonis Papachristodoulou.
\newblock Exploiting sparsity for neural network verification.
\newblock In \emph{Learning for Dynamics and Control}, pp.\  715--727. PMLR,
  2021.

\bibitem[Newton \& Papachristodoulou(2023)Newton and
  Papachristodoulou]{newton2022sparse}
Matthew Newton and Antonis Papachristodoulou.
\newblock Sparse polynomial optimisation for neural network verification.
\newblock \emph{Automatica}, 157:\penalty0 111233, 2023.

\bibitem[Parrilo(2000)]{parrilo00}
Pablo~A. Parrilo.
\newblock \emph{Structured Semidefinite Programs and Semialgebraic Geometry
  Methods in Robustness and Optimization}.
\newblock PhD thesis, California Institute of Technology, 2000.

\bibitem[Pauli et~al.(2021)Pauli, Koch, Berberich, Kohler, and
  Allg{\"o}wer]{pauli2021training}
Patricia Pauli, Anne Koch, Julian Berberich, Paul Kohler, and Frank
  Allg{\"o}wer.
\newblock Training robust neural networks using {L}ipschitz bounds.
\newblock \emph{IEEE Control Systems Letters}, 6:\penalty0 121--126, 2021.

\bibitem[Pauli et~al.(2023{\natexlab{a}})Pauli, Gramlich, and
  Allg{\"o}wer]{pauli2022lipschitz}
Patricia Pauli, Dennis Gramlich, and Frank Allg{\"o}wer.
\newblock Lipschitz constant estimation for 1d convolutional neural networks.
\newblock In \emph{Learning for Dynamics and Control Conference}, pp.\
  1321--1332. PMLR, 2023{\natexlab{a}}.

\bibitem[Pauli et~al.(2023{\natexlab{b}})Pauli, Wang, Manchester, and
  Allg{\"o}wer]{pauli2023lipschitz}
Patricia Pauli, Ruigang Wang, Ian~R Manchester, and Frank Allg{\"o}wer.
\newblock Lipschitz-bounded {1D} convolutional neural networks using the
  {C}ayley transform and the controllability {G}ramian.
\newblock \emph{arXiv preprint arXiv:2303.11835}, 2023{\natexlab{b}}.

\bibitem[Pfifer \& Seiler(2015)Pfifer and Seiler]{Pfifer2015d}
H.~Pfifer and P.~Seiler.
\newblock Integral quadratic constraints for delayed nonlinear and
  parameter-varying systems.
\newblock \emph{Automatica}, 56:\penalty0 36 -- 43, 2015.

\bibitem[Prach \& Lampert(2022)Prach and Lampert]{prach2022almost}
Bernd Prach and Christoph~H Lampert.
\newblock Almost-orthogonal layers for efficient general-purpose {L}ipschitz
  networks.
\newblock In \emph{Computer Vision--ECCV 2022: 17th European Conference}, 2022.

\bibitem[Revay et~al.(2020)Revay, Wang, and Manchester]{revay2020lipschitz}
Max Revay, Ruigang Wang, and Ian~R Manchester.
\newblock Lipschitz bounded equilibrium networks.
\newblock \emph{arXiv preprint arXiv:2010.01732}, 2020.

\bibitem[Salman et~al.(2019)Salman, Li, Razenshteyn, Zhang, Zhang, Bubeck, and
  Yang]{salman2019provably}
Hadi Salman, Jerry Li, Ilya Razenshteyn, Pengchuan Zhang, Huan Zhang, Sebastien
  Bubeck, and Greg Yang.
\newblock Provably robust deep learning via adversarially trained smoothed
  classifiers.
\newblock In \emph{Advances in Neural Information Processing Systems}, 2019.

\bibitem[Shi et~al.(2019)Shi, Shi, O’Connell, Yu, Azizzadenesheli,
  Anandkumar, Yue, and Chung]{shi2019neural}
Guanya Shi, Xichen Shi, Michael O’Connell, Rose Yu, Kamyar Azizzadenesheli,
  Animashree Anandkumar, Yisong Yue, and Soon-Jo Chung.
\newblock Neural lander: Stable drone landing control using learned dynamics.
\newblock In \emph{2019 international conference on robotics and automation
  (icra)}, pp.\  9784--9790. IEEE, 2019.

\bibitem[Shi et~al.(2022)Shi, Wang, Zhang, Kolter, and
  Hsieh]{shi2022efficiently}
Zhouxing Shi, Yihan Wang, Huan Zhang, J~Zico Kolter, and Cho-Jui Hsieh.
\newblock Efficiently computing local {L}ipschitz constants of neural networks
  via bound propagation.
\newblock \emph{Advances in Neural Information Processing Systems},
  35:\penalty0 2350--2364, 2022.

\bibitem[Singla \& Feizi(2021)Singla and Feizi]{skew2021sahil}
Sahil Singla and Soheil Feizi.
\newblock Skew orthogonal convolutions.
\newblock In \emph{International Conference on Machine Learning}, 2021.

\bibitem[Singla et~al.(2022)Singla, Singla, and Feizi]{singla2022improved}
Sahil Singla, Surbhi Singla, and Soheil Feizi.
\newblock Improved deterministic l2 robustness on {CIFAR}-10 and {CIFAR}-100.
\newblock In \emph{International Conference on Learning Representations}, 2022.

\bibitem[Szegedy et~al.(2013)Szegedy, Zaremba, Sutskever, Bruna, Erhan,
  Goodfellow, and Fergus]{szegedy2013intriguing}
Christian Szegedy, Wojciech Zaremba, Ilya Sutskever, Joan Bruna, Dumitru Erhan,
  Ian Goodfellow, and Rob Fergus.
\newblock Intriguing properties of neural networks.
\newblock \emph{arXiv preprint arXiv:1312.6199}, 2013.

\bibitem[Tanielian \& Biau(2021)Tanielian and Biau]{tanielian2021approximating}
Ugo Tanielian and Gerard Biau.
\newblock Approximating {L}ipschitz continuous functions with groupsort neural
  networks.
\newblock In \emph{International Conference on Artificial Intelligence and
  Statistics}, pp.\  442--450, 2021.

\bibitem[Trockman \& Kolter(2021)Trockman and
  Kolter]{trockman2021orthogonalizing}
Asher Trockman and J~Zico Kolter.
\newblock Orthogonalizing convolutional layers with the cayley transform.
\newblock In \emph{International Conference on Learning Representations}, 2021.

\bibitem[Tsuzuku et~al.(2018)Tsuzuku, Sato, and Sugiyama]{tsuzuku2018lipschitz}
Yusuke Tsuzuku, Issei Sato, and Masashi Sugiyama.
\newblock Lipschitz-margin training: Scalable certification of perturbation
  invariance for deep neural networks.
\newblock In \emph{Advances in Neural Information Processing Systems}, 2018.

\bibitem[Virmaux \& Scaman(2018)Virmaux and Scaman]{virmaux2018lipschitz}
Aladin Virmaux and Kevin Scaman.
\newblock Lipschitz regularity of deep neural networks: analysis and efficient
  estimation.
\newblock \emph{Advances in Neural Information Processing Systems}, 31, 2018.

\bibitem[Wang \& Manchester(2023)Wang and Manchester]{wang2023direct}
Ruigang Wang and Ian Manchester.
\newblock Direct parameterization of {L}ipschitz-bounded deep networks.
\newblock In \emph{International Conference on Machine Learning}, pp.\
  36093--36110. PMLR, 2023.

\bibitem[Wang et~al.(2022)Wang, Prakriya, and Jha]{wang2022quantitative}
Zi~Wang, Gautam Prakriya, and Somesh Jha.
\newblock A quantitative geometric approach to neural-network smoothness.
\newblock In \emph{Advances in Neural Information Processing Systems}, 2022.

\bibitem[Wang et~al.(2024)Wang, Hu, Havens, Araujo, Zheng, Chen, and
  Jha]{WangICLR2024}
Zi~Wang, Bin Hu, Aaron~J Havens, Alexandre Araujo, Yang Zheng, Yudong Chen, and
  Somesh Jha.
\newblock On the scalability and memory efficiency of semidefinite programs for
  {L}ipschitz constant estimation of neural networks.
\newblock In \emph{International Conference on Learning Representations}, 2024.

\bibitem[Xu et~al.(2022)Xu, Li, and Li]{xulot2022}
Xiaojun Xu, Linyi Li, and Bo~Li.
\newblock Lot: Layer-wise orthogonal training on improving l2 certified
  robustness.
\newblock In \emph{Advances in Neural Information Processing Systems}, 2022.

\bibitem[Xue et~al.(2022)Xue, Lindemann, Robey, Hassani, Pappas, and
  Alur]{xue2022chordal}
Anton Xue, Lars Lindemann, Alexander Robey, Hamed Hassani, George~J Pappas, and
  Rajeev Alur.
\newblock Chordal sparsity for {L}ipschitz constant estimation of deep neural
  networks.
\newblock In \emph{2022 IEEE 61st Conference on Decision and Control (CDC)},
  pp.\  3389--3396. IEEE, 2022.

\bibitem[Yin et~al.(2021)Yin, Seiler, and Arcak]{yin2021stability}
He~Yin, Peter Seiler, and Murat Arcak.
\newblock Stability analysis using quadratic constraints for systems with
  neural network controllers.
\newblock \emph{IEEE Transactions on Automatic Control}, 67\penalty0
  (4):\penalty0 1980--1987, 2021.

\bibitem[Yion \& Wu(2023)Yion and Wu]{yion2023robust}
Wallace Tan~Gian Yion and Zhe Wu.
\newblock Robust machine learning modeling for predictive control using
  {L}ipschitz-constrained neural networks.
\newblock \emph{Computers \& Chemical Engineering}, pp.\  108466, 2023.

\bibitem[Yu et~al.(2022)Yu, Li, Cai, and Li]{yu2022constructing}
Tan Yu, Jun Li, Yunfeng Cai, and Ping Li.
\newblock Constructing orthogonal convolutions in an explicit manner.
\newblock In \emph{International Conference on Learning Representations}, 2022.

\bibitem[Zhang et~al.(2019)Zhang, Zhang, and Hsieh]{zhang2019recurjac}
Huan Zhang, Pengchuan Zhang, and Cho-Jui Hsieh.
\newblock Recurjac: An efficient recursive algorithm for bounding {J}acobian
  matrix of neural networks and its applications.
\newblock In \emph{Proceedings of the AAAI Conference on Artificial
  Intelligence}, 2019.

\end{thebibliography}

\clearpage
\setcounter{section}{0}
\renewcommand{\thesection}{\Alph{section}}

\numberwithin{equation}{section}
\numberwithin{theorem}{section}

\section{Applications of MaxMin neural networks} \label{app:applications_MaxMin}
GroupSort and Householder activations have been introduced as gradient-norm preserving activation functions. This convenient property lead to the use of neural networks with GroupSort and Householder activations to fit Lipschitz functions \citep{anil2019sorting,tanielian2021approximating,cohen2019universal} and to design robust neural networks \citep{singla2022improved,huang2021training}. Using MaxMin neural networks, \citet{huang2021training,leino2021globally,hu2023scaling,hu2023recipe,losch2023certified} provide state-of-the-art verified-robust accuracy, that they compute based on information of the Lipschitz constant and the prediction margin of the trained networks.

Another application of Lipschitz neural networks is in learning-based control with safety, robustness and stability guarantees \citep{aswani2013provably,berkenkamp2017safe,jin2020stability,shi2019neural,fabiani2022neural,yion2023robust}, for example using MaxMin activations \cite{yion2023robust}. LipSDP for MaxMin can potentially reduce the conservatism in the stability and robustness analysis of such MaxMin network controllers.

\section{The quadratic constraint framework}\label{app:quadratic_constraints}

The quadratic constraint framework stems from the control community and was developed starting in the 1960s \citep{megretski1997system}. In the control literature, quadratic constraints are used to abstract elements of a dynamical system that cause trouble in the analysis, e.g., nonlinearities, uncertainties, or time delays. This abstraction then enables the analysis of systems which include such troublesome elements. The quadratic constraint framework is generally applicable, as long as the troublesome element can be abstracted by quadratic constraints. For example, \citet{kao2007stability}  introduce quadratic constraints for a so-called "delay-difference" operator to describe varying time delays, \cite{Pfifer2015d} use a geometric interpretation to derive quadratic constraints for delayed nonlinear and parameter-varying systems, and \cite{carrasco2016zames} summarizes the development of Zames-Falb multipliers for slope-restricted nonlinearites based on frequency domain arguments. All previous papers on LipSDP borrow existing quadratic constraints for slope-restricted nonlinearities from the control literature that are not satisfied by MaxMin, GroupSort, and Householder activations. This being said, our paper is the first that successfully derives quadratic constraints for GroupSort and Householder activations, and further, we are the first to formulate quadratic constraints for sum-preserving elements. The idea we used to derive these quadratic constraints is creative in the sense that it is very different from all the existing quadratic constraint derivations in the large body of control literature. The difficulty in deriving new quadratic constraints lies in identifying properties and characteristics of the troublesome element, in our case a multivariate nonlinearity GroupSort or Householder, that can (i) be formulated in a quadratic form and (ii) whose quadratic constraint formulation is tight and descriptive enough to lead to improvements in the analysis. Please see Appendices \ref{app:proof_HH} and \ref{app:proof_HH} for the technical details of our derivation. We use our novel quadratic constraints to solve the problem of Lipschitz constant estimation. However, the modularity of the quadratic constraint framework allows to, as well, address different problems, e.g., safety verification of a MaxMin neural network similar to \citep{fazlyab2020safety,newton2021exploiting} or the stability of a feedback loop that includes a MaxMin neural network, as done in \citep{yin2021stability} for slope-restricted activations. 

\section{Technical Proofs of Main Results}\label{app:proofs}

\subsection{Proof for motivating example}\label{app:motivating_example}
In Section \ref{sec:motivation}, we claim that the matrix inequality (\ref{eq:LMI_MaxMin_ReLU}) implies $\sqrt{\rho}$-Lipschitzness for the residual ReLU network that we obtained from rewriting MaxMin. In the following, we prove this claim. We left and right multiply (\ref{eq:LMI_MaxMin_ReLU}) with $\begin{bmatrix} (x-y)^\top & (\ReLU(Wx)-\ReLU(Wy))^\top\end{bmatrix}$ and its transpose, respectively, and obtain
\begin{align*}
    &\begin{bmatrix} x-y \\ \ReLU(Wx)-\ReLU(Wy)\end{bmatrix}^\top
 \begin{bmatrix}I & 0 \\ H & G \end{bmatrix}^\top\begin{bmatrix}\rho I & 0 \\ 0 & -I\end{bmatrix} \begin{bmatrix}I & 0 \\ H & G\end{bmatrix}
     \begin{bmatrix} x-y \\ \ReLU(Wx)-\ReLU(Wy)\end{bmatrix}\\
&\geq
    \begin{bmatrix} x-y \\ \ReLU(Wx)-\ReLU(Wy)\end{bmatrix}^\top
     \begin{bmatrix}
        W & 0\\
        0 & I
    \end{bmatrix}^\top
    \begin{bmatrix}
        0 & T\\
        T & -2T
    \end{bmatrix}
    \begin{bmatrix}
        W & 0\\
        0 & I
    \end{bmatrix}
    \begin{bmatrix} x-y \\ \ReLU(Wx)-\ReLU(Wy)\end{bmatrix}.
\end{align*}
This inequality can equivalently be rewritten as
\begin{align*}
    &\rho (x-y)^\top (x-y) \\
    &- (\underbrace{Hx+G\ReLU(Wx)}_{=\MaxMin(x)}-Hy-G\ReLU(Wy))^\top (Hx+G\ReLU(Wx)-Hy-G\ReLU(Wy))\\
    &\geq
    \begin{bmatrix}
        Wx-Wy\\
        \ReLU(Wx)-\ReLU(Wy)
    \end{bmatrix}^\top
    \begin{bmatrix}
        0 & T\\
        T & -2T
    \end{bmatrix}
    \begin{bmatrix}
        Wx-Wy\\
        \ReLU(Wx)-\ReLU(Wy)
    \end{bmatrix},
\end{align*}
which we can in turn equivalently state as
\begin{align*}
    &\rho\Vert x-y \Vert_2^2 - \Vert \MaxMin(x)-\MaxMin(y) \Vert_2^2\\
    &\geq
    \begin{bmatrix}
        Wx-Wy\\
        \ReLU(Wx)-\ReLU(Wy)
    \end{bmatrix}^\top
    \begin{bmatrix}
        0 & T\\
        T & -2T
    \end{bmatrix}
    \begin{bmatrix}
        Wx-Wy\\
        \ReLU(Wx)-\ReLU(Wy)
    \end{bmatrix}\geq 0,
\end{align*}
where the last inequality holds due to $[0,1]$-slope-restriction of ReLU, cmp. \cite[Lemma 1]{fazlyab2019efficient}. This yields $\sqrt{\rho}\Vert x-y \Vert_2 \geq \Vert \MaxMin(x)-\MaxMin(y) \Vert_2$, which completes the proof. \qed

\subsection{Proof of Lemma \ref{lem:GroupSort}}\label{app:proof_GS}

Let $e_i$ denote a $N$-dimensional unit vector whose $i$-th entry is $1$ and all other entries are $0$. Since GroupSort preserves the sum within each subgroup, we must have $e_i^\top\otimes \mathbf{1}_{n_g}^\top x=e_i^\top\otimes \mathbf{1}_{n_g}^\top \phi(x)$ for all $i=1,\ldots, N$. This leads to the following key equality:
\begin{align*}
(x-y)^\top ((e_i e_i^\top)\otimes (\mathbf{1}_{n_g}\mathbf{1}_{n_g}^\top))(x-y)
 =(\phi(x)-\phi(y))^\top((e_i e_i^\top)\otimes (\mathbf{1}_{n_g}\mathbf{1}_{n_g}^\top))(\phi(x)-\phi(y))&\\
 =(\phi(x)-\phi(y))^\top ((e_i e_i^\top)\otimes (\mathbf{1}_{n_g}\mathbf{1}_{n_g}^\top))(x-y)
 =(x-y)^\top ((e_i e_i^\top)\otimes (\mathbf{1}_{n_g}\mathbf{1}_{n_g}^\top))(\phi(x)-\phi(y))&.
 \end{align*}
Therefore, we can introduce conic combinations of the subgroup-wise constraints and obtain the following quadratic equalities:
  \begin{align*}
     \begin{bmatrix}
         x-y\\
         \phi(x)-\phi(y)
     \end{bmatrix}^\top
     \begin{bmatrix}
         \diag(\gamma)\otimes (\mathbf{1}_{n_g}\mathbf{1}_{n_g}^\top) & 0\\
         0 & -\diag(\gamma)\otimes (\mathbf{1}_{n_g}\mathbf{1}_{n_g}^\top)
     \end{bmatrix}
     \begin{bmatrix}
         x-y\\
         \phi(x)-\phi(y)
     \end{bmatrix} &= 0,\\
     \begin{bmatrix}
         x-y\\
         \phi(x)-\phi(y)
     \end{bmatrix}^\top
     \begin{bmatrix}
         0 & \diag(\nu)\otimes (\mathbf{1}_{n_g}\mathbf{1}_{n_g}^\top)\\
         \diag(\nu)\otimes (\mathbf{1}_{n_g}\mathbf{1}_{n_g}^\top) & -2\diag(\nu)\otimes (\mathbf{1}_{n_g}\mathbf{1}_{n_g}^\top)
     \end{bmatrix}
     \begin{bmatrix}
         x-y\\
         \phi(x)-\phi(y)
     \end{bmatrix} &= 0,\\
     \begin{bmatrix}
         x-y\\
         \phi(x)-\phi(y)
     \end{bmatrix}^\top
     \begin{bmatrix}
         -2\diag(\tau)\otimes (\mathbf{1}_{n_g}\mathbf{1}_{n_g}^\top) & \diag(\tau)\otimes (\mathbf{1}_{n_g}\mathbf{1}_{n_g}^\top)\\
         \diag(\tau)\otimes (\mathbf{1}_{n_g}\mathbf{1}_{n_g}^\top) & 0
     \end{bmatrix}
     \begin{bmatrix}
         x-y\\
         \phi(x)-\phi(y)
     \end{bmatrix} &= 0.
\end{align*}
By these equalities, (\ref{eq:iQC_GroupSort}) is equivalent to
    \begin{equation}\label{eq:proof_Lip}
         \begin{bmatrix}
             x-y\\
             \phi(x)-\phi(y)
         \end{bmatrix}^\top
         \begin{bmatrix}
             \diag(\lambda)\otimes I_{n_g} & 0\\
             0 & -\diag(\lambda)\otimes I_{n_g}
         \end{bmatrix}
         \begin{bmatrix}
             x-y\\
             \phi(x)-\phi(y)
         \end{bmatrix}\geq 0.
     \end{equation}
We can easily see that (\ref{eq:proof_Lip}) holds because the sorting behavior in each subgroup is $1$-Lipschitz, i.e. $\Vert e_i^\top \otimes I_{n_g} (x-y) \Vert_2^2\geq \Vert e_i^\top \otimes I_{n_g} (\phi(x)-\phi(y))\Vert_2^2$ for $i=1,\ldots, N$. Hence, we have verified that the GroupSort/MaxMin activation satisfies the quadratic constraint (\ref{eq:iQC_GroupSort}).
\qed


\subsection{Proof of Lemma \ref{lem:Householder}}\label{app:proof_HH}
First, we prove that the Householder activation satisfies three key equalities on each subgroup and then we discuss the 1-Lipschitz of Householder activations. The first key equation is
    \begin{equation*}
        (\phi(x)-\phi(y))^\top(I_{n_g}-vv^\top)(\phi(x)-\phi(y))=(x-y)^\top(I_{n_g}-vv^\top)(x-y).
     \end{equation*}
     We distinguish between the cases (i) $\phi(x)=x$, $\phi(y)=y$, (ii)  $\phi(x)=(I_{n_g}-2vv^\top)x$, $\phi(y)=(I_{n_g}-2vv^\top)y$, and (iii)  $\phi(x)=x$, $\phi(y)=(I-2vv^\top)y$ that without loss of generality corresponds to $\phi(x)=(I_{n_g}-2vv^\top)x$, $\phi(y)=y$.
     \begin{itemize}
         \item[(i)] $(\phi(x)-\phi(y))^\top (I_{n_g}-v v^\top)(\phi(x)-\phi(y))=(x-y)^\top (I_{n_g}-v v^\top)(x-y)$ holds trivially. 
         \item[(ii)] $(\phi(x)-\phi(y))^\top (I_{n_g}-v v^\top)(\phi(x)-\phi(y))\\
        =(x-y)^\top(I_{n_g}-2vv^\top)(I_{n_g}-v v^\top) (I_{n_g}-2vv^\top)(x-y)\\
        = (x-y)^\top (I_{n_g}-v v^\top)(x-y) - 4(x-y)^\top vv^\top (I_{n_g}-v v^\top)(x-y) + 4(x-y)^\top vv^\top (I_{n_g}-v v^\top) vv^\top (x-y) = (x-y)^\top vv^\top(x-y)$ holds, using that $vv^\top (I_{n_g}-v v^\top) vv^\top=vv^\top -v v^\top=0$.
        \item[(iii)] $(\phi(x)-\phi(y))^\top (I_{n_g}-vv^\top)(\phi(x)-\phi(y))\\
        =(x-(I_{n_g}-2vv^\top)y)^\top (I_{n_g}-vv^\top)(x-(I_{n_g}-2vv^\top)y)\\
        =((x-y)+2vv^\top y)^\top (I_{n_g}-vv^\top)((x-y)+2vv^\top y)\\
        = (x-y)^\top (I_{n_g}-vv^\top) (x-y) - 4(x-y)^\top (I_{n_g}-vv^\top) vv^\top y + 4 y^\top vv^\top (I_{n_g}-vv^\top) vv^\top y $ holds, again using that $(I_{n_g}-v v^\top) vv^\top=vv^\top -v v^\top=0$.
     \end{itemize}
     Next we prove that the Householder activation satisfies the equation
     \begin{equation*}
         (\phi(x)-\phi(y))^\top (I_{n_g}-v v^\top) (x-y)=(x-y)^\top (I_{n_g}-v v^\top) (x-y),
     \end{equation*}
     again considering the three cases (i), (ii) and (iii).
     \begin{itemize}
         \item[(i)] $(\phi(x)-\phi(y))^\top (I_{n_g}-v v^\top) (x-y)=(x-y)^\top (I_{n_g}-v v^\top) (x-y)$ holds trivially.
         \item[(ii)] $(\phi(x)-\phi(y))^\top (I_{n_g}-v v^\top) (x-y)\\
         = (x-y)^\top(I_{n_g}-2v v^\top)(I_{n_g}-v v^\top)(x-y)\\
         = (x-y)^\top(I_{n_g}-3v v^\top + 2  v v^\top vv^\top)(x-y)\\
         = (x-y)^\top(I_{n_g}- vv^\top)(x-y)$ holds.
         \item[(iii)] $(\phi(x)-\phi(y))^\top (I_{n_g}-v v^\top) (x-y)\\
         = (x-y^\top(I_{n_g}-2v v^\top))(I_{n_g}-v v^\top)(x-y)\\
         = (x-y)^\top(I_{n_g}-v v^\top)(x-y) -2y^\top v v^\top(I_{n_g}-v v^\top)(x-y)\\
         = (x-y)^\top(I_{n_g}-v v^\top)(x-y)$ using that $vv^\top (I_{n_g}-v v^\top) =vv^\top -v v^\top=0$.
     \end{itemize}
     Accordingly, we show that the Householder activation function satisfies
     \begin{equation*}
         (\phi(x)-\phi(y))^\top (I_{n_g}-v v^\top) (\phi(x)-\phi(y))=(x-y)^\top (I_{n_g}-v v^\top) (\phi(x)-\phi(y))
     \end{equation*}
     \begin{itemize}
         \item[(i)] $(\phi(x)-\phi(y))^\top (I_{n_g}-v v^\top) (\phi(x)-\phi(y))=(\phi(x)-\phi(y))^\top (I_{n_g}-v v^\top) (x-y)$ holds trivially.
         \item[(ii)] $(\phi(x)-\phi(y))^\top (I_{n_g}-v v^\top) (\phi(x)-\phi(y))\\
         = (x-y)^\top(I_{n_g}-2v v^\top)(I_{n_g}-v v^\top)(\phi(x)-\phi(y))\\
         = (x-y)^\top(I_{n_g}-3v v^\top + 2  v v^\top vv^\top)(\phi(x)-\phi(y))\\
         = (x-y)^\top(I_{n_g}- vv^\top)(\phi(x)-\phi(y))$ holds.
         \item[(iii)] $(\phi(x)-\phi(y))^\top (I_{n_g}-v v^\top) (\phi(x)-\phi(y))\\
         = (x-y^\top(I_{n_g}-2v v^\top))(I_{n_g}-v v^\top)(\phi(x)-\phi(y))\\
         = (x-y)^\top(I_{n_g}-v v^\top)(x-y) -2y^\top v v^\top(I_{n_g}-v v^\top)(\phi(x)-\phi(y))\\
         = (x-y)^\top(I_{n_g}-v v^\top)(\phi(x)-\phi(y))$, using that $vv^\top (I_{n_g}-v v^\top) =vv^\top -v v^\top=0$.
     \end{itemize}
     Therefore, we can introduce conic combinations of the subgroup-wise constraints and obtain the following quadratic equalities:
  \begin{align*}
     \begin{bmatrix}
         x-y\\
         \phi(x)-\phi(y)
     \end{bmatrix}^\top
     \begin{bmatrix}
         \diag(\gamma)\otimes (\mathbf{1}_{n_g}\mathbf{1}_{n_g}^\top) & 0\\
         0 & -\diag(\gamma)\otimes (\mathbf{1}_{n_g}\mathbf{1}_{n_g}^\top)
     \end{bmatrix}
     \begin{bmatrix}
         x-y\\
         \phi(x)-\phi(y)
     \end{bmatrix} &= 0,\\
     \begin{bmatrix}
         x-y\\
         \phi(x)-\phi(y)
     \end{bmatrix}^\top
     \begin{bmatrix}
         0 & \diag(\nu)\otimes (\mathbf{1}_{n_g}\mathbf{1}_{n_g}^\top)\\
         \diag(\nu)\otimes (\mathbf{1}_{n_g}\mathbf{1}_{n_g}^\top) & -2\diag(\nu)\otimes (\mathbf{1}_{n_g}\mathbf{1}_{n_g}^\top)
     \end{bmatrix}
     \begin{bmatrix}
         x-y\\
         \phi(x)-\phi(y)
     \end{bmatrix} &= 0,\\
     \begin{bmatrix}
         x-y\\
         \phi(x)-\phi(y)
     \end{bmatrix}^\top
     \begin{bmatrix}
         -2\diag(\tau)\otimes (\mathbf{1}_{n_g}\mathbf{1}_{n_g}^\top) & \diag(\tau)\otimes (\mathbf{1}_{n_g}\mathbf{1}_{n_g}^\top)\\
         \diag(\tau)\otimes (\mathbf{1}_{n_g}\mathbf{1}_{n_g}^\top) & 0
     \end{bmatrix}
     \begin{bmatrix}
         x-y\\
         \phi(x)-\phi(y)
     \end{bmatrix} &= 0.
\end{align*}
By these equalities, (\ref{eq:iQC_GroupSort}) is equivalent to
\begin{equation}\label{eq:Lip_Householder}
    \begin{bmatrix}
        x-y\\
        \phi(x)-\phi(y)
    \end{bmatrix}^\top
    \begin{bmatrix}
        \diag(\lambda)\otimes I_{n_g} & 0\\
        0 & -\diag(\lambda)\otimes I_{n_g}
    \end{bmatrix}
    \begin{bmatrix}
        x-y\\
        \phi(x)-\phi(y)
    \end{bmatrix}\geq 0.
\end{equation}
The Householder activation is gradient norm preserving by design as $I$ and $I-2vv^\top$ have only singular values 1. Accordingly, it is $1$-Lipschitz, i.e.,
$\Vert x-y\Vert_2\geq\Vert\phi(x)-\phi(y)\Vert_2$ holds and inequality (\ref{eq:Lip_Householder}) is a conic combination of the Lipschitz property.
\qed

\subsection{Proof of Theorem~\ref{thm:FC}}\label{app:proof_Thm_l2}
We want to prove that (\ref{eq:LMIs_l2}) implies $ \|f_\theta(x)-f_\theta(y)\|_2^2\leq \rho\|x-y\|_2^2$ for any $x,y\in \bbR^{n_0}$.
To do this, we set $x^0=x$, and $y^0=y$.  Then we define 
\begin{align}\label{eq:xybf}
\xbf=\begin{bmatrix}
        x^0 \\ x^1 \\ \vdots \\ x^{l-1}    
    \end{bmatrix}, \quad
   \ybf=\begin{bmatrix}
        y^0 \\ y^1 \\ \vdots \\ y^{l-1}    
    \end{bmatrix} 
\end{align}
where $x^i=\phi(W_i x^{i-1}+b_i)$ and $y^i=\phi(W_i y^{i-1}+b_i)$.
Now we left and right multiply (\ref{eq:LMIs_l2}) with $(\xbf-\ybf)^\top$
    and its transpose, respectively. This leads to
     \begin{align*}
        (\xbf-\ybf)^\top
         \begin{bmatrix}
            A\\ B 
        \end{bmatrix}^\top
       & \begin{bmatrix}
            T-2S & P+S \\
            P+S & -T-2P
        \end{bmatrix}
        \begin{bmatrix}
            A\\ B 
        \end{bmatrix} (\xbf-\ybf)\\
        +
        &(\xbf-\ybf)^\top
        \begin{bmatrix}
            -\rho I_{n_0} & 0 & \dots & 0\\
            0 & 0 & \dots & 0\\
            \vdots & \vdots & \ddots & \vdots\\
            0 & 0 & \dots & W_l^\top W_l
        \end{bmatrix}
        (\xbf-\ybf)
        \leq 0,
    \end{align*}
    which can be rewritten as
    \begin{equation}\label{eq:proof_thm_FC}
    \begin{split}
       & \rho (x^0-y^0)^\top(x^0-y^0)
        -(x^{l-1}-y^{l-1})^\top W_l^\top W_l(x^{l-1}-y^{l-1})\\
        \geq
      &  \begin{bmatrix}
            A\xbf-A\ybf\\
            \phi(A\xbf+b)-\phi(A\ybf+b)
        \end{bmatrix}
        \begin{bmatrix}
            T-2S & P+S \\
            P+S & -T-2P
        \end{bmatrix}
        \begin{bmatrix}
            A\xbf-A\ybf\\
            \phi(A\xbf+b)-\phi(A\ybf+b)
        \end{bmatrix},
        \end{split}
    \end{equation}
    where $b^\top=\begin{bmatrix}
        {b_1}^\top & {b_2}^\top & \dots & {b_{l-1}}^\top
    \end{bmatrix}$. Based on Lemma \ref{lem:GroupSort} (Lemma \ref{lem:Householder}), the right side of (\ref{eq:proof_thm_FC}) is guaranteed to be non-negative. Therefore, we have
    \begin{equation*}
        \rho (x^0-y^0)^\top(x^0-y^0)-(x^{l-1}-y^{l-1})^\top W_l^\top W_l(x^{l-1}-y^{l-1})\geq 0.
    \end{equation*}
    Noting that $f_\theta(x)-f_\theta(y)=W_l x^{l-1}+b_l-W_l y^{l-1}-b_l= W_l(x^{l-1}-y^{l-1})$ and $x^0-y^0=x-y$, we finally arrive at
    \begin{equation*}
         \|f_\theta(x)-f_\theta(y)\|_2^2 \leq \rho\|x-y\|_2^2.
    \end{equation*}
    This completes the proof.
\qed

\subsection{Proof of  Theorem~\ref{thm:linfty}}\label{app:proof_Thm_linfty}
    This theorem considers a single-layer neural network $f_\theta(x)=W_2\phi(W_1x+b_1)+b_2$ with $f_\theta(x)\in \bbR$.
    Given $x^0, y^0\in~\bbR^{n_0}$, we set
    $x^1=\phi(W_1x^0+b_1)$, and $y^1=\phi(W_1 y^0+b_1)$. 
    We left and right multiply (\ref{eq:LMIs_linfty_2layer}) with $\begin{bmatrix} 1 & (x^0-y^0)^\top & (x^1-y^1)^\top \end{bmatrix}$ and its transpose, respectively, and obtain
    \begin{align}
    \begin{bmatrix} 1 \\ x^0-y^0 \\ x^1-y^1 \end{bmatrix}^\top
    \begin{bmatrix}
        \mathbf{1}_{n_0}^\top \mu -2\rho & 0 & W_2\\
        0 & W_1^\top (T-2S) W_1-Q & W_1^\top (P+S)\\
        {W_2}^\top & (P+S)W_1 & -T-2P\\
    \end{bmatrix}
    \begin{bmatrix} 1 \\ x^0-y^0 \\ x^1-y^1 \end{bmatrix}\leq 0,
    \end{align}
    which is equivalent to
    \begin{equation}\label{eq:linfty_proof}
    \begin{split}
        2\rho
        -2 W_2(x^1-y^1) &\geq \sum_{i=1}^{n_0} \mu_i(1-\delta_i^2) +\\
        \begin{bmatrix}
            W_1(x^0-y^0)\\
            \phi(W_1x_0+b_1)-\phi(W_1y_0+b_1)
        \end{bmatrix}^\top
      &  \begin{bmatrix}
            T-2S & P+S \\
            P+S & -T-2P
        \end{bmatrix}
        \begin{bmatrix}
            W_1(x^0-y^0)\\
            \phi(W_1x^0+b_1)-\phi(W_1y^0+b_1)
        \end{bmatrix},
        \end{split}
    \end{equation}
    where $\delta_i$ is the $i$-th entry of the vector $(x^0-y^0)$. 
    Given that we aim to find some $\rho$ that solves problem (\ref{eq:linftynorm2}), we consider only pairs of $x^0,y^0, x^1,y^1$ that satisfy the constraints in (\ref{eq:linftynorm2}).  Based on these constraints, the right hand side of (\ref{eq:linfty_proof}) has to be non-negative. This means that we must have $\rho\geq W_2(x^1-y^1)$. Therefore, $\rho$ provides an upper bound for the objective of problem (\ref{eq:linftynorm2}), and we have $\rho\ge L_{\min}$.  This immediately leads to the desired conclusion. \qed

\subsection{Proof of Theorem~\ref{thm:linfty_multi}}
Let $(\xbf,\ybf)$ be defined by (\ref{eq:xybf}).
    We left and right multiply (\ref{eq:LMIs_linfty}) with $\begin{bmatrix} 1 & (\xbf-\ybf)^\top \end{bmatrix}$
    and its transpose, respectively, and obtain
     \begin{align*}
        (\xbf-\ybf)^\top
         \begin{bmatrix}
            A\\ B 
        \end{bmatrix}^\top
       & \begin{bmatrix}
            T-2S & P+S \\
            P+S & -T-2P
        \end{bmatrix}
        \begin{bmatrix}
            A\\ B 
        \end{bmatrix} (\xbf-\ybf)\\
        +
      &  \begin{bmatrix}
            1  \\
            (\xbf-\ybf)
        \end{bmatrix}^\top
        \begin{bmatrix}
             \mathbf{1}_{n_0}^\top \mu-2\rho & 0 & \dots & W_l\\
            0 & -Q & \dots & 0\\
            \vdots & \vdots & \ddots & \vdots\\
            {W_l}^\top & 0 & \dots & 0
        \end{bmatrix}
        \begin{bmatrix} 
            1  \\
            (\xbf-\ybf)
        \end{bmatrix}
        \leq 0,
    \end{align*}    
    which is equivalent to
    \begin{equation}
    \begin{split}
        2\rho
        -2 W_l(x^{l-1}-y^{l-1}) &\geq \sum_{i=1}^{n_0} \mu_i(1-\delta_i^2) +\\
        \begin{bmatrix}
            A\xbf-A\ybf\\
            \phi(A\xbf+b)-\phi(A\ybf+b)
        \end{bmatrix}^\top
        &\begin{bmatrix}
            T-2S & P+S \\
            P+S & -T-2P
        \end{bmatrix}
        \begin{bmatrix}
            A\xbf-A\ybf\\
            \phi(A\xbf+b)-\phi(A\ybf+b)
        \end{bmatrix},
        \end{split}
    \end{equation} 
    where  $b^\top=\begin{bmatrix}
        {b_1}^\top & {b_2}^\top & \dots & {b_{l-1}}^\top
    \end{bmatrix}$, and $\delta_i$ is the $i$-th entry of the vector $(x^0-y^0)$.
Following the same arguments in Section \ref{app:proof_Thm_linfty}, we can show $\rho\geq W_l(x^{l-1}-y^{l-1})$, which leads to the desired conclusion.  \qed

\subsection{Proof of Theorem~\ref{th:res2}}

Given $x^0\in \bbR^{n_0}$, let $\{x^i\}_{i=1}^l$ be defined by (\ref{eq:alt_res_layer}). Similarly, given $y^0\in \bbR^{n_0}$, we can calculate the output of the multi-layer residual network based on the recursion $y^i=y^{i-1}+G_{i}\phi(W_{i} y^{i-1}+b_{i})$. 

We define  $\tilde x^i=\phi(W_i x^{i-1}+b_i)$ for $i=1,2,\ldots,l$. 
Then we have $x^i=x^{i-1}+G_i \tilde x^i$ for $i=1,2,\ldots, l$. Via setting
$\tilde x^0=x^0$, we can unroll this recursive expression to obtain the following relationship:
\begin{align*}
   x^i = \tilde x^0 + \sum^{i}_{k=1} G_{k} \tilde x^k.
\end{align*}
Therefore, we have $\tilde x^i =  \phi (W_{i}(\tilde x^0+\sum_{k=1}^{i-1}G_{k} \tilde x^{k}) + b_{i}  )$ for $i=1,\ldots, l$. 

Similarly, we define $\tilde y^i=\phi(W_i y^{i-1}+b_i)$. We must have $\tilde y^i =  \phi (W_{i}(\tilde y^0+\sum_{k=1}^{i-1}G_{k} \tilde y^{k}) + b_{i}  )$ for $i=1,\ldots, l$.

Next, we define 
 \begin{align}
 \mathbf{\tilde x} = \begin{bmatrix}
        \tilde {x}^0 \\ \tilde {x}^1 \\ \vdots \\ \tilde {x}^{l}    
    \end{bmatrix}, \quad 
    \mathbf{\tilde y} = \begin{bmatrix}
        \tilde {y}^0 \\ \tilde {y}^1 \\ \vdots \\ \tilde {y}^{l}    
    \end{bmatrix}, \quad  b=\begin{bmatrix}
        b_1 \\ b_2 \\ \vdots \\ b_l
    \end{bmatrix}
\end{align}
Then it is straightforward to verify $\tilde B  \mathbf{\tilde x}=\phi (\tilde A  \mathbf{\tilde x}+b)$
and $\tilde B  \mathbf{\tilde y}=\phi (\tilde A  \mathbf{\tilde y}+b)$.
This leads to
\begin{align}\label{eq:A9}
\tilde B ( \mathbf{\tilde x}-\mathbf{\tilde y})=\phi (\tilde A  \mathbf{\tilde x}+b)-\phi (\tilde A  \mathbf{\tilde y}+b)
\end{align}
Now we left and right multiply (\ref{eq:LMIs_res}) with $(\mathbf{\tilde x} -\mathbf{\tilde y})^\top$
    and its transpose, respectively, and obtain
     \begin{align*}
        (\mathbf{\tilde x} -\mathbf{\tilde y})^\top
         \begin{bmatrix}
            \tilde A\\ \tilde B 
        \end{bmatrix}^\top
       & \begin{bmatrix}
            T-2S & P+S \\
            P+S & -T-2P
        \end{bmatrix}
        \begin{bmatrix}
            \tilde A\\ \tilde B 
        \end{bmatrix} (\mathbf{\tilde x} -\mathbf{\tilde y})\\
       & +
       (\mathbf{\tilde x} -\mathbf{\tilde y})^\top
        \begin{bmatrix}
        (1-\rho) I_{n_0} & G_1 &\dots & G_{l}\\
        G_1^\top & G_1^\top G_1 & \dots & G_1^\top G_{l}\\
        \vdots & \vdots & \ddots & \vdots\\
        G_{l}^\top & G_{l}^\top G_1 & \dots & G_{l}^\top G_{l}
    \end{bmatrix}
        (\mathbf{\tilde x} -\mathbf{\tilde y})
        \leq 0,
    \end{align*}
    which can be combined with (\ref{eq:A9}) to show
    \begin{equation}\label{eq:proof_thm_res}
    \begin{split}
        &\rho \|\tilde x^0-\tilde y^0\|_2^2
        -
        \|\tilde x^0-\tilde y^0 + \sum^l_{k=1}G_k (\tilde x^k-\tilde y^k)\|_2^2 \\
        \geq
        &\begin{bmatrix}
            \tilde A\mathbf{\tilde x}-\tilde A\mathbf{\tilde y}\\
            \phi(\tilde A\mathbf{\tilde x}+b)-\phi(\tilde A\mathbf{\tilde y}+b)
        \end{bmatrix}
        \begin{bmatrix}
            T-2S & P+S \\
            P+S & -T-2P
        \end{bmatrix}
        \begin{bmatrix}
            \tilde A\mathbf{\tilde x}-\tilde A\mathbf{\tilde y}\\
             \phi(\tilde A\mathbf{\tilde x}+b)-\phi(\tilde A\mathbf{\tilde y}+b)
        \end{bmatrix}.
        \end{split}
    \end{equation}
Based on Lemma \ref{lem:GroupSort}/Lemma \ref{lem:Householder}, we know the right side of (\ref{eq:proof_thm_res}) must be non-negative. Therefore, we have 
\begin{align*}
\|\tilde x^0-\tilde y^0 + \sum^l_{k=1}G_k (\tilde x^k-\tilde y^k)\|_2\le \sqrt{\rho} \|\tilde x^0-\tilde y^0\|_2.
\end{align*}
Recall that we have $x-y = \tilde x^0-\tilde y^0$ and $f_\theta(x) - f_\theta(y) = \tilde x^0 - \tilde y^0 + \sum^l_{k=1}G_k (\tilde x^k-\tilde y^k)$. Hence we have $\|f_\theta(x)-f_\theta(y)\|_2 \leq \sqrt{\rho} \|x-y\|_2$, which completes the proof.
\qed

\section{Implementation details for improved scalability of the SDP} \label{app:practical_implementation}
It is well-known that SDPs run into scalability problems for large-scale problems. To improve the scalability of the proposed SDPs for Lipschitz constant estimation, efforts have been made to exploit the structure of the matrix that characterizes the SDP condition, using chordal sparsity \citep{newton2022sparse,xue2022chordal} or layer-by-layer interpretations \citep{pauli2022lipschitz}. We incorporated these tricks in our implementations. In the following, we present details on our practical implementations of the SDPs based on Theorem \ref{thm:FC} and Theorem \ref{thm:linfty_multi} and address some additional tricks to trade off efficiency and accuracy of the SDP.

\subsection{Practical implementation for SDP based on Theorem \ref{thm:FC}} \label{app:practical_implementation_l2}
To enhance computational tractability in large-scale problems, it is desirable to formulate layer-wise SDP conditions rather than the sparse end-to-end one \citep{pauli2022lipschitz,xue2022chordal}. To make such simplifications, we set $\nu=0$ and $\tau= 0$, 
which does not change the final solution of the above SDP for feedforward neural networks in (\ref{eq:FC}). Therefore, 
we can just set $S=P=0$. 
Further, we denote $T_i=\lambda_i I_{n_g}+\gamma_i (\mathbf{1}_{n_g}\mathbf{1}_{n_g}^\top)$, which yields $T=\blkdiag(T_1,\dots,T_{l-1})$. Then, (\ref{eq:LMIs_l2}) can equivalently be stated as
\begin{equation}\label{eq:blockdiagonal_l2}
    \blkdiag(W_{1}^\top T_1  W_{1}-\rho I, W_{2}^\top T_2  W_{2}-T_{1},\dots, W_{l}^\top W_{l}-T_{l-1})\preceq 0.
\end{equation}
 Consequently, we can implement a set of $l$ SDP constraints, using the blocks in the matrix in (\ref{eq:blockdiagonal_l2}), rather than using the large and sparse condition (\ref{eq:LMIs_l2}). This idea follows along the lines of \cite{newton2022sparse,xue2022chordal}, where the sparsity pattern of the block-tridiagonal LMI in \cite{fazlyab2019efficient} is exploited. The decomposition of the matrix in (\ref{eq:blockdiagonal_l2}) is straightforward due to its even simpler block-diagonal structure. 
Specifically, setting $T_0=\rho I$, we solve the SDP
\begin{equation}\label{eq:SDP_scalable}
    \min_{\rho,T}~\rho\quad \text{s.\,t.}\quad W_1^\top T_1 W_1-T_0\preceq 0,~W_2^\top T_2 W_2-T_1 \preceq 0,~\ldots,~ W_{l-1}^\top W_{l-1}-T_{l-1}\preceq 0.
\end{equation}

\subsection{Practical implementation for SDP based on Theorem \ref{thm:linfty_multi}} \label{app:practical_implementation_linfty}
We argue accordingly for $\ell_\infty$ Lipschitz bounds based on Theorem \ref{thm:linfty_multi}. Minimizing $\rho$ subject to (\ref{eq:LMIs_linfty}) yields a SDP problem. 
For large-scale implementations, we set $S=P=0$, which reduces the number of decision variables without affecting the solution. Further, we reorder rows and columns of the matrix in (\ref{eq:LMIs_linfty}) by a similarity transformation. The rearranged version of the matrix in (\ref{eq:LMIs_linfty}) has a block-diagonal structure that leads to the following decoupled conditions with $T_0=Q$:
    \begin{align}\label{eq:LMIs_linfty2}
        \begin{split}
            T_{i-1}-W_{i}^\top T_i  W_{i}\succeq 0,~i = 1,\dots,l-1,\quad 
    \begin{bmatrix}
        T_{l-1} & {W_l}^\top\\
        W_l & 2\rho-\sum_{j=1}^{n_0}\mu_j
    \end{bmatrix}\succeq 0.
        \end{split}
    \end{align}
This yields a set of non-sparse constraints such that the resulting SDP can be solved in a more efficient way.

\subsection{Trade-off between accuracy and computational efficiency}\label{app:tradeoff}
In Section \ref{label:experiments}, we focus on the computation times of the SDP-based methods. For completeness and to discuss the trade-off between accuracy and computational efficiency, we include computation times for the naive $\ell_2$ bounds FGL in this section. This involves the lower bound from sampling and the matrix product bound for all models analyzed in Table~\ref{table:main_results}.

\begin{table}[h]
  \centering
  \caption{This table presents the computation times in seconds of all naive $\ell_2$ bounds for the neural networks used in Table~\ref{table:main_results}: the sample bound evaluated on 200k samples, the FGL bound, and the matrix product bound. See Section~\ref{label:experiments} for details on the bounds.
}
    \begin{tabular}{lrrrrrr}
    \toprule
    \multicolumn{2}{c}{\multirow{2}[2]{*}{\makecell{\textbf{Model} \\ (units per layer)}}} & & \multicolumn{3}{c}{\boldmath{}\textbf{$\ell_2$}} \\
    \cmidrule{4-6}
    & & & {\textbf{Sample}} & \textbf{FGL} & \textbf{MP} \\
    \midrule
    \multirow{4}{*}{2-layer} 
     & 16 units  & & 12.02 & 4.54 & 6.16 \\
     & 32 units  & & 13.08 & 81.92 & 5.33 \\
     & 64 units   & & 12.84 & - & 3.95\\
     & 128 units   & & 12.61 & - & 3.6\\
    \midrule
    \multirow{2}{*}{5-layer} 
     & 32 units  & & 13.18 & - & 5.43 \\
     & 64 units  & & 14.83 & - & 5.68 \\
    \midrule
    \multirow{4}{*}{8-layer} 
     & 32 units  & & 14.14 & - & 5.57 \\
     & 64 units  & & 14.27 & - & 5.6 \\
     & 128 units   & & 14.15 & - & 9.85\\
     & 256 units   & & 13.74 & - & 9.3\\
    \midrule
    \multirow{3}{*}{18-layer} 
     & 32 units  & & 23.28 & - & 7.64 \\
     & 64 units  & & 21.09 & - & 9.22 \\
     & 128 units   & & 19.48 & - & 4.9\\
    \bottomrule 
    \end{tabular}%
    \label{tab:times_naive_bounds}%
\end{table}%

The computation of the matrix product bound (MP), while naive, is quite fast, taking less than 10 s for all our models. It provides reasonably accurate bounds when the number of layers is small, but becomes very loose for deep neural networks. LipSDP-NSR is in comparison  computationally more expensive, while providing significantly tighter bounds, cmp. Table~\ref{table:main_results}. The computation of a lower bound by sampling (Sample) takes between 12 and 23 s on our models, which is also comparably cheap. This bound is not helpful in further verification steps but it gives us an idea on the range in which the true Lipschitz constant lies. Finally, it may seem surprising at first that the computation of FGL can be computed comparably fast to LipSDP-NSR for the small models 2FC-16 and 2FC-32. However, due to the exponential complexity of this approach it becomes intractable for all larger models.   

\subsection{Further tricks to trade off efficiency and accuracy of the SDP}
Considering less decision variables per layer, cmp. the variants LipSDP-Neuron and LipSDP-Layer in \cite{fazlyab2019efficient}, can be used to trade off efficiency and accuracy of the SDP.
\begin{itemize}
    \item \textbf{LipSDP-NSR-Neuron-2}: 2 decision variables per group (as implemented in problem (\ref{eq:SDP_scalable}))
    \item \textbf{LipSDP-NSR-Neuron-1}: 1 decision variable per group ($\gamma_i=0$, $i=1,\dots,l$)
    \item \textbf{LipSDP-NSR-Layer-2}: 2 decision variables per layer ($\lambda_1=\lambda_2=\dots=\lambda_l$, $\gamma_1=\gamma_2=\dots=\gamma_N$)
    \item \textbf{LipSDP-NSR-Layer-1}: 1 decision variable per layer ($\lambda_1=\lambda_2=\dots=\lambda_l$, $\gamma_1=\gamma_2=\dots=\gamma_N=0$, equivalent to norm of weights)
\end{itemize}
Another trick is to split a deep neural network in subnetworks. The product of the Lipschitz constants of the subnetworks yields a Lipschitz bound for the entire neural network.

\section{Additional numerical results and discussions}\label{app:additional_numerics}
In Table \ref{table:main_results2}, we present additional results on neural networks with the same architectures as in \ref{table:main_results} that were also trained on the MNIST dataset. The upper half of Table \ref{table:main_results2} states results from classic training and the lower half from $\ell_\infty$ adversarial training.

\begin{table}[t]
  \centering
  \caption{This table presents additional results of our LipSDP-NSR method on several feed-forward neural networks trained on MNIST with more than 96\% accuracy. The first table describe the result with classic training while the second table describes the results for $\ell_\infty$ adversarial training. We use the same setting as described in the paper.
}
    \resizebox{\textwidth}{!}{%
    \begin{tabular}{lrrrrrrrrrrrrrr}
    \toprule
    \multicolumn{3}{c}{\multirow{2}[2]{*}{\makecell{\textbf{Model} \\ (units per layer)}}}  & \multicolumn{5}{c}{\boldmath{}\textbf{$\ell_2$}} & & \multicolumn{6}{c}{\boldmath{}\textbf{$\ell_\infty$}} \\
    \cmidrule{4-8} \cmidrule{10-15}
    & & & \textbf{Sample} & \textbf{FGL} & \multicolumn{2}{c}{\textbf{LipSDP-NSR}} & \textbf{MP}  & & \textbf{Sample} & \textbf{FGL} & \multicolumn{2}{c}{\textbf{LipSDP-NSR}} & \textbf{Norm Eq.} & \textbf{MP} \\
    \midrule 
    \\
    \textbf{Classic Training} \\
    \midrule
    \multirow{4}{*}{2-layer} 
     & 16 units  & & 21.18 & 21.18 & 21.39 &  (76) & 22.86 & & 147.63 & 162.14 & 203.98 & (149) & 599.04 & 488.85 \\
     & 32 units  & & 20.40 & 20.40 & 21.09 & (152) & 23.65 & & 196.21 & 202.78 & 244.14 & (220) & 590.43 & 782.25 \\
     & 64 units  & & 17.51 & -     & 20.34 & (279) & 22.67 & & 212.23 & -      & 300.21 & (425) & 569.58 & 1164.53 \\
     & 128 units & & 16.25 & -     & 19.07 & (673) & 21.94 & & 179.04 & -      & 297.00 & (650) & 533.86 & 1339.68 \\
     \midrule
     \multirow{2}{*}{5-layer} 
     & 32 units  & & 107.46 & - & 176.69 & (130) & 240.15 & & 1073.68 & - & 2471.64 & (271) & 4947.20 & 2.96e+05 \\
     & 64 units  & & 101.04 & - & 132.48 & (359) & 166.23 & &  570.29 & - & 1688.85 & (581) & 3709.43 & 2.63e+05 \\
    \midrule
    \multirow{4}{*}{8-layer} 
     & 32 units  & & 118.03 & - & 414.02 & (188) & 711.15 & & 1353.52 & - & 5134.45 & (286) & 11592.60 & 1.36e+07 \\
     & 64 units  & &  93.33 & - & 246.35 & (357) & 405.13 & &  863.88 & - & 3620.96 & (494) & 6897.82 & 3.60e+07 \\
     & 128 units & &  81.99 & - & 203.65 & (595) & 306.68 & &  701.12 & - & 2516.24 & (912) & 5702.34 & 1.01e+08 \\
     & 256 units & &  54.83 & - & 149.90 & (1337) & 225.18 & &  425.83 & - & 2133.17 & (2256) & 4197.23 & 4.22e+08 \\
    \midrule
    \multirow{3}{*}{18-layer}
     & 32 units  & &  288.22 & - & 1.45e+04 & (270) & 7.70e+04 & & 2335.92 & - & 1.52e+05 & (285) & 4.06e+05 & 2.88e+13 \\
     & 64 units  & &  361.36 & - & 4.77e+04 & (496) & 2.05e+05 & &  1677.43 & - & 7.13e+05 & (559) & 1.34e+06 & 4.03e+15  \\
     & 128 units & &  150.60 & - & 5.56e+05 & (1047) & 8.73e+06 & &  1457.68 & - & 5.70e+06 & (1331) & 1.56e+07 & 9.35e+17  \\
    \midrule
    \\    [0.1cm]
    \textbf{$\ell_\infty$ Adversarial training} \\
    \midrule
    \multirow{4}{*}{2-layer} 
     & 16 units  & & 13.30 & 14.79 & 15.04 & (76) & 17.47 & &   57.61 &  65.69 & 92.90 &  (179) &  421.12 &  258.70 \\
     & 32 units  & & 14.51 & 15.73 & 15.88 & (120) & 17.86 & &  98.30 & 106.28 & 148.03 & (242) &  444.64 &  453.64 \\
     & 64 units  & & 20.31 & -     & 25.32 & (255) & 28.95 & & 141.66 & -      & 229.96 & (423) &  708.96 &  900.19 \\
     & 128 units & & 31.90 & -     & 44.48 & (559) & 52.19 & & 307.72 & -      & 485.39 & (802) & 1245.44 & 2287.62 \\
     \midrule
     \multirow{2}{*}{5-layer} 
     & 32 units  & & 24.02 & - & 62.62 & (138) &  93.75 & & 162.47 & - & 632.41 & (294) & 1753.36 &  82751.08 \\
     & 64 units  & & 68.40 & - & 64.06 & (138) & 221.65 & & 591.78 & - & 737.41 & (291) & 1793.68 & 596213.27 \\
    \midrule
    \multirow{4}{*}{8-layer} 
     & 32 units  & &  23.55 & - & 134.44 & (155) &  260.87 & &  178.88 & - & 1294.9 & (310) & 3764.32 & 9148849 \\
     & 64 units  & &  67.43 & - & 284.41 & (333) &  531.62 & &  546.66 & - & 3209.0 & (621) & 7963.48 & 7.48E+07 \\
     & 128 units & & 123.39 & - & 461.46 & (624) &  814.95 & & 1073.40 & - & 5309.5 & (1096) & 12920.88 & 3.60E+08 \\
     & 256 units & & 176.81 & - & 694.33 & (1835) & 1132.61 & & 1743.15 & - & 9415.6 & (3578) & 19441.24 & 1.66E+09 \\
    \midrule
    \multirow{3}{*}{18-layer}
     & 32 units  & &  244.79 & - & 9392.8 & (182) & 37785.55 & & 1508.21 & - & 80042 & (300) & 262998 & 1.54E+13 \\
     & 64 units  & &  210.53 & - & 10618 & (428) & 45570.58 & & 1396.26 & - & 87044 & (565) & 297304 & 5.20E+14  \\
     & 128 units & &  380.77 & - & 15440 & (922) & 66761.66 & & 2194.38 & - & 1.6205E+05 & (1117) & 432320 & 3.63E+16  \\
    \bottomrule 
    \end{tabular}%
    }    
  \label{table:main_results2}%
\end{table}%

\subsection{More discussions on the numerical results in the main paper}
For all the models in Tables \ref{table:main_results} and \ref{table:main_results2}, we observe that LipSDP-NSR provides tighter Lipschitz bounds than the norm product of the weight matrices. For small neural networks, e.g., the 2-layer network with 16 units, the deviation of LipSDP-NSR to the true Lipschitz constant under the $\ell_2$ perturbation is especially small. We note that the computation time increases in polynomial time with the number of units and decision variables. Due to the convenient block-wise implementation of the SDP conditions, we observe good scalability of LipSDP-NSR to neural networks with an increasing number of layers, being able to handle 18-layer neural networks. In comparison to LipSDP-RR, LipSDP-NSR shows tighter bounds especially for larger neural networks and its computational advantages also become apparent for larger neural networks. Table~\ref{table:main_results} shows that for small neural networks LipSDP-RR provides comparable bounds to LipSDP-NSR and is even evaluated faster. But for larger neural networks, the bounds of LipSDP-RR are much looser and the computation time blows up. For all models, the computation time required to solve the SDP for the $\ell_2$ bounds is slightly faster than the $\ell_\infty$ case. This is expected as the constraints in (\ref{eq:LMIs_linfty2}) include a larger SDP condition for the last layer than problem (\ref{eq:SDP_scalable}) and additional decision variables $\mu$. We note that our $\ell_\infty$ bounds are not only significantly smaller than the $\ell_\infty$ bounds of the matrix norm product but also notably smaller (roughly by factor 2) than the bounds computed via $\ell_2$ Lipschitz bounds and the equivalence of norms.

\subsection{Extra numerical results on residual neural networks}
In this section, we present numerical results of LipSDP-NSR on residual neural networks as introduced in Section \ref{sec:ResNets}. The first and the last layer of our $l$-layer residual neural networks are linear layers and the fully residual part consists of the remaining $l-2$ layers. In our examples, we choose $G_i=I$ in all residual layers. We compare our bounds to a sampling-based lower bound, and the matrix product, where for the $i$-th residual layer the Lipschitz constant is bounded by $1+\|G_i\|_2\|W_i\|_2$. To compute LipSDP-NSR, we analyze the fully residual part of the neural network using an SDP based on~(\ref{th:res2}). Then we multiply the result with $\|W_1\|\|W_l\|$ to obtain a bound for the entire network. We state the results for the choice $P\neq 0$ and $S=0$, as well as for $P=S=0$ in (\ref{eq:LMIs_res}).

Our results are summarized in Table \ref{tab:resnets}. We observe that LipSDP-NSR provides more accurate Lipschitz bounds than the matrix product. The SDP condition (\ref{eq:LMIs_res}) is non-sparse and in this case, it is non-trivial to break down the constraint (\ref{eq:LMIs_res}) into multiple smaller constraints. This is reflected in the computation time of the SDP solution. The computation times for small residual networks are lower than for small feed-forward networks, but they increase more drastically for larger residual neural networks. Comparing the results for $P\neq 0$, $S=0$ and the ones for $P=S=0$, respectively, in (\ref{eq:LMIs_res}), we notice that the computation times are lower for the reduced number of decision variables ($P=S=0$), yet the accuracy of the Lipschitz bound is worse for $P=S=0$.

\begin{table}[h]
  \centering
  \caption{This table presents the results of our LipSDP-NSR method on several residual neural networks trained on MNIST with more than 90\% accuracy. We compare against several metrics which are described in detail in Section~\ref{label:experiments}. For LipSDP-NSR, we state results choosing $P\neq 0$ and $S=0$ as well as choosing $P=S=0$ in (\ref{eq:LMIs_res}).
}
    \begin{tabular}{lrrrrrrrrr}
    \toprule
    \multicolumn{2}{c}{\multirow{3}[2]{*}{\makecell{\textbf{Model} \\ (units per layer)}}} & & \multicolumn{6}{c}{\boldmath{}\textbf{$\ell_2$}} \\
    \cmidrule{4-9}
    & & & \multirow{2}{*}{\textbf{Sample}} & \multicolumn{4}{c}{\textbf{LipSDP-NSR}} & \multirow{2}{*}{\textbf{MP}} \\
    \cmidrule{5-8}
    & & & & \multicolumn{2}{c}{$P\neq 0,~S=0$} & \multicolumn{2}{c}{$P=S=0$} \\
    \midrule
    \multirow{2}{*}{5-layer} 
     & 32 units  & & 15.33 & 29.75 & (2) & 33.04 & (1) & 55.64  \\
     & 64 units  & & 14.28 & 31.40 & (21) & 34.64 & (11) & 53.25  \\
    \midrule
    \multirow{3}{*}{8-layer} 
     & 32 units  & & 8.99 & 21.34 & (18) & 22.98 & (7) & 46.60 \\
     & 64 units  & & 8.34 & 17.88 & (371) & 18.52 & (115) & 35.76 \\
     & 128 units & & 7.82 & 17.68 & (4629) & 18.14 & (2112) & 34.90 \\
    \bottomrule 
    \end{tabular}%
    \label{tab:resnets}%
\end{table}%

\section{Additional Discussions}\label{app:generalizations}

\subsection{Single-layer residual network with GroupSort/Householder activations}\label{app:Res_single_layer}
For completeness, we address the special case of a single-layer residual neural network. Consider a single-layer residual network model
\begin{equation}\label{eq:single-layer_res}
    f_\theta (x)=H_1x+G_1\phi(W_1 x+b_1).
\end{equation}
If $\phi$ is $1$-Lipschitz, one will typically use a triangle inequality to obtain the Lipschitz bound $\|H_1\|_2+\|G_1\|_2 \|W_1\|_2$. However, this bound can be quite loose. 
If $\phi$ is slope-restricted on $[0,1]$, a variant of LipSDP \cite[Theorem 4]{araujoICLR2023} provides improved Lipschitz bounds. Similarly, if $\phi$ is GroupSort/Householder, we can apply Lemma \ref{lem:GroupSort} to generalize \cite[Theorem 4]{araujoICLR2023} as follows.
\begin{theorem}\label{thm:res1}
    Consider a single-layer generalized residual network (\ref{eq:single-layer_res}), where $\phi:\bbR^{n_1}\to\bbR^{n_1}$ is GroupSort (Householder) with group size $n_g$. Denote $N=\frac{n_1}{n_g}$. Suppose there exist $\rho > 0$, $\gamma, \nu, \tau \in \bbR^N$,  $\lambda \in \bbR_+^N$,  such that
\begin{align}\label{eq:lmi_res1}
 \begin{bmatrix}I & 0 \\ H_1 & G_1 \end{bmatrix}^\top\begin{bmatrix}\rho I & 0 \\ 0 & -I\end{bmatrix} \begin{bmatrix}I & 0 \\ H_1 & G_1\end{bmatrix}\succeq \begin{bmatrix}
        W_1^\top (T-2S) W_1 & W_1^\top (P+S)\\
        (P+S)W_1 & -T-2P
    \end{bmatrix},
\end{align} 
where $(T,S,P)$ are given by equation (\ref{eq:TSP}) (equation (\ref{eq:TSP_Householder})). Then $f_\theta$ is $\sqrt{\rho}$-Lipschitz in the $\ell_2\rightarrow \ell_2$ sense.
\end{theorem}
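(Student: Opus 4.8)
The plan is to mirror the congruence-multiplication argument already used for the motivating example in Appendix~\ref{app:motivating_example} and for the multi-layer residual case in the proof of Theorem~\ref{th:res2}, now specialized to a single residual block. Fix arbitrary $x,y\in\bbR^{n_0}$ and abbreviate $\Delta x := x-y$ and $\Delta v := \phi(W_1x+b_1)-\phi(W_1y+b_1)$. The first, purely structural observation is that the additive bias cancels in every difference, so $f_\theta(x)-f_\theta(y) = H_1\Delta x + G_1\Delta v$; equivalently, $\begin{bsmallmatrix} I & 0\\ H_1 & G_1\end{bsmallmatrix}\begin{bsmallmatrix}\Delta x\\ \Delta v\end{bsmallmatrix} = \begin{bsmallmatrix}\Delta x\\ f_\theta(x)-f_\theta(y)\end{bsmallmatrix}$.

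Next I would left- and right-multiply the matrix inequality (\ref{eq:lmi_res1}) by $\begin{bmatrix}\Delta x^\top & \Delta v^\top\end{bmatrix}$ and its transpose. By the identity above, the left-hand side collapses to $\rho\|\Delta x\|_2^2 - \|H_1\Delta x + G_1\Delta v\|_2^2 = \rho\|x-y\|_2^2 - \|f_\theta(x)-f_\theta(y)\|_2^2$, while the right-hand side becomes
\begin{equation*}
\begin{bmatrix} W_1\Delta x\\ \Delta v\end{bmatrix}^\top \begin{bmatrix} T-2S & P+S\\ P+S & -T-2P\end{bmatrix}\begin{bmatrix} W_1\Delta x\\ \Delta v\end{bmatrix},
\end{equation*}
after pushing the factor $W_1$ through the top-left and off-diagonal blocks so that the vector entering the quadratic form is $W_1\Delta x$ rather than $\Delta x$.

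The crucial step is to recognize this right-hand side as an instance of the quadratic constraint of Lemma~\ref{lem:GroupSort} (resp. Lemma~\ref{lem:Householder}). Setting $\hat x := W_1x+b_1$ and $\hat y := W_1y+b_1$, one has $\hat x-\hat y = W_1\Delta x$ and $\phi(\hat x)-\phi(\hat y)=\Delta v$, so the displayed quantity is exactly the left-hand side of (\ref{eq:iQC_GroupSort}) evaluated at $(\hat x,\hat y)$ with $(T,S,P)$ built from the same $(\lambda,\gamma,\nu,\tau)$; the lemma asserts it is $\ge 0$. Chaining the two sides of the multiplied inequality then yields $\rho\|x-y\|_2^2 - \|f_\theta(x)-f_\theta(y)\|_2^2 \ge 0$, i.e. $\|f_\theta(x)-f_\theta(y)\|_2 \le \sqrt{\rho}\,\|x-y\|_2$ for all $x,y\in\bbR^{n_0}$, which is the claim.

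I do not anticipate a genuine obstacle: the argument is a routine congruence transformation, and the only care needed is the bookkeeping of the $W_1$ factors when matching (\ref{eq:lmi_res1}) to the quadratic form in $W_1\Delta x$, together with the remark that $b_1$ is immaterial because it drops out of every difference. This is essentially the $l=1$ specialization of the proof of Theorem~\ref{th:res2}, with the identity summand replaced by $H_1$, so the reasoning transfers once the matrices are aligned.
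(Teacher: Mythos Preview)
Your proposal is correct and matches the paper's own proof essentially verbatim: the paper also left- and right-multiplies (\ref{eq:lmi_res1}) by $\begin{bmatrix}(x-y)^\top & (\phi(W_1x+b_1)-\phi(W_1y+b_1))^\top\end{bmatrix}$ and its transpose, identifies the left-hand side as $\rho\|x-y\|_2^2-\|f_\theta(x)-f_\theta(y)\|_2^2$ and the right-hand side as the quadratic constraint from Lemma~\ref{lem:GroupSort}/\ref{lem:Householder}, and concludes.
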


\begin{proof}
Given $x^0, y^0\in~\bbR^{n_0}$, we set
    $x^1=\phi(W_1x^0+b_1)$, and $y^1=\phi(W_1 y^0+b_1)$. 
We left and right multiply (\ref{eq:lmi_res1}) by $\begin{bmatrix} (x^0-y^0)^\top & (x^1-y^1)^\top\end{bmatrix}$ and its transpose respectively, and obtain
\begin{align}
\begin{split}\label{eq:A7}
&\rho \|x^0 - y^0\|_2^2 -  \|H_1(x^0-y^0) +  G_1(x^1-y^1) \|_2^2\\
\geq 
& \begin{bmatrix}
    W_1 (x^0-y^0)\\
    \phi(W_1 x^0 + b_1) - \phi(W_1 y^0 + b_1)
\end{bmatrix}^\top
\begin{bmatrix}
    T-2S & P+S\\
    P+S & - T - 2P
\end{bmatrix}
\begin{bmatrix}
    W_1 (x^0-y^0)\\
    \phi(W_1 x^0 + b_1) - \phi(W_1 y^0 + b_1)
\end{bmatrix}.
\end{split}
\end{align}
Based on Lemma~\ref{lem:GroupSort}, we know the left side of (\ref{eq:A7}) is non-negative.
Therefore, the left side of (\ref{eq:A7}) is also non-negative. This immediately leads to the desired input-output bound.
\end{proof}

\subsection{Convolutional neural networks with GroupSort/Householder activations}\label{app:CNNs}
All results for fully connected neural networks in Section \ref{sec:main_results} can also be applied to convolutional neural networks, unrolling the convolutional layers as fully connected matrices based on Toeplitz matrices. For efficiency and scalability, SDP-based Lipschitz constant estimation has been extended to exploit the structure in convolutional neural networks (CNNs) \citep{pauli2022lipschitz,gramlich2023convolutional}. In particular, they use a state space representation for convolutions that enables the formulation of compact SDP conditions. Combining those results with our newly derived  quadratic constraint (\ref{eq:iQC_GroupSort}), we can extend our Lipschitz analysis to address larger-scale convolutional neural networks with GroupSort/Householder activation functions. In this section, we briefly streamline this extension and refer the interested reader to \cite{gramlich2023convolutional} for details.

Similar to the previous case where we modify the choice of the matrix $X$ in the original LipSDP constraint (\ref{eq:SDP}) to derive LipSDP-NSR, cmp. Section \ref{sec:motivation}, we can adapt the SDP condition in \cite{gramlich2023convolutional} to address CNNs with GroupSort/Householder activation functions via modifying a corresponding matrix using Lemma \ref{lem:GroupSort} or Lemma \ref{lem:Householder}. Specifically,  setting $(Q^w, S^w, R^w)$ in \cite[Theorem 5]{gramlich2023convolutional} as $Q^w=T-2S$, $S^w=P+S$, and $R^w=-T-2P$ with $(T,S,P)$ given by (\ref{eq:TSP})/(\ref{eq:TSP_Householder}) immediately leads to the modified SDP condition for Lipschitz constant estimation of CNNs with GroupSort/Householder activation functions.

\subsection{Deep equilibrium models with GroupSort/Householder activations} \label{app:DEQ}
In contrast to explicit feed-forward networks, we may also consider
implicit learning models. 
In this section, 
we consider the so-called deep-equilibrium models (DEQ) \citep{bai2019deep}. Given an input $x\in \bbR^n$, the DEQ output is obtained by solving a nonlinear equation for the hidden variable $z\in \bbR^d$:
\begin{align}\label{eq:deq}
    z = \phi(W z + U x +  b_z),\quad y = W_o z + b_y,
\end{align}
where $W \in \bbR^{d\times d}$, $U \in \bbR^{d\times n}$, $W_o\in \bbR^{m\times d}$, $b_z\in \bbR^d$, and $b_y \in \bbR^m$ are the model parameters to be trained. 
Previously, SDP-based Lipschitz bounds for DEQs have been obtained for the case where $\phi$ is slope-restricted on $[0,1]$~\citep{revay2020lipschitz}. 
Here we will extend these results to the case where $\phi$ is a GroupSort/Householder activation function. 

The DEQ model (\ref{eq:deq}) is well-posed if for each input $x \in \bbR^m$, there exists a unique solution $z \in \bbR^d$ satisfying (\ref{eq:deq}). First, we leverage the incremental quadratic constraint in Lemma~\ref{lem:GroupSort} to derive a sufficient condition that ensures the DEQ~(\ref{eq:deq}) to be well-posed.
\begin{theorem}\label{thm:C4}
Consider the DEQ~(\ref{eq:deq}), where $\phi:\bbR^d\rightarrow \bbR^d$ is GroupSort (Householder) with group size $n_g$. Denote $N=\frac{d}{n_g}$. If there exist $\lambda \in \bbR^N_+$, $\gamma,\nu,\tau \in \bbR^N$ and $\Pi \succ 0$ such that
\begin{align}\label{eq:deq_wellposed_lmi}
\begin{bmatrix}
-2 \Pi & \Pi \\
\Pi  & 0
\end{bmatrix} +\begin{bmatrix}
 W^\top (T-2S) W &  W^\top (P+S)\\
 (P+S) W & -T -2P
\end{bmatrix} \prec 0,
\end{align}
where $(T,S, P)$ are defined by equation (\ref{eq:TSP}) (equation (\ref{eq:TSP_Householder})), then the DEQ~(\ref{eq:deq}) is well-posed.
\end{theorem}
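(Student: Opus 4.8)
The plan is to show that for every input $x$ the fixed-point equation $z=F(z)$, with $F(z):=\phi(Wz+Ux+b_z)$, has exactly one solution, which is precisely well-posedness of~(\ref{eq:deq}). Denote by $M_1$ and $M_2$ the first and second $2\times 2$-block matrices on the left-hand side of~(\ref{eq:deq_wellposed_lmi}), so the hypothesis reads $M_1+M_2\prec 0$; by compactness of the unit sphere there is $\epsilon>0$ with $M_1+M_2\preceq -\epsilon I$. Fix $z_1,z_2\in\bbR^d$ and set $d:=z_1-z_2$, $\Delta:=F(z_1)-F(z_2)=\phi(Wz_1+Ux+b_z)-\phi(Wz_2+Ux+b_z)$. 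Applying Lemma~\ref{lem:GroupSort} (resp.\ Lemma~\ref{lem:Householder}) with the two arguments $Wz_1+Ux+b_z$ and $Wz_2+Ux+b_z$, whose difference is $Wd$, and using the Kronecker form of $(T,S,P)$ from~(\ref{eq:TSP}) (resp.~(\ref{eq:TSP_Householder})), gives
\[
\begin{bmatrix} d \\ \Delta \end{bmatrix}^\top M_2 \begin{bmatrix} d \\ \Delta \end{bmatrix}=
\begin{bmatrix} Wd \\ \Delta \end{bmatrix}^\top
\begin{bmatrix} T-2S & P+S \\ P+S & -T-2P \end{bmatrix}
\begin{bmatrix} Wd \\ \Delta \end{bmatrix}\ \geq\ 0 .
\]
Left- and right-multiplying $M_1+M_2\preceq -\epsilon I$ by $[\,d^\top\ \Delta^\top\,]$ and its transpose and discarding the nonnegative $M_2$-term yields $[\,d^\top\ \Delta^\top\,]M_1[\,d^\top\ \Delta^\top\,]^\top\le -\epsilon\Vert d\Vert_2^2$, that is $-2d^\top\Pi(d-\Delta)\le -\epsilon\Vert d\Vert_2^2$, so the key estimate
\[
d^\top\Pi(d-\Delta)\ \geq\ \tfrac{\epsilon}{2}\,\Vert d\Vert_2^2
\]
holds for all $z_1,z_2\in\bbR^d$.

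Uniqueness is then immediate: if $z_1,z_2$ both solve the equation, then $\Delta=d$, hence $0\ge\tfrac{\epsilon}{2}\Vert d\Vert_2^2$ and $z_1=z_2$. For existence, I would introduce the damped iteration $T_\eta(z):=z-\eta\,(z-F(z))$ with a step $\eta>0$ to be fixed, and show it is a contraction in the weighted norm $\Vert u\Vert_\Pi:=(u^\top\Pi u)^{1/2}$. Since $\phi$ is $1$-Lipschitz, $F$ is $\Vert W\Vert_2$-Lipschitz, so $\Vert d-\Delta\Vert_\Pi^2\le \lambda_{\max}(\Pi)(1+\Vert W\Vert_2)^2\Vert d\Vert_2^2=:C\Vert d\Vert_2^2$. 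As $T_\eta(z_1)-T_\eta(z_2)=d-\eta(d-\Delta)$, expanding and using the key estimate together with $\Vert d\Vert_2^2\ge \lambda_{\max}(\Pi)^{-1}\Vert d\Vert_\Pi^2$ and $\Vert d\Vert_2^2\le\lambda_{\min}(\Pi)^{-1}\Vert d\Vert_\Pi^2$ gives
\[
\Vert T_\eta(z_1)-T_\eta(z_2)\Vert_\Pi^2 = \Vert d\Vert_\Pi^2-2\eta\,d^\top\Pi(d-\Delta)+\eta^2\Vert d-\Delta\Vert_\Pi^2 \le \Big(1-\tfrac{\eta\epsilon}{\lambda_{\max}(\Pi)}+\tfrac{\eta^2 C}{\lambda_{\min}(\Pi)}\Big)\Vert d\Vert_\Pi^2 .
\]
Choosing $\eta\in\big(0,\ \epsilon\lambda_{\min}(\Pi)/(C\lambda_{\max}(\Pi))\big)$ makes the bracket strictly smaller than $1$, so $T_\eta$ is a contraction on the complete metric space $(\bbR^d,\Vert\cdot\Vert_\Pi)$. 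The Banach fixed-point theorem then gives a unique fixed point $z^\star$, and $T_\eta(z^\star)=z^\star$ is equivalent to $z^\star=F(z^\star)$; hence~(\ref{eq:deq}) has a unique equilibrium for every $x$, i.e.\ it is well-posed.

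The routine parts are the block-matrix identity relating the $(Wd,\Delta)$-form of~(\ref{eq:iQC_GroupSort}) to the $(d,\Delta)$-form of $M_2$, and the Rayleigh-quotient bookkeeping that passes between $\Vert\cdot\Vert_2$ and $\Vert\cdot\Vert_\Pi$. The step I expect to be the main obstacle is existence: the LMI only delivers the one-sided (monotonicity-type) bound $d^\top\Pi(d-\Delta)\ge \tfrac\epsilon2\Vert d\Vert_2^2$, not a contraction of $F$ itself — indeed $W$ is not norm-constrained, so $F$ may be expansive — and converting this into a genuine contraction needs the damped iteration together with a careful, $\Pi$- and $\Vert W\Vert_2$-dependent choice of $\eta$. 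As a fallback I would derive existence from strong monotonicity of $u\mapsto \Pi\,(u-F(u))$ via a Minty-type surjectivity argument, but the damped-iteration route is more elementary and self-contained, and it also reproves uniqueness for free.
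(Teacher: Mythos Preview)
Your argument is correct and closely parallels the paper's, but the two proofs package existence differently. Both derive the same monotonicity estimate $d^\top\Pi(d-\Delta)\ge \tfrac{\epsilon}{2}\|d\|_2^2$ from the strict LMI together with Lemma~\ref{lem:GroupSort}/\ref{lem:Householder}. From there the paper passes to the continuous-time ODE $\dot z(t)=-z(t)+\phi(Wz(t)+Ux+b_z)$, uses $V(\Delta z)=\Delta z^\top\Pi\,\Delta z$ as an incremental Lyapunov function, shows $\dot V+2\epsilon V\le 0$, and concludes by contraction of the flow in the $\Pi$-weighted norm that a unique equilibrium exists. Your damped iteration $T_\eta=(1-\eta)\,\mathrm{id}+\eta F$ is precisely the forward-Euler discretization of that ODE, and your contraction bound for $T_\eta$ in $\|\cdot\|_\Pi$ is the discrete analogue of the paper's $\dot V\le -2\epsilon V$. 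The discrete route is a bit more self-contained: it invokes only Banach's fixed-point theorem and avoids appealing to ODE existence/uniqueness and flow-contraction theory, at the price of the explicit step-size bookkeeping with $\lambda_{\min}(\Pi),\lambda_{\max}(\Pi),\|W\|_2$. The paper's ODE formulation is the standard control-theoretic presentation and makes the connection to incremental stability/contraction analysis more transparent. Either way, uniqueness falls out immediately from the monotonicity inequality, exactly as you note.
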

\begin{proof}
If we can show that the following ODE with any fixed $x\in \bbR^n$ has a unique equilibrium point
\begin{align}\label{eq:deq_ode}
    \dv{t}z(t) = -z(t) + \phi(W z(t) + U x + b_z),
\end{align}
then (\ref{eq:deq}) is guaranteed to be well-posed. 
We will show that the condition~\ref{eq:deq_wellposed_lmi} guarantees that the ODE (\ref{eq:deq_ode}) converges globally to a unique equilibrium point via standard contraction arguments. 
Obviously, the GroupSort/Householder activation function $\phi$ is a continuous piecewise linear mapping by nature, and hence the solutions of (\ref{eq:deq_ode}) are well defined.
Then we define the difference of two arbitrary trajectories of DEQ (\ref{eq:deq_ode}) as $\Delta z(t):= z(t) - z'(t) \in \mathbb{R}^d$. In general, we allow $z(t)$ and $z'(t)$ are generated by ODE (\ref{eq:deq_ode}) with different initial conditions. 
Next, we define
$V\left(\Delta z(t)\right) : = \Delta z(t)^\top \Pi \Delta z(t)$. Here, $V$ can be viewed as a weighted $\ell_2$ norm, i.e., $ V\left(\Delta z(t)\right)= \langle \Delta z(t), \Delta z(t) \rangle_\Pi$ where $\langle \cdot,\cdot \rangle_\Pi$ denotes the $\Pi$-weighted inner product. 
 For simplicity, we also denote $v_z(t) :=\phi(W z(t) + U x+ b)$. If (\ref{eq:deq_wellposed_lmi}) holds, then there exists a sufficiently small positive number $\epsilon$ such that the following non-strict matrix inequality also holds:
\begin{align}\label{eq:deq_wellposed_lmi1}
\begin{bmatrix}
-2(1-\epsilon) \Pi & \Pi \\
\Pi  & 0
\end{bmatrix} +\begin{bmatrix}
 W^\top (T-2S) W &  W^\top (P+S)\\
 (P+S) W & -T -2P
\end{bmatrix} \preceq 0.
\end{align}
Based on the above matrix inequality, we immediately have
\begin{align}\label{eq:deq_wellposed_lmi2}
\begin{split}
\begin{bmatrix}
        z(t) - z'(t)\\
        v_{z}(t) - v_{z'}(t)
    \end{bmatrix}^\top & \begin{bmatrix}
-2(1-\epsilon) \Pi & \Pi \\
\Pi  & 0
\end{bmatrix}  \begin{bmatrix}
        z(t) - z'(t)\\
        v_z(t) - v_{z'}(t)
    \end{bmatrix} \\
    & +\begin{bmatrix}
        z(t) - z'(t)\\
        v_{z}(t) - v_{z'}(t)
    \end{bmatrix}^\top\begin{bmatrix}
 W^\top (T-2S) W &  W^\top (P+S)\\
 (P+S) W & -T -2P
\end{bmatrix} \begin{bmatrix}
        z(t) - z'(t)\\
        v_z(t) - v_{z'}(t)
    \end{bmatrix} \leq 0.
\end{split}
\end{align}
It is easy to verify that the first term on the left side of (\ref{eq:deq_wellposed_lmi2}) is equal to $\dv{t}V\left(\Delta z(t)\right)+2\epsilon V\left(\Delta z(t)\right)$. In addition, by Lemma \ref{lem:GroupSort}/Lemma \ref{lem:Householder}, the second term on the left side of (\ref{eq:deq_wellposed_lmi2}) is guaranteed to be non-negative due to the properties of $\phi$. Therefore, we must have
\begin{align*}
    \dv{t}V\left(\Delta z(t)\right)+2\epsilon V\left(\Delta z(t)\right)\leq 0.
\end{align*}
Now we can easily see that (\ref{eq:deq_ode}) is a contraction mapping with respect to the $\Pi$-weighted $\ell_2$ norm. The contraction rate is actually exponential and characterized by $\epsilon$. 
Since the flow map of our ODE is continuous and time-invariant, we can directly apply the contraction mapping theorem to prove the existence and uniqueness of the equilibrium point for (\ref{eq:deq_ode}). This immediately leads to the desired conclusion. 
\end{proof}

Similar to \cite{revay2020lipschitz}, it is possible to simplify the matrix inequality condition in Theorem \ref{thm:C4} via the KYP lemma and frequency-domain inequalities. Such developments are straightforward and omitted here. 
Given that the DEQ (\ref{eq:deq}) is well-posed, we can 
combine the arguments in~\cite{revay2020lipschitz} with Lemma \ref{lem:GroupSort} to obtain the following SDP conditions for analyzing the Lipschitz bounds of DEQ with GroupSort/MaxMin activations. 

\begin{theorem}
Consider the DEQ (\ref{eq:deq}) with $\phi$ being GroupSort with group size $n_g$. 
Suppose (\ref{eq:deq}) is well-posed, and denote $N=\frac{d}{n_g}$. If there exist $\lambda \in \bbR^N_+$, $\gamma \in \bbR^N$, $\nu \in \bbR^N$ and $\tau \in \bbR^N$ such that
\begin{align}\label{eq:DEQ_SDP}
\begin{bmatrix}
W_o^\top W_o & 0 \\
0  & -\rho I
\end{bmatrix}  + \begin{bmatrix}
  W^\top  &   I\\
 U^\top  & 0
\end{bmatrix}\begin{bmatrix}
  T-2S  &   P+S\\
 P+S  & -T -2P
\end{bmatrix} \begin{bmatrix}
  W  &   U\\
 I  & 0
\end{bmatrix} \preceq 0.
\end{align}
where $(T,S,P)$ are defined by~equation (\ref{eq:TSP}) (equation (\ref{eq:TSP_Householder})), then (\ref{eq:deq}) is $\sqrt{\rho}$-Lipschitz from $x$ to $y$ in the $\ell_2 \rightarrow \ell_2$ sense.
\end{theorem}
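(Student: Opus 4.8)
The plan is to transcribe the quadratic-constraint (S-procedure) argument used in the proof of Theorem~\ref{thm:FC} to the equilibrium points of the DEQ. Fix two inputs $x, x' \in \bbR^n$. Since (\ref{eq:deq}) is assumed well-posed, there exist \emph{unique} hidden states $z, z' \in \bbR^d$ with $z = \phi(Wz + Ux + b_z)$ and $z' = \phi(Wz' + Ux' + b_z)$; the associated outputs are $y = W_o z + b_y$ and $y' = W_o z' + b_y$, so $y - y' = W_o(z - z')$, and it suffices to bound $\|W_o(z-z')\|_2$.

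The key step is to invoke the incremental quadratic constraint of Lemma~\ref{lem:GroupSort} (resp.\ Lemma~\ref{lem:Householder}) at the two preactivation points $a := Wz + Ux + b_z$ and $a' := Wz' + Ux' + b_z$. Because $\phi(a) = z$ and $\phi(a') = z'$, the biases cancel and $a - a' = W(z-z') + U(x-x')$, $\phi(a)-\phi(a') = z-z'$, hence
\begin{equation*}
\begin{bmatrix} a-a' \\ \phi(a)-\phi(a') \end{bmatrix} = \begin{bmatrix} W & U \\ I & 0 \end{bmatrix}\begin{bmatrix} z-z' \\ x-x' \end{bmatrix}.
\end{equation*}
Substituting this into (\ref{eq:iQC_GroupSort}) yields
\begin{equation*}
\begin{bmatrix} z-z' \\ x-x' \end{bmatrix}^\top \begin{bmatrix} W & U \\ I & 0 \end{bmatrix}^\top \begin{bmatrix} T-2S & P+S \\ P+S & -T-2P \end{bmatrix}\begin{bmatrix} W & U \\ I & 0 \end{bmatrix}\begin{bmatrix} z-z' \\ x-x' \end{bmatrix} \ge 0,
\end{equation*}
with $(T,S,P)$ as in (\ref{eq:TSP}) / (\ref{eq:TSP_Householder}). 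Now left- and right-multiply (\ref{eq:DEQ_SDP}) by $\begin{bsmallmatrix}(z-z')^\top & (x-x')^\top\end{bsmallmatrix}$ and its transpose; noting that the second summand of (\ref{eq:DEQ_SDP}) is exactly the congruence $\begin{bsmallmatrix} W & U \\ I & 0 \end{bsmallmatrix}^\top \begin{bsmallmatrix} T-2S & P+S \\ P+S & -T-2P \end{bsmallmatrix}\begin{bsmallmatrix} W & U \\ I & 0 \end{bsmallmatrix}$ appearing above, the resulting scalar inequality reads
\begin{equation*}
\|W_o(z-z')\|_2^2 - \rho\|x-x'\|_2^2 + q \le 0,
\end{equation*}
where $q \ge 0$ is the left-hand side of the displayed incremental quadratic constraint.

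Dropping the nonnegative term $q$ gives $\|y-y'\|_2^2 = \|W_o(z-z')\|_2^2 \le \rho\|x-x'\|_2^2$, i.e.\ $\|y-y'\|_2 \le \sqrt{\rho}\,\|x-x'\|_2$, which is the claim. I do not expect a genuine obstacle here: the computation is a direct instance of the quadratic-constraint argument already used repeatedly in the paper, and the only points requiring care are (i) verifying that the block factorization $\begin{bsmallmatrix} W & U \\ I & 0 \end{bsmallmatrix}$ correctly maps the increment $(z-z',\,x-x')$ onto the pair $(a-a',\,\phi(a)-\phi(a'))$, and (ii) observing that well-posedness is precisely what makes $z$ (hence $y$) a well-defined function of $x$, so that ``$\sqrt{\rho}$-Lipschitz from $x$ to $y$'' is meaningful; the genuinely nontrivial work — guaranteeing well-posedness — is already handled separately in Theorem~\ref{thm:C4}. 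One should also record the implicit hypothesis $\rho > 0$, consistent with the other theorems of the paper.
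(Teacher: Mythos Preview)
Your proposal is correct and follows essentially the same approach as the paper's proof: both left- and right-multiply (\ref{eq:DEQ_SDP}) by the increment vector $\begin{bsmallmatrix}(z-z')^\top & (x-x')^\top\end{bsmallmatrix}$, then invoke Lemma~\ref{lem:GroupSort}/\ref{lem:Householder} to show the congruence term is nonnegative, forcing $\|W_o(z-z')\|_2^2 \le \rho\|x-x'\|_2^2$. Your version is in fact slightly more explicit than the paper's, since you spell out the factorization $\begin{bsmallmatrix} a-a' \\ \phi(a)-\phi(a') \end{bsmallmatrix} = \begin{bsmallmatrix} W & U \\ I & 0 \end{bsmallmatrix}\begin{bsmallmatrix} z-z' \\ x-x' \end{bsmallmatrix}$ that makes the application of (\ref{eq:iQC_GroupSort}) transparent.
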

\begin{proof}
Given $x$ and $x'$, denote the corresponding hidden DEQ variables as $z$ and $z'$, respectively. 
Based on (\ref{eq:DEQ_SDP}), we immediately have
\begin{align}\label{eq:C9}
\begin{split}
\begin{bmatrix}
        z - z'\\
        x-x'
    \end{bmatrix}^\top & \begin{bmatrix}
W_o^\top W_o & 0 \\
0  & -\rho I
\end{bmatrix} \begin{bmatrix}
        z - z'\\
        x-x'
    \end{bmatrix} \\
    &+ \begin{bmatrix}
        z - z'\\
        x-x'
    \end{bmatrix}^\top\begin{bmatrix}
  W^\top  &   I\\
 U^\top  & 0
\end{bmatrix}\begin{bmatrix}
  (T-2S)  &   (P+S)\\
 (P+S)  & -T -2P
\end{bmatrix} \begin{bmatrix}
  W  &   U\\
 I  & 0
\end{bmatrix} \begin{bmatrix}
        z - z'\\
        x-x'
    \end{bmatrix}\leq 0.
\end{split}
\end{align}
Based on (\ref{eq:deq}) and Lemma \ref{lem:GroupSort}/Lemma \ref{lem:Householder}, the second term on the left side of (\ref{eq:C9}) has to be non-negative. Therefore, the first term on the the left side of (\ref{eq:C9}) has to be non-positive. Then we must have
\begin{align*}
\|W_o z+b_y-(W_o z'+b_y)\|_2\le \sqrt{\rho} \|x-x'\|_2,
\end{align*}
which directly leads to the desired conclusion.
\end{proof}
In \cite{revay2020lipschitz}, SDPs that certify Lipschitz continuity have been used to derive parameterizations for DEQs with prescribed Lipschitz bounds. In a similar manner, our proposed SDP conditions can be used for network design. Further investigation is needed to address such research directions in the future.

\subsection{Neural ODE with GroupSort/Householder activations}\label{app:neuralODE}
In this section, we extend our approach to address another family of implicit learning models, namely neural ODEs~\citep{chen2018neural}. The neural ODE architecture can be viewed as a continuous-time analogue of explicit residual networks. Consider the following neural ODE:
\begin{align}\label{eq:neural_ode}
   \dv{t} z(t) = f_\theta(z(t), t) = G \phi \left(W_0 z(t) + W_1 t +   b_0\right)+b_1,\quad z(0) = x \in \bbR^n,
\end{align}
where $f_\theta : \mathbb{R}^n\times \mathbb{R} \rightarrow \mathbb{R}^n$ is a non-autonomous vector field parameterized as a neural network with trainable parameters $\theta = (G, W_0, W_1, b_0, b_1)$ and GroupSort/Householder activation $\phi$. The input $x$ to the neural ODE is used as the initial condition, and 
the output is just the solution of the resultant initial value problem at some final time $t_f > 0$.

Since $f_\theta$ is globally Lipschitz continuous in $z$ and $t$, we get a well-defined flow map  $\Phi^{t}_{0} : 
\mathbb{R}^n \rightarrow \mathbb{R}^n$ which yields the following formula
\begin{align*}
    \Phi^t_0(x) = z(t) = x + \int^t_0 f_\theta(z(s),s) \diff{s}.
\end{align*}
For simplicity, we fix $t_f=1$ and consider the neural ODE to be the map $\Phi^1_0$ that takes the input $z(0)=x$ to the final output $z(1)=\Phi^1_0(x)$. We can use Lemma~\ref{lem:GroupSort}/Lemma~\ref{lem:Householder} to calculate Lipschitz bounds for the neural ODE flow map $\Phi^1_0$ as follows.

\begin{theorem}\label{thm:LMI_neural_ODE}
Consider the neural ODE according to~(\ref{eq:neural_ode}), where $\phi : \bbR^n \rightarrow \bbR^n$ is GroupSort (Householder) with group size $n_g$. Denote $N = \frac{n}{n_g}$. Suppose there exist $\rho > 0$, $\lambda \in \bbR^N_+$, $\gamma,\nu,\tau \in \bbR^N$ such that
\begin{equation}\label{eq:LMI_neural_ode}
    \begin{bmatrix}
        -\rho I  & G\\
        G^\top & 0
    \end{bmatrix}
    +
    \begin{bmatrix}
        W_0^\top (T-2S) W_0 & W_0^\top (P+S)\\
        (P+S) W_0 & -T - 2P 
    \end{bmatrix}\preceq 0,
\end{equation}
    where $(T,S,P)$ are given by~equation (\ref{eq:TSP}) (equation (\ref{eq:TSP_Householder})), then the neural ODE flow map $\Phi_0^1$ is $\exp(\rho/2)$-Lipschitz in the $\ell_2 \rightarrow \ell_2$ sense.
\end{theorem}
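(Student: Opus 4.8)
The plan is to run a standard Grönwall (contraction) argument along the flow, using Lemma~\ref{lem:GroupSort} (resp.\ Lemma~\ref{lem:Householder}) as the quadratic constraint that pays for the growth rate. Fix two inputs $x,x'\in\bbR^n$ and let $z(t)=\Phi_0^t(x)$ and $z'(t)=\Phi_0^t(x')$ be the corresponding solutions of (\ref{eq:neural_ode}); since $f_\theta$ is globally Lipschitz in $z$ and continuous in $(z,t)$, these solutions exist, are unique, and are $C^1$ in $t$ (the piecewise-linear nature of $\phi$ only means $\dot z$ is continuous, not smooth, which is enough). Set $\Delta z(t):=z(t)-z'(t)$ and $\Delta v(t):=\phi(W_0 z(t)+W_1 t+b_0)-\phi(W_0 z'(t)+W_1 t+b_0)$. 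The key observation is that the affine term $W_1 t+b_0$ and the bias $b_1$ both cancel in the difference of the two vector fields, so that $\dot{\Delta z}(t)=G\,\Delta v(t)$, and the two arguments of $\phi$ differ by exactly $W_0\Delta z(t)$.

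Next I would introduce the Lyapunov-type function $V(t):=\Vert\Delta z(t)\Vert_2^2$ and differentiate along the trajectories, obtaining $\dot V(t)=2\,\Delta z(t)^\top\dot{\Delta z}(t)=2\,\Delta z(t)^\top G\,\Delta v(t)$. Multiplying the matrix inequality (\ref{eq:LMI_neural_ode}) on the left by $\begin{bmatrix}\Delta z(t)^\top & \Delta v(t)^\top\end{bmatrix}$ and on the right by its transpose yields
\begin{align*}
\bigl(-\rho\Vert\Delta z(t)\Vert_2^2+2\,\Delta z(t)^\top G\,\Delta v(t)\bigr)
+\begin{bmatrix}W_0\Delta z(t)\\ \Delta v(t)\end{bmatrix}^\top
\begin{bmatrix}T-2S & P+S\\ P+S & -T-2P\end{bmatrix}
\begin{bmatrix}W_0\Delta z(t)\\ \Delta v(t)\end{bmatrix}\le 0,
\end{align*}
where I used the factorization $\begin{bmatrix}W_0^\top(T-2S)W_0 & W_0^\top(P+S)\\ (P+S)W_0 & -T-2P\end{bmatrix}=\begin{bmatrix}W_0^\top & 0\\ 0 & I\end{bmatrix}\begin{bmatrix}T-2S & P+S\\ P+S & -T-2P\end{bmatrix}\begin{bmatrix}W_0 & 0\\ 0 & I\end{bmatrix}$. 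The first parenthesized term is precisely $\dot V(t)-\rho V(t)$, and the second term is nonnegative by Lemma~\ref{lem:GroupSort} (resp.\ Lemma~\ref{lem:Householder}) applied to the pair $\bigl(W_0 z(t)+W_1 t+b_0,\ W_0 z'(t)+W_1 t+b_0\bigr)$, whose difference is $W_0\Delta z(t)$ and whose image difference under $\phi$ is $\Delta v(t)$. Hence $\dot V(t)\le\rho\,V(t)$ for all $t$.

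Finally, Grönwall's inequality (equivalently, $\tfrac{d}{dt}\bigl(e^{-\rho t}V(t)\bigr)\le 0$) gives $V(t)\le e^{\rho t}V(0)$, so at $t=1$ we obtain $\Vert\Phi_0^1(x)-\Phi_0^1(x')\Vert_2^2=V(1)\le e^{\rho}\Vert x-x'\Vert_2^2$, i.e.\ $\Phi_0^1$ is $\exp(\rho/2)$-Lipschitz in the $\ell_2\to\ell_2$ sense. I do not expect a genuine obstacle here; the only points needing a word of care are (i) the cancellation of the time-dependent input and the biases, which is what makes the incremental quadratic constraint of Lemma~\ref{lem:GroupSort}/\ref{lem:Householder} directly applicable to the increments of $\phi$ along the flow, and (ii) the mild regularity bookkeeping required to differentiate $V$ along solutions of an ODE whose right-hand side is merely continuous (piecewise-linear), which is handled exactly as in the proof of Theorem~\ref{thm:C4}.
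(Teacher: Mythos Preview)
Your proposal is correct and follows essentially the same approach as the paper: define $V(t)=\|\Delta z(t)\|_2^2$, left/right multiply the LMI (\ref{eq:LMI_neural_ode}) by $[\Delta z(t)^\top\ \Delta v(t)^\top]$ and its transpose, identify the first block as $\dot V-\rho V$ and the second as nonnegative by Lemma~\ref{lem:GroupSort}/\ref{lem:Householder}, then apply Gr\"onwall to conclude. Your added remarks on the cancellation of $W_1 t+b_0$ and $b_1$ and on the regularity of $V$ are helpful clarifications that the paper leaves implicit.
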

\begin{proof}
Given two inputs $x$ and $x'$, we denote the corresponding trajectories as $z(t)$ and $z'(t)$, respectively. Then we 
define the difference of these two  trajectories as $\Delta z(t):= z(t) - z'(t) \in \mathbb{R}^n$. In addition, we define $V\left(\Delta z(t)\right):= ||\Delta z(t)||_2^2$ and $v_z(t) :=\phi(W_0 z(t) + W_1 t + b_0)$. Based on condition (\ref{eq:LMI_neural_ode}), we immediately have
\begin{align}\label{eq:node_lmi2}
\begin{split}
\begin{bmatrix}
        z(t) - z'(t)\\
        v_{z}(t) - v_{z'}(t)
    \end{bmatrix}^\top & \begin{bmatrix}
-\rho I & G \\
G^\top  & 0
\end{bmatrix}  \begin{bmatrix}
        z(t) - z'(t)\\
        v_z(t) - v_{z'}(t)
    \end{bmatrix} \\
    & +\begin{bmatrix}
        z(t) - z'(t)\\
        v_{z}(t) - v_{z'}(t)
    \end{bmatrix}^\top\begin{bmatrix}
 W_0^\top (T-2S) W_0 &  W_0^\top (P+S)\\
 (P+S) W_0 & -T -2P
\end{bmatrix} \begin{bmatrix}
        z(t) - z'(t)\\
        v_z(t) - v_{z'}(t)
    \end{bmatrix} \leq 0.
\end{split}
\end{align}
It is easy to verify that the first term on the left side of (\ref{eq:node_lmi2}) is equal to $\dv{t}V\left(\Delta z(t)\right)-\rho V\left(\Delta z(t)\right)$. In addition, by Lemma \ref{lem:GroupSort} (Lemma \ref{lem:Householder}), the second term on the left side of (\ref{eq:node_lmi2}) is guaranteed to be non-negative for any $t$. Therefore, the following inequality holds for all $t\geq 0$:
\begin{align*}
    \dv{t}V\left(\Delta z(t)\right)\leq \rho V\left(\Delta z(t)\right).
\end{align*}
 Finally, integrating $V$ up to time $t=1$ yields
\begin{align*}
    \|\Phi^1_0(x) - \Phi^1_0(x')\|_2 = \|z(1) - z'(1)\|_2 \leq \exp(\rho/2) \|x - x'\|_2,
\end{align*}
which completes our proof.
\end{proof}




\end{document}